\title{\textbf{Adversarial Tracking Control via Strongly Adaptive Online Learning with Memory}}
\author{
  Zhiyu Zhang \\
  Boston University\\
  \texttt{zhiyuz@bu.edu}\\
  \and
  Ashok Cutkosky \\
  Boston University\\
  \texttt{ashok@cutkosky.com}\\
  \and
  Ioannis Ch. Paschalidis\\
  Boston University\\
  \texttt{yannisp@bu.edu}\\
}
\date{\vspace{-5ex}}
\begin{document}
\maketitle

\begin{abstract}
We consider the problem of tracking an adversarial state sequence in a linear dynamical system subject to adversarial disturbances and loss functions, generalizing earlier settings in the literature. To this end, we develop three techniques, each of independent interest. First, we propose a comparator-adaptive algorithm for \emph{online linear optimization with movement cost}. Without tuning, it nearly matches the performance of the optimally tuned gradient descent in hindsight. Next, considering a related problem called \emph{online learning with memory}, we construct a novel strongly adaptive algorithm that uses our first contribution as a building block. Finally, we present the first reduction from adversarial tracking control to strongly adaptive online learning with memory. Summarizing these individual techniques, we obtain an adversarial tracking controller with a strong performance guarantee even when the reference trajectory has a large range of movement.
\end{abstract}

\section{Introduction}

Regulation and tracking are two iconic branches of linear control problems based on the system equation
\begin{equation*}
x_{t+1}=A_tx_t+B_tu_t+w_t.
\end{equation*}
By designing the action $u_t$, a \emph{regulation} controller rejects the disturbance $w_t$ such that the state $x_t$ remains close to the origin. In comparison, a \emph{tracking} controller aims at steering the state $x_t$ to follow a reference trajectory $x^*_t$. Recently, there have been growing efforts applying online learning ideas to linear control, including the online \emph{Linear Quadratic Regulator} (LQR) \citep{abbasi2011regret,cohen2018online,dean2019sample}, its adversarial generalizations \citep{agarwal2019online,agarwal2019logarithmic} and model-predictive control \citep{li2019online,yu2020power}. However, most of these advances are based on the regulation problem, and their application to tracking requires that the controller already knows the reference trajectory.

In this paper, we address this gap by first solving a general online learning problem: strongly adaptive online learning with movement cost. Ordinary (non-adaptive) algorithms aim to produce actions whose performance is strong on average over the entire operation of the algorithm. In contrast, a strongly-adaptive algorithm's performance must be strong \emph{over any time interval} of operation. This additional requirement significantly complicates the algorithm design. In fact, standard approaches to achieving strong adaptivity fail to account for movement costs, and cannot be easily modified to incorporate this extra performance metric. 

After presenting our results in online learning, we come back to linear control and consider a general \emph{adversarial tracking} problem with the following challenges. 
\begin{enumerate}
\item The system dynamics $(A_t,B_t)$ are time-varying.
\item The reference trajectory is fully adversarial. That is, $x^*_t$ can freely adapt to past actions of the controller, and we do not impose any assumption on its movement speed $\norms{x^*_t-x^*_{t-1}}$. 
\item The loss function $l^*_t$ that quantifies the tracking performance is adversarial, and we do not require its minimizer to be unique. This generalizes the quadratic loss from existing works on adversarial tracking, and allows the modeling of \emph{target regions}. 
\item The disturbance $w_t$ is adversarial, possibly combining noise, modeling error and (minor) nonlinearity. 
\end{enumerate}

Such a setting is useful for many practical problems, especially when the target to be tracked is hard to model and predict. However, due to the confluence of these challenges, existing controllers either cannot be applied, or cannot produce a regret bound that competes with a strong enough baseline. Taking a conceptual leap, we will provide a solution by exploiting a novel connection between adversarial tracking control and strongly adaptive online learning. 

\subsection{Our contribution} \label{sec:cont}

In this paper, we develop three techniques, each using the previous one as its building block. 
\begin{enumerate}
\item We propose the first comparator-adaptive algorithm for \emph{Online Linear Optimization (OLO) with movement cost}. This is nontrivial as the per-step movement of existing comparator-adaptive OLO algorithms can be exponentially large in $T$. (Section~\ref{section:contribution1})
\item We propose a novel strongly adaptive algorithm for \emph{Online Convex Optimization with Memory (OCOM)}, and the obtained bound further adapts to the observed gradients. (Section~\ref{section:saocom})
\item We propose the first reduction from adversarial tracking control to strongly adaptive OCOM. Our approach establishes a connection between two separate notions of tracking from online learning and linear control, which could facilitate the application of online learning ideas in a wider range of control problems. (Section~\ref{section:tracking})
\end{enumerate}

Combining these individual techniques, we design a strongly adaptive adversarial tracking controller: on any time interval $\I$ contained in the time horizon $[1:T]$, the proposed controller suffers $\tilde O(\sqrt{\abs{\I}})$ regret against the best $\I$-dependent static controller, where $\abs{\I}$ is the length of this time interval. More intuitively, on any time interval $\I$, the proposed controller always pursues \emph{the best fixed action for $\I$}. Such a performance guarantee significantly improves existing results, especially when the reference trajectory has a large range of movement. Finally, our theoretical results are supported by experiments.

\subsection{Background and notation}\label{subsection:related}

\paragraph{Linear tracking control} Tracking control is a decades old problem in linear control theory. Despite the empirical success of heuristic approaches (e.g., the PID controller), classical theoretical analysis typically requires strong assumptions on the reference trajectory: either (i) the reference trajectory is generated by a known linear system \citep{Astolfi2015}; or (ii) predictions are available \citep{Limon2015}. 

For us, the most relevant works are the learning-based approaches with regret guarantees of the form
\begin{equation*}
L(\textrm{alg})-\min_{C\in\C}L(C)\leq \textrm{Regret~bound}.
\end{equation*}
$\C$ is a set of baseline controllers called \emph{comparator class}. $L(\textrm{alg})$ and $L(C)$ are the cumulative loss of the proposed algorithm and a comparator $C\in\C$, respectively. A strong guarantee requires not only a small regret bound, but also a comparator class that contains a good tracking baseline. From this perspective, we discuss the limitation of existing works as follows. 
\begin{enumerate}
\item Abbasi-Yadkori et al. \cite{abbasi2014tracking}; Foster and Simchowitz \cite{foster2020logarithmic} proposed algorithms for tracking fully adversarial targets, and a nonconstructive minimax guarantee was proposed by Bhatia and Sridharan \cite{bhatia2020online}. However, regret bounds are only established on the entire time horizon $[1:T]$, and the comparator controllers are static and affine in the state ($u_t=-Kx_t+c$) which only perform well if the reference trajectory is roughly constant (on $[1:T]$). 
\item Another line of research \citep{agarwal2019online,agarwal2019logarithmic,simchowitz2020making,simchowitz2020improper,minasyan2021online} considered nonstochastic control, a general control setting with adversarial disturbances and loss functions. The comparator class is a collection of stabilizing linear controllers, therefore the implicitly assumed goal is \emph{disturbance rejection} (i.e., regulation) rather than tracking. We will provide a detailed discussion in Section~\ref{subsection:difference}. 
\end{enumerate}
In summary, designing an adversarial tracking controller with a strong theoretical guarantee remains an open problem. Next, we review classical settings of online learning and a special \emph{tracking} concept therein.

\paragraph{Basic online learning models} There are two standard online learning models \citep{zinkevich2003online} relevant to our purpose: \emph{Online Convex Optimization} (OCO) and \emph{Online Linear Optimization} (OLO). OCO is a two-person game: in each round, a player makes a prediction $x_t$ in a convex set $\V$, observes a convex loss function $l_t$ selected by an adversary and suffers the loss $l_t(x_t)$. If $l_t$ is linear, then the problem is also called OLO. The standard performance metric is the static regret: $\reg_{[1:T]}=\sum_{t=1}^Tl_t(x_t)-\min_{u\in \V} l_t(u)$. In general, OCO can be converted into OLO through the inequality $\reg_{[1:T]}\le \max_{u\in \V}\sum_{t=1}^T \langle g_t, x_t -u\rangle$ where $g_t\in \partial l_t(x_t)$, so it suffices to only consider OLO. 

\paragraph{Adaptive online learning} In this paper, we call adaptivity the property of an OLO algorithm such that on any time interval $\I\subset[1:T]$, it guarantees \emph{small} regret bound $\reg_\I$ against the best $\I$-dependent static comparator. Early works \citep{hazan2009efficient,adamskiy2016closer} studied \emph{weakly adaptive algorithms} where $\reg_\I=\tilde O(\sqrt{T})$. Improving on those, recent advances \citep{daniely2015strongly,jun2017improved,zhang2019adaptive,zhang2019dual} focused on a more powerful concept called \emph{strong adaptivity}: an algorithm is strongly adaptive if for all $\I\subset[1:T]$, $\reg_\I=O(\textrm{poly}(\log T)\cdot \sqrt{|\I|})$. This is much stronger than weak adaptivity, especially on short time intervals. 

To associate adaptivity with adversarial tracking control, let us consider the \emph{tracking regret} \citep{herbster1998tracking,bousquet2002tracking} in online learning as an intermediate step, where an OLO algorithm is compared to all sequences with bounded amount of switching. This generalizes the static regret, and interestingly, Daniely et al. \cite{daniely2015strongly} showed that near-optimal tracking regret can be derived from strong adaptivity. The key idea is that strongly adaptive OLO algorithms can quickly respond to the incoming losses, resulting in a near-optimal regret on the entire time horizon compared to \emph{nonstationary comparators}. This bears an intriguing similarity to tracking \emph{nonstationary targets} in linear control, which we exploit later. 

As for the design of adaptive OLO algorithms, the predominant approach is a two-level composition pioneered by Hazan and Seshadhri \cite{hazan2009efficient}. Notably, Cutkosky \cite{cutkosky2020parameter} proposed an alternative framework based on \emph{comparator-adaptive} online learning \citep{mcmahan2014unconstrained,orabona2016coin,foster2018online,van2019user,mhammedi2020lipschitz}. Our construction will incorporate movement cost into the latter, which is a highly nontrivial task.

\paragraph{Strongly adaptive OCOM} The performance of control suffers from past mistakes, therefore when we reduce it to online learning the resulting setting should also model this behavior, leading to a popular problem called \emph{Online Convex Optimization with Memory} (OCOM) \citep{anava2015online}. A weakly adaptive OCOM algorithm was proposed in \citep{gradu2020adaptive}, but achieving strong adaptivity is a much more challenging task due to two contradictory requirements: (i) strong adaptivity requires the predictions to move (i.e., respond to incoming information) very quickly; but (ii) movement cost requires the predictions to move slowly. 

Recently, Daniely and Mansour \cite{daniely2019competitive} proposed a strongly adaptive algorithm for OCOM with one-step memory, and its key component is an asymmetrical expert algorithm from Kapralov and Panigrahy \cite{kapralov2010prediction}. In comparison, our approach (Contribution 2) is based on a fundamentally different mechanism and analysis. Our obtained bound adapts to the observed gradients, and more importantly provides an alternative line of intuition to the regret-movement trade-off in strongly adaptive online learning.

For conciseness, further discussion on existing works is deferred to Appendix~\ref{section:additionalreview}, including a series of related but incomparable works on \emph{linear control with prediction}. 

\paragraph{Notation}We use $\norms{\cdot}$ for the Euclidean norm of vectors and the spectral norm of matrices. These are the default norms throughout this paper. Let $0$ be a zero vector or matrix. Let $\Pi_{\V}(x)$ be the Euclidean projection of $x$ to a set $\V$. $\ball^d(x,r)$ denotes the Euclidean norm ball centered at $x\in\R^d$ with radius $r$. 

For two integers $a\leq b$, $[a:b]$ is the set of all integers $c$ such that $a\leq c\leq b$; the brackets are removed when on the subscript, denoting a finite sequence with indices in $[a:b]$. Let $\abs{\cdot}$ be the cardinality of a finite set. Given square matrices $M_{a:b}$, define their product as $\prod_{i=a}^bM_i=M_b\cdots M_a$. (When $b<a$, the product is the identity matrix.) Finally, $\log$ denotes natural logarithm when the base is omitted. 

\section{Comparator-adaptive OLO with movement cost}\label{section:contribution1}

Starting with our first contribution, we introduce a comparator-adaptive algorithm for a variant of OLO called \emph{OLO with movement cost}. The difference from standard OLO is that in each round, besides suffering the instantaneous loss $l_t(x_t)$, the player also suffers a movement cost $\lambda\abs{x_t-x_{t-1}}$ where $\lambda$ is a known constant. Movement penalties have been studied in online learning in various forms \citep{kalai2005efficient,cesa2013online,gofer2014higher,bhaskara2021power,sherman2021lazy}, sometimes under the name \emph{switching cost} originated from the bandit problems. Since this paper focuses on the continuous domain, we name it as movement cost to avoid confusion. Notably, our setting is different from another classical problem called \emph{Smoothed OCO} \citep{chen2018smoothed,goel2019beyond}, where the loss function is observed \emph{before} making the prediction. 

Our algorithm is first developed on a one-dimensional domain $[0,\bar R]$, and then extended to higher dimensions.

\subsection{The one-dimensional algorithm}\label{subsection:1d}

We present the one-dimensional version in Algorithm~\ref{algorithm:1d}. It critically relies on a duality between OLO and the \emph{coin-betting game} \citep{orabona2016coin}, which we summarize in Appendix~\ref{subsection:coinbettingOLO}. Four hyperparameters are required: $\lambda$ is the weight of movement costs, $\gamma$ is a regularization weight, $G$ is the Lipschitz constant (assumed known) of the OLO losses, and $\eps$ is the ``budget'' for the cumulative cost and movement. 

\begin{algorithm*}[ht]
\caption{One-dimensional comparator-adaptive OLO with movement cost.\label{algorithm:1d}}
\begin{algorithmic}[1]
\REQUIRE Hyperparameters $(\lambda,\gamma,\eps,G)$, with $\lambda,\gamma\geq 0$ and $\eps,G>0$; a 1-dimensional domain $\V_{1d}=[0,\bar R]$; loss gradients $g_1, g_2,\ldots\in\R$ with $\abs{g_t}\leq G$, $\forall t$.  
\STATE \label{line:Cdef}Initialize internal variables as $\wel_0=\eps$, and $\beta_1,x_1,\tilde x_1=0$. Define $C=G+\lambda+\gamma$.
\FOR{$t=1,2,\ldots$}
\STATE \label{line:1d_surrogate}Make a prediction $x_t$, observe a loss gradient $g_t$. Define the surrogate loss $\tilde g_t$ as
\begin{equation*}
\tilde g_t=\begin{cases}
g_t, &\textrm{if~}g_t\tilde x_t\geq g_tx_t,\\
0, &\textrm{otherwise}.
\end{cases}
\end{equation*}

\STATE Let $\hat\beta_{t+1}=-\sum_{i=1}^t\tilde g_i/(2C^2t)$. Define $\B_{t+1}=[0,1/(C\sqrt{2t})]$ and let $\beta_{t+1}=\Pi_{\B_{t+1}}(\hat\beta_{t+1})$.

\STATE \label{line:wel}Assign $\wel_t$ as the solution to the following equation (uniqueness shown in Lemma~\ref{lemma:unique_solution}),
\begin{equation}
\wel_t=(1-\tilde g_t\beta_t-\gamma\beta_t/\sqrt{t})\wel_{t-1}-\lambda|\beta_t\wel_{t-1}-\beta_{t+1}\wel_t|.\label{eq:wel_update}
\end{equation}

\STATE \label{line:1d_projection}Let $\tilde x_{t+1}=\beta_{t+1}\wel_{t}$ and $x_{t+1}=\Pi_{\V_{1d}}(\tilde x_{t+1})$.
\ENDFOR
\end{algorithmic}
\end{algorithm*}

To get the gist of this algorithm, let us briefly ignore the surrogate loss $\tilde g_t$ from Line~\ref{line:1d_surrogate} and the projection of $\tilde x_{t+1}$ from Line~\ref{line:1d_projection} (i.e., assume $\bar R= \infty$). With $g_t=\tilde g_t$ and $x_{t+1}=\tilde x_{t+1}$, Algorithm~\ref{algorithm:1d} becomes an unconstrained OLO algorithm with predictions recommended by the following betting scheme: A bettor has money $\wel_t$ in the $t$-th round. After choosing a betting fraction $\beta_{t+1}$, he bets money $x_{t+1}=\beta_{t+1}\wel_t$ on the next loss gradient $g_{t+1}$. The favorable outcome is $g_{t+1}x_{t+1}$ being negative which means the OLO algorithm suffers \emph{negative loss}. Therefore, after observing $g_{t+1}$, the bettor treats $-g_{t+1}x_{t+1}$ as the money he gains and updates his wealth accordingly. Since large movement is also undesirable, the bettor further loses money proportional to the change of his betting amount; this is an important and novel step in our approach. Using this procedure, regret minimization is converted to wealth maximization. By choosing the betting fraction $\beta_t$ properly, one can simultaneously ensure low cost and low movement in OLO. 

\begin{restatable}{theorem}{oneD}\label{thm:1d}
For all $\lambda,\gamma\geq 0$, $G>0$ and $0<\eps\leq G\bar R$, with any loss sequence such that $|g_t|\leq G$ for all $t$, applying Algorithm~\ref{algorithm:1d} yields the following guarantee.
\begin{enumerate}
\item For all $T\in\N_+$ and $u\in\V_{1d}$, with $C$ defined in Line~\ref{line:Cdef} of the algorithm,
\begin{equation*}
\sum_{t=1}^T\left(g_tx_t-g_tu+\lambda\abs{x_t- x_{t+1}}+\frac{\gamma}{\sqrt{t}}\abs{x_t}\right)\leq \eps+uC\sqrt{2T}\rpar{\frac{3}{2}+\log\frac{\sqrt{2}uCT^{5/2}}{\eps}}.
\end{equation*}
\item For all $a\leq b$, $\sum_{t=a}^b\abs{x_t-x_{t+1}}\leq 48\bar R\sqrt{b-a+1}$.
\end{enumerate}
\end{restatable}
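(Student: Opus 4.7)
The plan is to derive Part~1 from the coin-betting viewpoint sketched below Algorithm~\ref{algorithm:1d}---treating $\wel_t$ as a potential that encodes loss, regularization and movement simultaneously---and to handle Part~2 by a separate stability argument on the projected iterates.

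I would first reduce to the unconstrained problem. A case analysis on the surrogate definition in Line~\ref{line:1d_surrogate} shows $g_t(x_t-u)\leq \tilde g_t(\tilde x_t-u)$ for every $u\in[0,\bar R]$: when $g_t\tilde x_t\geq g_tx_t$ we have $\tilde g_t=g_t$ and $g_t(\tilde x_t-x_t)\geq 0$; otherwise $\tilde g_t=0$ and $\tilde x_t$ has exited $\V_{1d}$ on the side that forces $g_t$ and $x_t-u$ to have opposite signs, giving $g_t(x_t-u)\leq 0$. An inductive invariant $\wel_t>0$ and $\tilde x_t=\beta_t\wel_{t-1}\geq 0$ (relying on the well-posedness of Lemma~\ref{lemma:unique_solution}), together with non-expansiveness of $\Pi_{\V_{1d}}$ and $|x_t|\leq \tilde x_t$, then upper bounds the left-hand side of Part~1 by
$$\sum_{t=1}^T\left(\tilde g_t\tilde x_t-\tilde g_tu+\lambda|\tilde x_t-\tilde x_{t+1}|+\frac{\gamma}{\sqrt t}\tilde x_t\right).$$

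Next, substituting $\tilde x_t=\beta_t\wel_{t-1}$ into \eqref{eq:wel_update} rewrites it as $\wel_{t-1}-\wel_t=\tilde g_t\tilde x_t+(\gamma/\sqrt t)\tilde x_t+\lambda|\tilde x_t-\tilde x_{t+1}|$, which telescopes to $\sum_t(\tilde g_t\tilde x_t+\gamma\tilde x_t/\sqrt t+\lambda|\tilde x_t-\tilde x_{t+1}|)=\eps-\wel_T$. Subtracting $u\sum_t\tilde g_t$ from both sides then reduces Part~1 to a wealth lower bound of the form $\wel_T\geq (\eps/\mathrm{poly}(T))\exp((S_T^+)^2/(8C^2T))$, where $S_T:=-\sum_t\tilde g_t$. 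I would derive this by taking logarithms in \eqref{eq:wel_update} and applying $\log(1-x)\geq -x-x^2$---valid because $C=G+\lambda+\gamma$ and $\beta_t\leq 1/(C\sqrt{2(t-1)})$ keep the argument bounded by $1/\sqrt2$---and observing that the projected KT choice $\beta_{t+1}=\Pi_{[0,1/(C\sqrt{2t})]}(S_t/(2C^2t))$ is precisely the maximizer of the linear part of the per-round log-wealth gain. Standard telescoping yields $(S_T^+)^2/(8C^2T)-O(\log T)$, and Fenchel conjugation $\sup_{s\geq 0}[\eps+us-F(s)]$ inserts the logarithmic factor $\log(\sqrt{2}\,uCT^{5/2}/\eps)$ and the multiplicative $uC\sqrt{2T}$ of the claimed bound.

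For Part~2 I would split $|x_t-x_{t+1}|\leq|\tilde x_t-\tilde x_{t+1}|\leq \beta_t|\wel_{t-1}-\wel_t|+\wel_t|\beta_t-\beta_{t+1}|$. A direct computation with the KT formula gives $|\beta_{t+1}-\beta_t|=O(1/t)$, and the telescoping identity above bounds $|\wel_{t-1}-\wel_t|$ by $(G+\gamma)\tilde x_t+\lambda|\tilde x_t-\tilde x_{t+1}|$. The main obstacle is the implicit $\lambda$-feedback combined with the need to uniformly bound $\wel_t$: whenever $\tilde x_t$ leaves $[0,\bar R]$ the bet is effectively clipped by $\bar R$ rather than by $\wel_{t-1}$, which should yield a crude uniform bound $\wel_t=O(\bar R\sqrt t)$. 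Feeding this into $\sum_{t=a}^b[\beta_t|\wel_{t-1}-\wel_t|+\wel_t|\beta_t-\beta_{t+1}|]$ and applying Cauchy--Schwarz to absorb the $1/\sqrt{t-1}$ factors would deliver $O(\bar R\sqrt{b-a+1})$, with the constant $48$ collected from tracking these crude steps.
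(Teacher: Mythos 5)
Your reduction of Part~1 to a wealth lower bound, the use of the surrogate-loss inequality, the telescoping identity $\eps-\wel_T=\sum_t(\tilde g_t\tilde x_t+\gamma\tilde x_t/\sqrt t+\lambda|\tilde x_t-\tilde x_{t+1}|)$, and the $\log(1-x)\ge -x-x^2$ plus FTL-style argument on $\beta_t$ all match the paper's proof of Part~1 in substance. The paper uses the linear estimate $\min_{0\le\beta\le 1/(C\sqrt{2T})}\sum_t(\tilde g_t\beta+C^2\beta^2)\le \sum_t\tilde g_t/(C\sqrt{2T})+1/2$ (so the wealth lower bound is an exponential of a \emph{linear} function of $S_T$, not the quadratic $(S_T^+)^2/(8C^2T)$ you sketch), but both are routine and Fenchel-conjugate cleanly; that is a cosmetic difference.

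The real gap is in Part~2. You correctly reduce the movement bound to two ingredients: $|\beta_{t+1}-\beta_t|=O(1/t)$ (the paper's Lemma~\ref{lemma:b_fraction}) and a uniform wealth bound $\wel_t=O(\bar R C\sqrt t)$ (the paper's Lemma~\ref{lemma:wealthupper}), and you flag the wealth bound as ``the main obstacle.'' But your proposed resolution---``whenever $\tilde x_t$ leaves $[0,\bar R]$ the bet is effectively clipped by $\bar R$ rather than by $\wel_{t-1}$, which should yield a crude uniform bound''---is not a proof, and it glosses over exactly what is hard. The betting amount $\tilde x_t=\beta_t\wel_{t-1}$ is \emph{not} clipped; only the prediction $x_t=\Pi_{\V_{1d}}(\tilde x_t)$ is. What controls the wealth is the surrogate: when $\tilde x_t>\bar R$ and $g_t<0$ (the profitable direction), the surrogate sets $\tilde g_t=0$, which removes the profit but leaves the wealth update recursion (and the $\lambda$-feedback in Equation~(\ref{eq:wel_update})) otherwise intact. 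Turning this into $\wel_t\le O(\bar R C\sqrt t)$ requires first establishing a non-obvious a priori bound $\tilde x_t\le 2\sqrt 2\,\bar R$ by an induction over three regimes for $\beta_t$ (boundary of the clipping interval, interior KT value, and zero) and two regimes for whether the surrogate fired, and only then invoking the FTL/KT wealth estimate to get the $\sqrt t$ rate. That two-step structure (bounded unconstrained iterate $\Rightarrow$ bounded wealth) is the content of Lemma~\ref{lemma:wealthupper} and is where the movement analysis actually lives; as written, your sketch asserts the conclusion rather than deriving it, so Part~2 remains unproven.

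One smaller note: the Fenchel step should read $\sup_{s}\,[us-F(s)]$ with the $\eps$ added afterward from $\wel_0=\eps$; writing $\sup_{s\ge 0}[\eps+us-F(s)]$ muddles the conjugacy, though the final bound would come out the same.
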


The highlights of Theorem~\ref{thm:1d} are the following. 

\begin{enumerate}
\item Part 1 provides the first comparator-adaptive bound for OLO with movement cost: the sum of movement cost and regret with respect to the null comparator $u=0$ is at most a user-specified constant, and the sum grows almost linearly in $|u|$ which is the optimal rate \citep[Chapter 5]{orabona2019modern}. This leads to an important \emph{parameter-free} property: \emph{without knowing the optimal comparator $u^*$ in advance}, Algorithm~\ref{algorithm:1d} automatically adapts to it, and the performance bound almost matches the optimally-tuned \emph{Online Gradient Descent} (OGD) whose learning rate depends on $u^*$. Note that the latter is a hypothetical (unimplementable) baseline, since the optimal comparator $u^*$ in hindsight is unknown before all the losses are revealed. Nonetheless, our algorithm is still able to (nearly) match it using a perfectly implementable procedure. 

Furthermore, Part 1 does not need a bounded domain; the same bound holds even with $\bar R=\infty$, making Algorithm~\ref{algorithm:1d} an appealing approach for general unconstrained settings as well. 

\item As for Part 2, we bound the movement cost alone over \emph{any} time interval, which is also technically nontrivial. Our surrogate loss $\tilde g_t$ (Line~\ref{line:1d_surrogate}) is due to an existing black-box reduction from unconstrained OLO to constrained OLO (see Appendix~\ref{subsection:constraint}). However, the proof of Part 2 requires a \emph{non-black-box} use of this procedure: we investigate how using the surrogate loss $\tilde g_t$ instead of the true loss $g_t$ changes the growth rate of $\wel_t$, an \emph{internal} quantity of the unconstrained OLO algorithm. To the best of our knowledge, this is the first analysis that takes this perspective. The revealed insights could be of separate interest. 
\end{enumerate}

\subsection{Extension to higher dimensions}

After the one-dimensional analysis, we present Algorithm~\ref{algorithm:higherd}, which extends Algorithm~\ref{algorithm:1d} to a higher dimensional ball $\ball^d(0,\bar R)$ via a polar decomposition. Intuitively, Algorithm~\ref{algorithm:higherd} learns the direction and magnitude separately: the former via standard OGD on the unit norm ball, and the latter via Algorithm~\ref{algorithm:1d}. Such an idea was first proposed by Cutkosky and Orabona \cite{cutkosky2018black}; here we further incorporate movement cost into its analysis. The performance guarantee has a similar flavor as Theorem~\ref{thm:1d}; for conciseness, we defer it to Appendix~\ref{subsection:alg2_analysis}.

\begin{algorithm*}[ht]
\caption{Extension of Algorithm~\ref{algorithm:1d} to $\ball^d(0,\bar R)$.\label{algorithm:higherd}}
\begin{algorithmic}[1]
\REQUIRE Hyperparameters $(\lambda,\eps,G)$ with $\lambda\geq 0$ and $\eps,G>0$; $g_1, g_2,\ldots\in\R^{d}$ with $\norms{g_t}\leq G$, $\forall t$.
\STATE Define $\A_{r}$ as Algorithm~\ref{algorithm:1d} on the domain $[0,\bar R]$, with hyperparameters $(\lambda,\lambda,\eps,G)$.
\STATE Define $\A_B$ as \emph{Online Gradient Descent} (OGD) on $\ball^{d}(0,1)$ with learning rate $\eta_t=1/(G\sqrt{t})$, initialized at the origin $0$.
\FOR{$t=1,2,\ldots$}
\STATE Obtain $y_t\in\R$ from $\A_{r}$ and $z_t\in\R^d$ from $\A_B$. Predict $x_t=y_tz_t\in\R^d$, observe $g_t\in\R^d$.
\STATE Return $\langle g_t,z_t\rangle$ and $g_t$ as the $t$-th loss gradient to $\A_{r}$ and $\A_B$, respectively.
\ENDFOR
\end{algorithmic}
\end{algorithm*}

\section{Strongly adaptive OCOM}\label{section:saocom}

Next, we introduce our second contribution - a novel strongly adaptive algorithm for \emph{Online Convex Optimization with Memory} (OCOM) \citep{anava2015online}. After introducing the problem setting, we present our approach step-by-step which builds on Algorithms~\ref{algorithm:1d} and \ref{algorithm:higherd}. 

\subsection{Problem setting of OCOM}\label{subsection:ocomsetting}

Consider a convex and compact domain $\V\subset\ball^d(0,R)$ with $R>0$. Without loss of generality, assume $\V$ contains the origin $0$.\footnote{By shifting the coordinate system, this can be achieved for any nonempty set $\V$. } In each round, a player makes a prediction $x_t\in\V$, observes a loss function $l_t:\V^{H+1}\rightarrow\R$ and suffers the loss $l_t(x_{t-H},\ldots,x_{t})$ that depends on the $H$-round prediction history. For all $t\leq 0$, $x_t=0$. 

We define an instantaneous loss function as $\tilde l_t(x)= l_t(x,\ldots,x)$. Two assumptions are imposed: (i) $l_t$ is $L$-Lipschitz with respect to each argument separately; (ii) $\tilde l_t(x)$ is convex and $\tilde G$-Lipschitz, with $0<\tilde G\leq L(H+1)$. 

For this OCOM problem, our goal is a strongly adaptive regret bound on the policy regret: for \emph{all} time intervals $\I=[a:b]\subset[1:T]$, 
\begin{equation}\label{eq:regret}
\sum_{t=a}^bl_t(x_{t-H:t})-\min_{x\in \V}\sum_{t=a}^b\tilde l_t(x)=O\rpar{\textrm{poly}(\log T)\cdot\sqrt{\abs{\I}}},
\end{equation}
where $O(\cdot)$ subsumes polynomial factors on the problem constants. In other words, on any time interval $\I\subset[1:T]$, the regret compared to the best \emph{$\I$-dependent} fixed prediction should be $\tilde O(\sqrt{\abs{\I}})$. 

\subsection{Preliminary: GC intervals}

First of all, we review an important concept. Similar to achieving strong adaptivity without memory \citep{daniely2015strongly,cutkosky2020parameter}, our OCOM algorithm has a hierarchical structure. It maintains a subroutine on each \emph{Geometric-Covering} (GC) interval, and the overall prediction combines the outputs from all the active subroutines. Such a structure benefits from a nice property \citep{daniely2015strongly}: an online learning algorithm is strongly adaptive if it has the desirable strongly adaptive guarantee \emph{on all the GC intervals}. Consequently for our objective (\ref{eq:regret}), we can only focus on achieving this bound on GC intervals instead of general intervals $\I\subset[1:T]$. 

\begin{figure}[ht]
\centering
\includegraphics[width=0.6\textwidth]{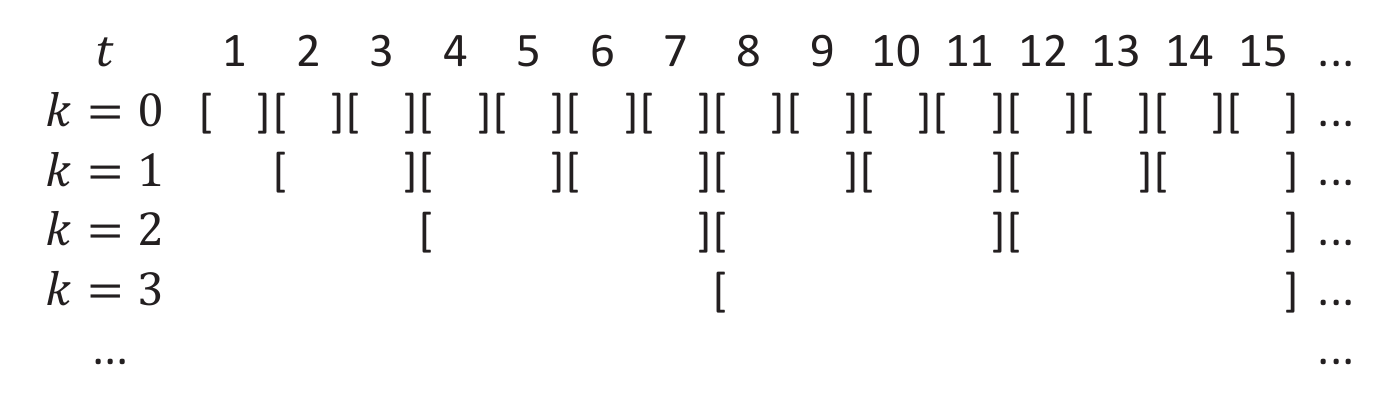}
\caption{Geometric-Covering intervals.\label{figure:GC}}
\end{figure}

The class of GC intervals is visualized in Figure~\ref{figure:GC}. Concretely, for all $k\in\N$ and $i\in\N_+$, a GC interval is defined as $\I^{k,i}=[2^ki:2^k(i+1)-1]$. If it contains $t$, then we say it is active in the $t$-th round. 

\subsection{Subroutine on GC intervals}

The next step is to construct the subroutine on each GC interval. It consists of two parts:

\begin{enumerate}
\item \emph{Subroutine-1d}, an OLO algorithm operating on the one-dimensional domain $[0,1]$. 
\item \emph{Subroutine-ball}, an OLO algorithm operating on the ball $\ball^d(0,R)$ that contains $\V$. 
\end{enumerate}

Intuitively, each Subroutine-1d produces the ``confidence'' on its corresponding Subroutine-ball. Then, the Subroutine-ball with higher confidence contributes a larger portion in the prediction of the meta-algorithm. Algorithms~\ref{algorithm:1d} and \ref{algorithm:higherd} constitute the basis of these two parts respectively, but we need one extra step (Algorithm~\ref{algorithm:adagradient}): Subroutine-1d is the version of Algorithm~\ref{algorithm:adagradient} with Line~1(a) and $g_t\in\R$, while Subroutine-ball is the version with Line~1(b) and $g_t\in\R^d$. Note that the time index $t$ in the pseudo-code represents the local clock counting from the start of the considered GC interval. That is, if we consider $\I^{k,i}$ starting from the $2^ki$-th round, then the index $t$ in Algorithm~\ref{algorithm:adagradient} represents the $(2^ki-1+t)$-th round globally. 

\begin{algorithm*}[ht]
\caption{Subroutine on GC intervals.\label{algorithm:adagradient}}
\begin{algorithmic}[1]
\REQUIRE Hyperparameters $(\lambda,\eps,G)$ with $\lambda\geq 0$ and $\eps,G>0$; gradients $g_1, g_2,\ldots$, with $\norms{g_t}\leq G$, $\forall t$.
\STATE (a) Subroutine-1d: Define $\A$ as Algorithm~1 with hyperparameters $(\lambda,0,\eps,\max\{\lambda,G\}+G)$, on the domain $[0,1]\subset\R$. 

(b) Subroutine-ball: Define $\A$ as Algorithm~2 with hyperparameters $(\lambda,\eps,\max\{\lambda,G\}+G)$, on the domain $\ball^d(0,R)$.
\STATE Initialize $i=1$ and an accumulator $Z_i=0$. Query the first output of $\A$ and assign it to $w_i$. 
\FOR{$t=1,2,\ldots$}
\STATE Predict $x_t\leftarrow w_i$, observe $g_t$, let $Z_i\leftarrow Z_i+g_t$. 
\IF{$\norms{Z_i}> \max\{\lambda,G\}$}\label{line:ada_threshold}
\STATE Send $Z_i$ to $\A$ as the $i$-th loss. Let $i\leftarrow i+1$. 
\STATE Set $Z_i=0$. Query the $i$-th output of $\A$ and assign it to $w_i$. 
\ENDIF
\ENDFOR
\end{algorithmic}
\end{algorithm*}

Algorithm~\ref{algorithm:adagradient} serves two purposes: (i) improving the dependence on hyperparameters $G$ and $\lambda$ (ultimately, problem constants of OCOM); and (ii) achieving adaptivity to the observed gradients, which leads to better practical performance. Its key mechanism is to \emph{adaptively ``slow down''} the base algorithm $\A$. To this end, an accumulator $Z_i$ tracks the sum of the received loss gradients. The base algorithm $\A$ is only queried when $Z_i$ exceeds a threshold $\max\{\lambda,G\}$. Using this procedure, we essentially replace the time horizon $T$ in the performance guarantee of $\A$ by an adaptive quantity $\sum_{t=1}^T\norms{g_t}/\max\{\lambda,G\}$. 

\subsection{Meta-algorithm}

Given the two-part subroutine, we now introduce our OCOM meta-algorithm. Compared to online learning without memory \citep{cutkosky2020parameter}, our technical improvement is the incorporation of movement cost which is a nontrivial task. The complete pseudo-code is deferred to Appendix~\ref{subsection:metafull}, and an abridged version (Algorithm~\ref{algorithm:metaabridged}) is provided here. Specifically, Algorithm~\ref{algorithm:metaabridged} simplifies a complicated projection scheme by allowing improper predictions ($x_t\notin \V$).

\begin{algorithm*}[ht]
\caption{The OCOM meta-algorithm. (Abridged from Algorithm~\ref{algorithm:meta} in Appendix~\ref{subsection:metafull})\label{algorithm:metaabridged}}
\begin{algorithmic}[1]
\REQUIRE $T\geq 1$; a hyperparameter $\eps_0>0$.
\FOR{$t=1,\ldots,T$}
\STATE Find the $(k,i)$ index pair for all the GC intervals that start in the $t$-th round. For each pair, initialize $\A^{k}_B$ as a copy of Subroutine-ball and $\A^{k}_{1d}$ as a copy of Subroutine-1d, with \emph{some} hyperparameters that depend on $k$, $\eps_0$ and problem constants. If $\A^k_B$ and $\A^k_{1d}$ already exist in the memory, overwrite them. 
\STATE Define $K_t=\lceil \log_2 (t+1)\rceil-1$; $x^{(K_t+1)}_t=0\in\R^d$. 

\FOR{$k=K_t,\ldots,0$}
\STATE Query a prediction from $\A^k_B$ and assign it to $w^{(k)}_t$; query a prediction from $\A^k_{1d}$ and assign it to $z^{(k)}_t$. 
\STATE \label{line:metacombination}Let $x^{(k)}_t=(1-z^{(k)}_t)x^{(k+1)}_t+w^{(k)}_t$.
\ENDFOR
\STATE Predict $x_t=x^{(0)}_t$, suffer $l_t(x_{t-H:t})$, receive $l_t$, obtain a subgradient $g_t\in\partial \tilde l_t(x_{t})$. 
\FOR{$k=0,\ldots,K_t$}
\STATE Return $g_t$ to $\A^k_B$ and $-\langle g_t,x^{(k+1)}_t\rangle$ to $\A^k_{1d}$ as the loss gradients respectively.  
\ENDFOR
\ENDFOR
\end{algorithmic}
\end{algorithm*}

In each round, Algorithm~\ref{algorithm:metaabridged} combines the subroutines by recursively running Line~\ref{line:metacombination}. Such a procedure is different from the well-known \emph{boosting} strategy \citep{freund1997decision,beygelzimer2015online} applied in \citep{daniely2019competitive}, as the updated temporary prediction $x^{(k)}_t$ is \emph{not a convex combination} of the old temporary prediction $x^{(k+1)}_t$ and the output $w^{(k)}_t$ from Subroutine-ball. By plugging the comparator-adaptive property of the subroutines into Line~\ref{line:metacombination}, Algorithm~\ref{algorithm:metaabridged} achieves an important property: for all $k$, $x^{(k)}_t$ matches the performance of $w^{(k)}_t$ on time intervals of length $2^k$ while achieving the performance of $x^{(k+1)}_t$ on longer time intervals.\footnote{Compared to \citep{daniely2019competitive}, this intuitively generalizes the ``easy-to-combine'' idea from expert problems to OLO.} As the result, the final prediction $x^{(0)}_t$ matches the performance of \emph{any} subroutine on its corresponding GC interval. 

To recap, we demonstrate the structure of our OCOM algorithm in Figure~\ref{figure:strategy}. Collecting all the pieces, we state the performance guarantee in Theorem~\ref{thm:ocom}.

\begin{figure}[ht]
    \centering
    \includegraphics[width=0.5\textwidth]{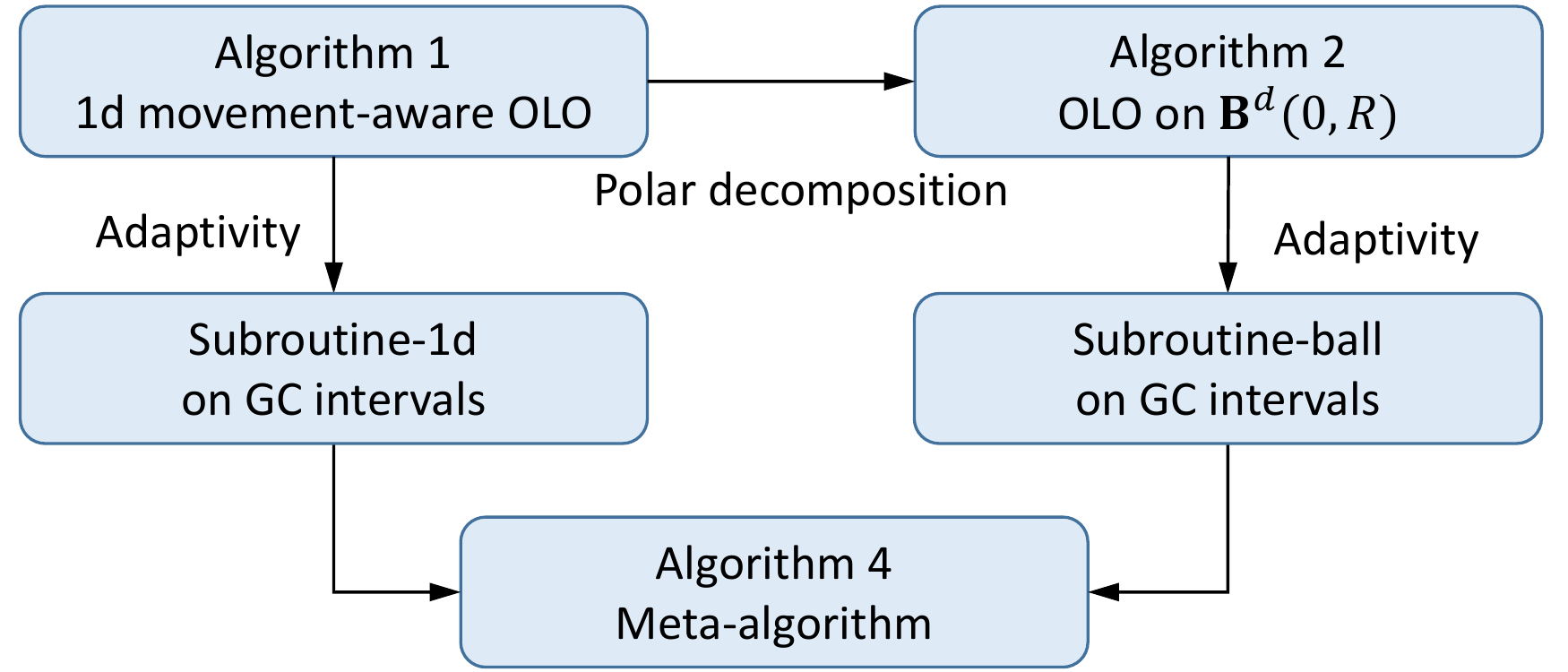}
    \caption{An overview of our OCOM strategy. \label{figure:strategy}}
\end{figure}

\begin{restatable}{theorem}{ocom}\label{thm:ocom}
Consider running our OCOM algorithm (the complete version, Algorithm~\ref{algorithm:meta}) for $T$ rounds. If $\eps_0=\tilde GR/(T+1)$, then on any time interval $\I=[a:b]\subset[1:T]$,
\begin{equation*}
\sum_{t=a}^bl_t(x_{t-H:t})-\min_{x\in \V}\sum_{t=a}^b\tilde l_t(x)=O(RLH^3\log\abs{\I})+\tilde O\rpar{RLH^2+RH\sqrt{L\sum_{t=a}^b\norm{g_t}}},
\end{equation*}
where $g_t\in\partial \tilde l_t(x_{t})$, $O(\cdot)$ subsumes absolute constants, and $\tilde O(\cdot)$ subsumes poly-logarithmic factors on problem constants and $T$. 
\end{restatable}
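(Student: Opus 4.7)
The plan is to combine the GC-interval reduction with the comparator-adaptive machinery of Theorem \ref{thm:1d} in a downward induction over the levels $k=0,\ldots,K_t$. A standard observation (used throughout strongly adaptive online learning without memory) shows that any $\I\subset[1:T]$ is a disjoint union of $O(\log|\I|)$ GC intervals, so it suffices to prove the claimed bound when $\I=\I^{k^*,i^*}$ and absorb the extra $\log$ factor into $\tilde O$. Fix such an $\I$ and a comparator $u\in\V$.

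Step 1 converts the policy loss to a linearized, memoryless regret plus a movement penalty. Coordinate-wise Lipschitzness of $l_t$ together with the telescoping $\|x_t-x_{t-j}\|\le\sum_{s=0}^{j-1}\|x_{t-s}-x_{t-s-1}\|$ yields
\[\sum_{t\in\I}l_t(x_{t-H:t})-\sum_{t\in\I}\tilde l_t(u)\le\sum_{t\in\I}\langle g_t,x_t-u\rangle+LH\sum_{t\in\I^+}\|x_t-x_{t-1}\|,\]
where $\I^+$ extends $\I$ by $H$ rounds leftward (with $x_s=0$ for $s\le 0$) and convexity with $g_t\in\partial\tilde l_t(x_t)$ handles the linearization. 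Part 2 of Theorem \ref{thm:1d}, lifted to $\R^d$ by Algorithm \ref{algorithm:higherd}, bounds the movement of each Subroutine-ball and Subroutine-1d by $\tilde O(R\sqrt{|\I|})$ on any sub-interval; unrolling $x^{(k)}_t=(1-z^{(k)}_t)x^{(k+1)}_t+w^{(k)}_t$ across the $K_t=O(\log T)$ levels and accounting for boundary jumps between adjacent GC sub-intervals bounds the movement term by $O(RLH^3\log|\I|)+\tilde O(RLH^2\sqrt{|\I|})$; the $\sqrt{|\I|}$ here is converted into the adaptive form by the ada-gradient wrapper below.

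Step 2 bounds the linearized term $\sum_{t\in\I}\langle g_t,x^{(0)}_t-u\rangle$. Unrolling the recursion gives
\[x^{(0)}_t-u=\bigl(x^{(k^*)}_t-u\bigr)+\sum_{k=0}^{k^*-1}\bigl(w^{(k)}_t-z^{(k)}_tx^{(k+1)}_t\bigr).\]
For the level-$k^*$ piece I combine two comparator-adaptive bounds: Subroutine-ball at level $k^*$ with comparator $u$ gives $\sum_{t\in\I}\langle g_t,w^{(k^*)}_t-u\rangle=\tilde O((1+\|u\|)\sqrt{|\I|})$, and Subroutine-1d at level $k^*$ with comparator $1$ on the scalar loss $-\langle g_t,x^{(k^*+1)}_t\rangle$ gives $\sum_{t\in\I}(1-z^{(k^*)}_t)\langle g_t,x^{(k^*+1)}_t\rangle=\tilde O(\sqrt{|\I|})$; substituting into $x^{(k^*)}_t=(1-z^{(k^*)}_t)x^{(k^*+1)}_t+w^{(k^*)}_t$ gives a $\tilde O((1+R)\sqrt{|\I|})$ bound on the level-$k^*$ contribution. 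For each level $k<k^*$, I decompose $\I$ into its $2^{k^*-k}$ level-$k$ GC sub-intervals and invoke the null-comparator ($u'=0$) form of Part 1 of Theorem \ref{thm:1d} for both subroutines; this charges at most $\eps_k$ per sub-interval per subroutine for cost plus movement, and with $\eps_k$ scaled geometrically in $k$ together with $\eps_0=\tilde GR/(T+1)$ the lower-level contributions telescope to $\tilde O(RLH^2)$. Finally, the ada-gradient wrapper of Algorithm \ref{algorithm:adagradient} advances the base algorithm only when the accumulated gradient magnitude exceeds $\max\{\lambda,G\}$, effectively replacing $|\I|$ by $\sum_{t\in\I}\|g_t\|/\max\{\lambda,G\}$ in every base-algorithm bound; combined with $G,\lambda=\Theta(LH)$ this converts the main $\sqrt{|\I|}$ factors into the adaptive $\tilde O(RH\sqrt{L\sum_{t\in\I}\|g_t\|})$ term.

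The main obstacle is the bookkeeping in Step 2: choosing the null comparator $0$ at each lower level must correctly capture the residual $w^{(k)}_t-z^{(k)}_t x^{(k+1)}_t$ even though $x^{(k+1)}_t$ is an internal signal whose magnitude ($\le R$) and movement must be propagated level-by-level; this, together with $\tilde G=O(LH)$, determines the precise dependence on $H$ in the final bound. A secondary obstacle is synchronizing the left-extension $\I^+$ in Step 1 with the GC sub-interval boundaries at each level so that transient jumps when a lower-level subroutine is reinitialized contribute only to the $O(RLH^3\log|\I|)$ additive term rather than to the main adaptive one. Once these points are verified, assembling Steps 1 and 2 produces the stated bound, modulo constants absorbed by $O$ and $\tilde O$.
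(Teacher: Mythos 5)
Your proposal follows the paper's architecture (reduce to GC intervals, decompose level-by-level, charge the short levels via null-comparator bounds with geometrically scaled budgets, convert $\sqrt{|\I|}$ to the gradient-adaptive form via the accumulator wrapper), but there are two concrete gaps that prevent it from being a proof of the stated theorem.

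First, your additive unrolling $x^{(0)}_t - u = \bigl(x^{(k^*)}_t - u\bigr) + \sum_{k=0}^{k^*-1}\bigl(w^{(k)}_t - z^{(k)}_t x^{(k+1)}_t\bigr)$ is exact only for the abridged Algorithm~\ref{algorithm:metaabridged}. The theorem is stated for the complete Algorithm~\ref{algorithm:meta}, which inserts a Euclidean projection $x^{(k+1)}_t=\Pi_{\ball^d(0,R)}(\tilde x^{(k+1)}_t)$ at every level, breaking that identity. The boundedness $\norm{x^{(k+1)}_t}\leq R$ that you silently rely on when feeding $-\langle g_t,x^{(k+1)}_t\rangle$ to Subroutine-1d with Lipschitz budget $\tilde G R$ is supplied by exactly that projection, so you cannot simultaneously invoke it and ignore it. The paper resolves this with the recursively defined surrogate gradients $g^{(k)}_t$ and Lemma~\ref{lemma:metaconstraints}, which give the one-sided chain $\langle g^{(k)}_t,x^{(k+1)}_t-x\rangle\leq \langle g^{(k+1)}_t,\tilde x^{(k+1)}_t-x\rangle$ for $x\in\V$; each subroutine at level $k$ then receives $g^{(k)}_t$ (not $g_t$), and the outer surrogate at Line~\ref{line:metaprojectionextra} initiates the chain. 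Without this, Step~2 bounds the wrong quantities.

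Second, the split between movement (your Step~1) and regret (your Step~2) cannot be made the way you propose. Bounding the movement of $x^{(0)}_t$ by unrolling and applying Part~2 of Theorem~\ref{thm:1d} per level fails for the short levels $k<k^*$: summing Part~2's $O(R\sqrt{2^k})$ bound over the $2^{k^*-k}$ level-$k$ sub-intervals inside $\I$ gives $O\bigl(R\,2^{k^*}/\sqrt{2^k}\bigr)$, which diverges as $k\to 0$. The only handle on short levels is the comparator-adaptive Part~1 with null comparator, which bounds the \emph{sum} of regret and movement by $2^k\eps_0$ per sub-interval; you correctly invoke this in Step~2, but then the movement is already inside Step~2 and should not be charged again in Step~1. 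The paper's structure makes this coherent by folding the $\tfrac{1}{2}LH(H+1)$ coefficient from the Lipschitz telescoping into the subroutine movement weight $\lambda=LH(H+1)$ (note your coefficient $LH$ should be $\sim LH^2$), so every per-level Part~1 application is a joint regret-plus-movement bound, while Part~2 is used only for the higher levels $k>k^*$ whose GC intervals strictly contain $\I$ and whose regret the comparator-adaptive machinery cannot address.
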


Notice that the obtained bound is not only strongly adaptive according to Equation~(\ref{eq:regret}), but also adaptive to the observed gradients. In easy environments, it would be a lot better than $\tilde O(\sqrt{|\I|})$.

\begin{remark}\label{remark:adding}
Strongly adaptive regret is not the only performance metric that compares to dynamic comparators; alternatives include dynamic regret and competitive ratio (see Appendix~\ref{section:additionalreview} for an overview). If we have an algorithm $\A$ with such (alternative) guarantees on $[1:T]$ and a slow-moving property similar to Part~2 of Theorem~\ref{thm:1d}, then we can assign the prediction of $\A$ to $x^{(K_t+1)}_t$. The resulting algorithm would not only remain strongly adaptive, but also essentially achieve the dynamic regret or competitive ratio guarantee of $\A$ on $[1:T]$. 
\end{remark}

\section{Adversarial tracking control}\label{section:tracking}

Finally we present our third contribution: a reduction from adversarial tracking control to strongly adaptive OCOM. Let us start with the problem setting. 

\subsection{Problem setting of adversarial tracking}

We consider a time-varying linear system
\begin{equation*}
x_{t+1}=A_tx_t+B_tu_{t}+w_t.
\end{equation*}
Matrices $A_t\in\R^{d_x\times d_x}$ and $B_t\in\R^{d_x\times d_u}$ are known. For all $t\leq 0$, $x_t=0$, $u_t=0$; for all $t<0$, $w_t=0$. 

The system has the following interaction protocol. At the beginning of the $t$-th round, after observing $x_t$, the controller commits to an action $u_t$. Then, an adversary selects the disturbance $w_t$, a reference state-action pair $(x^*_t, u^*_t)$ and a loss function $l_t$, possibly depending on past controller actions $u_1,\ldots,u_t$. $(x^*_t, u^*_t)$ and $l_t$ together induce a tracking loss function $l^*_t(x,u|x^*_t,u^*_t)\defeq l_t(x-x^*_t,u-u^*_t)$ for all $(x,u)\in\R^{d_x}\times\R^{d_u}$, which is revealed to the controller and incurs a loss $l^*_t(x_t,u_{t})$. After that, the state evolves to $x_{t+1}$ following the system equation. Intuitively, $l_t$ represents the shape of the loss function and $(x^*_t,u^*_t)$ is the location parameter; an example is the quadratic control problem with $l_t(x,u)=\norms{x}^2+\norms{u}^2$.

Our goal is a strongly adaptive tracking guarantee with the following shape: on any time interval $\I$ contained in the time horizon $[1:T]$, for all action sequences $u^C_{1:T}$ that are fixed on $\I$, 
\begin{equation*}
\sum_{t\in\I}l^*_t\rpar{x_t,u_t}\Big|_{\textrm{our algorithm}}-\sum_{t\in\I}l^*_t\rpar{x^C_t,u^C_{t}}\Big|_{\textrm{induced by }u^C_{1:T}}
=\tilde O\rpar{\sqrt{\abs{\I}}}. 
\end{equation*}
Such a guarantee subsumes the conventional static regret bound as one can choose $\I=[1:T]$. Moreover, the key benefit is that on any time interval $\I$, the optimal comparator is optimized for $\I$ instead of the entire time horizon $[1:T]$. From this perspective, we aim at a considerably stronger goal than existing works \citep{abbasi2014tracking,foster2020logarithmic}. 

For our setting, we impose the following assumptions. $\kappa$, $\gamma$, $U$ and $L^*$ are assumed to be known. 

\begin{assumption}[On the system]\label{assumption:system}
There exist $\kappa\geq 1$ and $U,W,\gamma>0$ such that for all $t$, $\norms{B_t}\leq \kappa$, $\norms{u_t}\leq U$, $\norms{w_t}\leq W$ and $\norms{A_t}\leq 1-\gamma$. 
\end{assumption}

\begin{assumption}[On the losses]\label{assumption:loss}
For all $t$, $l^*_t$ is convex. In addition, $l^*_t(x,u)$ is $L^*$-Lipschitz with respect to each argument separately, on the set $\{(x,u);\norms{x}\leq \gamma^{-1}(\kappa U+W),\norms{u}\leq U\}$.
\end{assumption}

\begin{remark}
The assumption $\norms{A_t}\leq 1-\gamma$ may seem restrictive as many real world systems are not open-loop stable. However, such an assumption allows a simplified exposition without excessively altering the essence of the problem. For general (open-loop unstable) systems, we can assume oracle stabilizing controllers (matrices) $K_{1:\infty}$ such that $\norms{\prod_{t=s}^{s+k}(A_t+B_tK_t)}\leq \textrm{const}\cdot (1-\gamma)^k$ for all $s$ and $k$, and the multiplying constant can be larger than 1. Such an extension is somewhat standard in the analysis of linear time-varying systems \citep[Appendix A.2]{minasyan2021online}. Given $K_{1:\infty}$, we can replace our action $u_t$ with $K_tx_t + u_t$ so that a similar analysis follows. 
\end{remark}

\subsection{Difference with nonstochastic regulation}\label{subsection:difference}

Before proceeding, we (re)-emphasize the difference between our work and a series of nonstochastic regulation controllers (most notably, \cite{agarwal2019online}). For a clear comparison, consider time-invariant dynamics ($A_t=A$, $B_t=B$) and state-tracking ($l^*_t$ only depends on $x_t$). The procedure of \citep{agarwal2019online} can be summarized as follows. 
\begin{enumerate}
\item Before observing any data, the controller computes a stabilizing feedback matrix $K$ based on $(A,B)$. 
\item The actions are determined by a specific parameterization called \emph{Disturbance-Action Controller} (DAC):
\begin{equation*}
u_t=-Kx_t+\sum_{i=1}^HM^{[i]}_tw_{t-i},
\end{equation*}
where $H$ is a constant, $w_{t-i}$ is a past disturbance, and $M^{[1]}_t,\ldots,M^{[H]}_t$ are parameter matrices updated via online gradient descent. The idea is to stabilize the system by $-Kx_t$, and adapt to the disturbances by applying their linear combinations. 
\item It can be shown that the DAC class approximates a class of stabilizing linear controllers, therefore the regret guarantee can be stated with respect to the latter (as the comparator class).
\end{enumerate}

Such an approach works well for the regulation problem, but in tracking it has a substantial limitation. Consider a simple example: what if the system is disturbance-free? In that case, the controller reduces to a static linear feedback, and the gain matrix is determined without seeing any data. In other words, \emph{nothing is learned}. The state sequence would converge to the origin, therefore the tracking loss can be always high as long as the target state $x^*_t$ is far away from the origin. 

If $x^*_t$ is known a priori, there is a standard remedy \citep[Section~2]{yu2020power}: define a shifted state $\tilde x_t$ as the tracking error $x_t-x^*_t$ and apply the DAC on the shifted system to determine $u_t$. However, this is not applicable in our \emph{adversarial} tracking problem, as $x^*_t$ is not revealed before $u_t$ is committed. (Even worse, $x^*_t$ can \emph{adapt} to $u_t$ and sabotage any controller that selects $u_t$ based on an \emph{assumed or predicted} $x^*_t$.) In this paper, instead of fixing this framework, we propose an approach with a different principle. 

Finally, our approach can be complementary to \citep{agarwal2019online} in two ways: (i) Our strongly adaptive OCOM algorithm (Algorithm~\ref{algorithm:meta}) can be combined with DAC to improve a recent regulation controller for time-varying systems \citep{gradu2020adaptive}. On all $\I\subset[1:T]$, the regret of regulation is improved from $\tilde O(\sqrt{T})$ to $\tilde O(\sqrt{|\I|})$. (ii) Our adversarial tracking controller could be \emph{added} to a regulation controller to achieve both goals simultaneously. 

\subsection{Reduction to strongly adaptive OCOM}

Now we sketch the key idea of our reduction, which is to truncate history and directly optimize on the action space. To the best of our knowledge, our approach is the first that uses the ``tracking'' property of online learning algorithms in tracking control.

To begin with, note that old actions have diminishing effect on future states due to the stability of the system. Therefore, given a large enough memory constant $H$, the actual state $x_t$ can be approximated by an ideal state
\begin{equation*}
y_{t}(u_{t-H:t-1})=\sum_{i=t-H}^{t-1}\rpar{\prod_{j=i+1}^{t-1}A_j}\rpar{B_iu_{i}+w_i},
\end{equation*}
which is the value $x_t$ would take if $x_{t-H}=0$. Using $y_t$ to replace $x_t$, the actual loss $l^*_t(x_t,u_t)$ can also be approximated by an ideal loss
\begin{equation}
f_t(u_{t-H:t})=l^*_t (y_{t}(u_{t-H:t-1}),u_{t}).\label{eq:ideal_loss}
\end{equation}
Compared to $l^*_t(x_t,u_t)$, the ideal loss $f_t(u_{t-H:t})$ only depends on a finite length action history $u_{t-H:t}$ instead of all the past actions. Therefore, one may use a strongly adaptive OCOM algorithm to \emph{dynamically track} the optimal input that minimizes $f_t$, which should be close to the optimal action that minimizes $l^*_t$. Formally, we present the pseudo-code in Algorithm~\ref{algorithm:controller}. 

\begin{algorithm*}[ht]
\caption{A reduction from adversarial tracking control to strongly adaptive OCOM. \label{algorithm:controller}}
\begin{algorithmic}[1]
\REQUIRE Time horizon $T>1$ and a strongly adaptive OCOM algorithm.
\STATE Initialize the strongly adaptive OCOM algorithm as $\A$, with time horizon $T$. Problem constants for OCOM are defined using those for adversarial tracking: $\V\leftarrow\ball^{d_u}(0,U)$, $R\leftarrow U$, $H\leftarrow\max\{\left\lceil-\log T/\log(1-\gamma)\right\rceil,2\gamma^{-1}\}$, $L\leftarrow\kappa L^*$ and $\tilde G\leftarrow 2\kappa\gamma^{-1}L^*$.

\FOR{$t=1,\ldots,T$}
\STATE Observe $x_t$ and compute $w_{t-1}=x_t-A_{t-1}x_{t-1}-B_{t-1}u_{t-1}$. 
\STATE Obtain $u_t$ from $\A$, apply it, observe the loss function $l^*_t$ and suffer $l^*_t(x_t,u_{t})$.
\STATE Compute the ideal loss function $f_t$ from (\ref{eq:ideal_loss}), and return it to $\A$. 
\ENDFOR
\end{algorithmic}
\end{algorithm*}

Technically, the main benefit of our approach is that \emph{on any time interval} it guarantees a regret bound against an \emph{interval-dependent} comparator class. 

\begin{definition}[Interval-dependent comparator class]\label{definition:comparator}
Given any time interval $\I=[a:b]\subset[H+1:T]$, the comparator class $\mathcal{C}_\I$ is defined as the set of action sequences $u^C_{1:T}$ such that for all $t\in[a-H:b]$, $u^C_{t}=u^C_{b}$. 
\end{definition}

In other words, the comparator class $\mathcal{C}_\I$ contains all action sequences that are essentially fixed on the investigated time-interval $\I$, but arbitrarily varying elsewhere. 

The performance guarantee of Algorithm~\ref{algorithm:controller} is stated in Theorem~\ref{thm:tracking}. We write $x_t(u^A_{1:t-1})$ and $u^A_{t}$ as the state-action pair induced by Algorithm~\ref{algorithm:controller}. Similarly, $x_t(u^C_{1:t-1})$ is the state induced by a comparator. (Superscripts $A$ and $C$ represent ``Adversarial tracking'' and ``Comparator''.)

\begin{restatable}{theorem}{tracking}\label{thm:tracking}
Given any strongly adaptive OCOM algorithm satisfying Equation~(\ref{eq:regret}), for all $\I=[a:b]\subset[H+1:T]$, Algorithm~\ref{algorithm:controller} guarantees
\begin{equation*}
\sum_{t=a}^bl^*_t\rpar{x_t(u^A_{1:t-1}),u^A_{t}}-\min_{u^C_{1:T}\in\mathcal{C}_\I}\sum_{t=a}^bl^*_t\rpar{x_t(u^C_{1:t-1}),u^C_{t}}=\tilde O\rpar{\sqrt{\abs{\I}}},
\end{equation*}
where $\tilde O(\cdot)$ subsumes problem constants and $\textrm{poly}(\log T)$. 
\end{restatable}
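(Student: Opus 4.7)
The plan is to reduce the tracking regret on $\I$ to the OCOM regret on the ideal losses $f_t$ plus a truncation error that is negligible by the choice of $H$. The reduction has three pieces: (i) approximate actual states and losses by their $H$-truncated counterparts so that the per-round loss depends only on the last $H+1$ actions; (ii) verify that $\{f_t\}$ satisfies the OCOM hypotheses of Theorem~\ref{thm:ocom} with the constants plugged in by Algorithm~\ref{algorithm:controller}; (iii) lift the OCOM minimizer on $\I$ back to a valid comparator sequence in $\mathcal{C}_\I$.

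For step (i), expanding the dynamics and subtracting $y_t$ gives $\|x_t(u_{1:t-1})-y_t(u_{t-H:t-1})\|\leq\sum_{k\geq H}(1-\gamma)^k(\kappa U + W)\leq \gamma^{-1}(\kappa U+W)(1-\gamma)^H$. Since $H\geq -\log T/\log(1-\gamma)$, this is $O(\gamma^{-1}(\kappa U+W)/T)$, and Assumption~\ref{assumption:loss} (per-argument $L^*$-Lipschitzness of $l^*_t$ on the ball of radius $\gamma^{-1}(\kappa U+W)$, which contains both $x_t$ and $y_t$) converts it to $|l^*_t(x_t,u_t)-f_t(u_{t-H:t})|=O(L^*\gamma^{-1}(\kappa U+W)/T)$. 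Summed over any $\I\subset[1:T]$ the total truncation error is bounded by a problem constant independent of $|\I|$, and the same estimate applies to any comparator trajectory with $\|u^C_t\|\leq U$. For step (ii), $y_t$ is affine in each $u_i$ with Jacobian norm at most $\kappa(1-\gamma)^{t-1-i}\leq\kappa$, so by the chain rule $f_t$ is $L=\kappa L^*$-Lipschitz per argument; its diagonal restriction $\tilde f_t(u)=l^*_t(y_t(u,\ldots,u),u)$ is convex (affine-in-$u$ composed with convex $l^*_t$) and Lipschitz with constant at most $\sum_{k=0}^{H-1}\kappa L^*(1-\gamma)^k+L^*\leq 2\kappa\gamma^{-1}L^*=\tilde G$, matching the instantiation in Algorithm~\ref{algorithm:controller}. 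The domain $\V=\ball^{d_u}(0,U)$ contains the origin.

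For step (iii), let $u^\star_\I\in\V$ achieve $\min_{u\in\V}\sum_{t=a}^b\tilde f_t(u)$ and define the comparator $u^C_t=u^\star_\I$ for $t\in[a-H:b]$ (extended arbitrarily elsewhere); this lies in $\mathcal{C}_\I$. For every $t\in[a:b]$, the window $[t-H:t]$ is contained in $[a-H:b]$, so $f_t(u^C_{t-H:t})=\tilde f_t(u^\star_\I)$. Applying Theorem~\ref{thm:ocom} on $\I$ to the OCOM sequence $\{f_t\}$, noting $\|g_t\|\leq\tilde G$ so $\sum_{t=a}^b\|g_t\|=O(|\I|)$, gives $\sum_{t=a}^b f_t(u^A_{t-H:t})-\sum_{t=a}^b\tilde f_t(u^\star_\I)=\tilde O(\sqrt{|\I|})$. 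Chaining this with the two truncation estimates from (i) yields the desired bound.

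The main obstacle is not any single estimate but assembling the three pieces in a way that matches the comparator class of Definition~\ref{definition:comparator}: the OCOM bound competes with a static prediction on $\I$, but the tracking regret must compete with action \emph{sequences} that are only fixed on $[a-H:b]$, and the states induced by the comparator depend on its whole past. The $H$-round padding in $\mathcal{C}_\I$ is precisely what makes the identity $f_t(u^C_{t-H:t})=\tilde f_t(u^\star_\I)$ hold on all of $\I$, so the reduction would break if $\mathcal{C}_\I$ required fixedness only on $\I$ itself. A secondary subtlety is the uniform Lipschitz bound on $l^*_t$: it is stated only on $\{\|x\|\leq\gamma^{-1}(\kappa U+W)\}$, so one must check that both $x_t(u^A_{1:t-1})$ and $y_t$ (and the analogous comparator quantities) stay inside this ball, which follows from Assumption~\ref{assumption:system} and the geometric series bound used in step (i).
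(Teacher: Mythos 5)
Your reduction architecture is the same as the paper's: truncate history (Lemmas~\ref{lemma:stateactionbound}, \ref{lemma:loss_approx}), verify the Lipschitz and convexity hypotheses on $f_t$ and $\tilde f_t$ (Lemmas~\ref{lemma:f_lipschitz}, \ref{lemma:ftilde_lipschitz}, with constants matching Algorithm~\ref{algorithm:controller}), note that $(1-\gamma)^H\abs{\I}\leq 1$ by the choice of $H$, and invoke the strongly adaptive OCOM guarantee. Your main-obstacle paragraph correctly identifies why $\mathcal{C}_\I$ needs the $H$-step padding. But there is a genuine gap in step (iii).

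You construct one specific comparator $u^C_\star$ by lifting the OCOM minimizer $u^\star_\I$, and your chain establishes
\begin{equation*}
\sum_{t=a}^b l^*_t\rpar{x_t(u^A_{1:t-1}),u^A_t}\leq \sum_{t=a}^b l^*_t\rpar{x_t(u^C_{\star,1:t-1}),u^C_{\star,t}}+\tilde O\rpar{\sqrt{\abs{\I}}}.
\end{equation*}
But the theorem asserts an upper bound on $\text{alg cost}-\min_{u^C\in\mathcal{C}_\I}\text{comp cost}(u^C)$, which equals $\max_{u^C\in\mathcal{C}_\I}\bigl[\text{alg cost}-\text{comp cost}(u^C)\bigr]$. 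Bounding the regret against one particular $u^C_\star$ does not bound that maximum: subtracting $\min_{u^C}$ from both sides of your inequality gives a lower bound on the quantity of interest, not an upper bound. The paper handles this cleanly by running the three-term decomposition for an \emph{arbitrary} $u^C\in\mathcal{C}_\I$: because such $u^C$ is constant on $[a-H:b]$, one has $f_t(u^C_{t-H:t})=\tilde f_t(u^C_b)\geq\min_{u\in\V}\sum_t\tilde f_t(u)$ for every $t\in\I$, so the middle term $\sum_t[f_t(u^A_{t-H:t})-f_t(u^C_{t-H:t})]$ is bounded by the strongly adaptive regret uniformly over $u^C$, and taking the supremum over $u^C$ gives the theorem directly. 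Your argument can be repaired by adding a near-optimality step—show $\text{comp cost}(u^C_\star)\leq\min_{u^C\in\mathcal{C}_\I}\text{comp cost}(u^C)+O(1)$ using the truncation estimate applied to an arbitrary $u^C$ and the identity $\min_{u^C\in\mathcal{C}_\I}\sum_t f_t(u^C_{t-H:t})=\sum_t\tilde f_t(u^\star_\I)$—but as written the proof only competes with your constructed comparator, not with the minimum the statement requires.
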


Theorem~\ref{thm:tracking} can be interpreted as: on \emph{any} time interval, the cumulative tracking loss approaches that of the best \emph{interval-dependent} action. If Algorithm~\ref{algorithm:controller} uses our strongly adaptive OCOM algorithm, then the obtained bound further adapts to the observed gradients. Notably, Theorem~\ref{thm:tracking} improves existing results on adversarial tracking (e.g., \cite{abbasi2014tracking}), especially when the reference trajectory has a large range of movement. To make it clear, consider tracking a piecewise constant reference trajectory. In that case, existing regret bounds are only established \emph{on the entire time horizon} $[1:T]$, and the comparator class only contains static linear controllers which are weak baselines for tracking this moving target. In comparison, Theorem~\ref{thm:tracking} induces a regret bound \emph{on any time interval}, including $[1:T]$ and its much shorter sub-intervals. The regret bound on $[1:T]$ suffers from the same problem (the comparator class is weak). However, on all time intervals where the target is fixed, the interval-dependent comparator class is strong, and the regret bound makes much more sense. 

To make the above discussion even more concrete, we construct the following example. Here we can further derive a \emph{non-comparative} tracking error bound. 

\begin{example}\label{example}
Consider a time interval $\I=[a:b]\subset[H+1:T]$. For all $t\in\I$, we assume

\begin{enumerate}
\item $(I-A_t)^{-1}B_t=B_\I$ for some time-invariant matrix $B_\I$, which includes static $A_t$ and $B_t$ as a special case. Note that $I-A_t$ is invertible since $\norms{A_t}<1$. 
\item The target $x^*_t=x^*_\I$ for some time-invariant $x^*_\I\in\{B_\I u;\norms{u}\leq U\}$, and $l^*_t(x,u)=\norms{x-x^*_t}$.
\end{enumerate}
\end{example}

\begin{restatable}{corollary}{special}\label{thm:special}
Consider running Algorithm~\ref{algorithm:controller} on an adversarial tracking problem that satisfies Example~\ref{example} on a time interval $\I$. For all $t\in\I=[a:b]$,
\begin{equation*}
\frac{1}{t-a+1}\sum_{i=a}^{t}\norm{x_{i}(u^A_{1:i-1})-x^*_\I}\leq \gamma^{-1}W+\tilde O\rpar{(t-a+1)^{-1/2}}.
\end{equation*}
\end{restatable}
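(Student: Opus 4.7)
}
The approach is to exhibit a near-optimal comparator in $\mathcal{C}_\I$ whose cumulative tracking loss we can bound directly, and then invoke Theorem~\ref{thm:tracking}. By assumption, $x^*_\I \in \{B_\I u : \|u\|\le U\}$, so fix a $u^*_\I$ with $\|u^*_\I\|\le U$ and $x^*_\I = B_\I u^*_\I = (I-A_t)^{-1}B_t u^*_\I$ for every $t\in\I$. Rearranging yields the key fixed-point identity
\begin{equation*}
A_t x^*_\I + B_t u^*_\I = x^*_\I, \qquad \forall t\in\I.
\end{equation*}
Take the comparator $u^C_t \equiv u^*_\I$ for all $t\in[1:T]$; this sequence lies in $\mathcal{C}_\I$ (it is constant everywhere, in particular on $[a-H:b]$).

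Next I would analyze the comparator's state trajectory. Define the deviation $\tilde x_i \defeq x^C_i - x^*_\I$. Using the system equation together with the fixed-point identity, on $\I$ we obtain the noise-driven recursion
\begin{equation*}
\tilde x_{i+1} = A_i\tilde x_i + w_i.
\end{equation*}
Unrolling this with $\|A_i\|\le 1-\gamma$ and $\|w_i\|\le W$ gives
\begin{equation*}
\|\tilde x_i\| \le (1-\gamma)^{i-1}\|x^*_\I\| + \gamma^{-1}W,
\end{equation*}
and since $\|x^*_\I\|=\|B_\I u^*_\I\| \le \|(I-A_t)^{-1}\|\cdot\|B_t\|\cdot U \le \gamma^{-1}\kappa U$, summing the geometric tail of the transient over $i\in[a:t]$ contributes at most an absolute constant. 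Hence
\begin{equation*}
\sum_{i=a}^{t}l^*_i(x^C_i,u^C_i) = \sum_{i=a}^{t}\|\tilde x_i\| \le (t-a+1)\gamma^{-1}W + O(1).
\end{equation*}

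Finally I would combine this with Theorem~\ref{thm:tracking} applied to the sub-interval $[a:t]\subset\I$, which gives
\begin{equation*}
\sum_{i=a}^{t}\|x_i(u^A_{1:i-1})-x^*_\I\| = \sum_{i=a}^{t}l^*_i(x_i,u^A_i) \le \sum_{i=a}^{t}l^*_i(x^C_i,u^C_i) + \tilde O\!\left(\sqrt{t-a+1}\right),
\end{equation*}
and dividing by $(t-a+1)$ absorbs the $O(1)$ transient and the $\tilde O(\sqrt{t-a+1})$ regret into a single $\tilde O((t-a+1)^{-1/2})$ term, yielding the claimed bound.

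The main obstacle is conceptual rather than computational: recognizing that Assumption~1 of the Example is precisely what forces $u^*_\I$ to be a \emph{time-invariant} equilibrium input across all $t\in\I$, so that the bias $A_t x^*_\I + B_t u^*_\I - x^*_\I$ vanishes identically and the comparator's tracking error becomes a pure noise cascade. Without this structural fact the comparator would incur an extra drift term whose magnitude could not be controlled uniformly in $\I$, and the $\gamma^{-1}W$ steady-state bound would not emerge. The remaining work is a routine geometric-series estimate plus a direct application of Theorem~\ref{thm:tracking}.
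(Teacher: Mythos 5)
Your proof takes essentially the same route as the paper: pick the constant comparator $u^C_t\equiv u^*_\I$, use the fixed-point identity to turn the comparator's deviation into the noise-driven recursion $\tilde x_{i+1}=A_i\tilde x_i+w_i$, unroll geometrically to get steady-state error $\gamma^{-1}W$ plus an $O(1)$ transient tail, then invoke Theorem~\ref{thm:tracking} on each $[a:t]$ and average. One small slip: the recursion only holds for $i\in\I$, so the unrolling should start from $\tilde x_a$ with exponent $(1-\gamma)^{i-a}$ and initial error $\|\tilde x_a\|=\|x^C_a-x^*_\I\|\le \gamma^{-1}(2\kappa U+W)$ (using Lemma~\ref{lemma:stateactionbound} plus the bound on $\|x^*_\I\|$), rather than $(1-\gamma)^{i-1}\|x^*_\I\|$; this changes constants but not the structure or conclusion, since either way the transient is a bounded geometric series.
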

Corollary~\ref{thm:special} directly characterize the tracking error \emph{without any comparator} which is the performance metric of interest in most classical control-theoretic literature. Further applying Jensen's inequality, the time-average of the states \emph{on any time interval satisfying Example~\ref{example}} converges to a norm ball around the target. Notably, Algorithm~\ref{algorithm:controller} does not need to know any favorable problem structure a priori: when running on a long time horizon $[1:T]$, it can \emph{automatically} exploit the inactivity of the target (if any) on shorter sub-intervals. This is fundamentally different from the classical idea in tracking control where a generative model of the target is hard-coded into the controller.

\paragraph{Experiments} For conciseness, we defer experimental results to Appendix~\ref{section:experiments}. All three components of our contribution (cf. Section~\ref{sec:cont}) are tested numerically there. 

\section{Conclusion}\label{section:conclusion}

We consider tracking adversarial targets in a general linear system. Three techniques are developed in a hierarchical manner, and their combination is a strongly adaptive tracking controller that significantly improves existing results. Our approach could facilitate the application of online learning ideas to a wider range of linear control problems. 

\section*{Acknowledgements}

We thank the anonymous reviewers for their constructive feedback. This research was partially supported by the NSF under grants IIS-1914792, DMS-1664644, and CNS-1645681, by the ONR under grants N00014-19-1-2571 and N00014-21-1-2844, by the DOE under grants DE-AR-0001282 and DE-EE0009696, by the NIH under grants R01 GM135930 and UL54 TR004130, and by Boston University. 

\bibliography{SA_tracking}

\newpage
\section*{Appendix}
\appendix

\paragraph{Organization} Appendix~\ref{section:additionalreview} contains additional discussion of existing works omitted in the main paper. Appendix~\ref{section:prelimocom}, \ref{section:proofocom} and \ref{section:appendixtracking} contain details of our three technical contributions. Finally, empirical results are provided in Appendix~\ref{section:experiments}. 

\section{Additional discussion of existing works}\label{section:additionalreview}

As reviewed in Section~\ref{subsection:related}, our approach to adversarial tracking control relies on its connection to tracking nonstationary comparators in online learning. There are multiple performance metrics to quantify the latter goal. In this paper we choose strong adaptivity. Other than this, one may use dynamic regret or competitive ratio. We briefly review them as follows. 

\paragraph{Dynamic regret} In the context of OLO, dynamic regret \citep{jadbabaie2015online,zhang2016improved,zhang2018dynamic,zinkevich2003online} is the regret that directly compares to a nonstationary prediction sequence $u_{1:T}$ on the entire time horizon $[1:T]$. Such bounds in general depend on the cumulative variation of the comparator over $[1:T]$ (the \emph{path length}), and sometimes also the variation of the loss function. The path length can be defined in multiple ways; when defined as $P=\sum_{t=1}^T\norms{u_t-u_{t+1}}$, the optimal dynamic regret bound is $O(\sqrt{PT})$. The idea is that if the comparator is static ($P=0$), then the dynamic regret reduces to the static regret; with a large path length ($P=O(T)$), the dynamic regret becomes vacuous. 

Existing works \citep{cutkosky2020parameter,zhang2018dynamic,zhang2020minimizing} have investigated the relation between dynamic regret and strongly adaptive regret in OLO. It has been suggested that the former could be a slightly weaker notion than the latter, as dynamic regret is derived from strongly adaptive regret in \citep{cutkosky2020parameter,zhang2018dynamic} while no result in the opposite direction has been given (to the best of our knowledge). Generalizing it from OLO to online learning with memory, \citep{zhao2021non} provided a dynamic regret analysis of OCOM and nonstochastic control. It is possible that such a result could be achieved via strongly adaptive approaches (such as our Algorithm~\ref{algorithm:meta} or \citep{daniely2019competitive}), although detailed analysis is beyond the scope of this paper. 

\paragraph{Competitive ratio} Competitive ratio is a largely different performance metric in online learning compared to the regret framework. For online control, the relevant setting for competitive ratio analysis is \emph{Smoothed Online Convex Optimization} (SOCO) \citep{chen2015online,chen2018smoothed,goel2019beyond}. It has two key differences with OCO:

\begin{enumerate}
\item The loss function $l_t$ is revealed to the player before his prediction $x_t$ is made. 
\item In addition to the loss $l_t(x_t)$, the player further suffers a movement cost $c(x_{t-1},x_t)$ in each round, where $c(\cdot,\cdot)$ is some penalty function for large movements.
\end{enumerate}

From this setting, SOCO and the accompanying competitive ratio analysis are particularly suitable for \emph{online control with predictions}, as we review later. The general form of a competitive ratio guarantee is
\begin{equation*}
\textrm{Cost on }[1:T]\leq \alpha\cdot\textrm{Comparator cost on }[1:T]+\beta,
\end{equation*}
where $\alpha$ and $\beta$ are constants, and $\alpha$ is defined as the competitive ratio. The comparator class contains all the prediction sequences, therefore intrinsically the benchmarks are nonstationary. Compared to a dynamic regret bound, the competitive ratio analysis (i) does not depend on the path length; and (ii) characterizes the cost of the algorithm in a multiplicative manner with respect to the comparator cost, instead of an additive one. 

\paragraph{Advantages of strong adaptivity in adversarial tracking} Following the above discussion, we next discuss the advantages of strong adaptivity over the other two performance metrics in adversarial tracking. First, compared to the other two, a strongly adaptive regret guarantee is \emph{local}: on \emph{all} sub-intervals in $[1:T]$ we have a regret bound that compares to the \emph{interval-dependent optimal} comparator, and the bound depends on the sub-interval length instead of $T$. In contrast, the other two performance metrics are stated for the entire time horizon. Second, both \citep{daniely2019competitive} and our Algorithm~\ref{algorithm:meta} can incorporate algorithms with dynamic regret or competitive ratio guarantees (for our approach, see Remark~\ref{remark:adding}). The resulting algorithm guarantees the best of both worlds. Third, as we discussed above, dynamic regret could be conceptually weaker than strongly adaptive regret, and competitive ratio analysis requires a different setting (predictions). 

\paragraph{Linear control with predictions} Inspired by the classical idea of model-predictive control, a series of recent works \citep{li2019online,shi2020online,yu2020power} considered learning-based approaches for linear control with predictions. Specifically for tracking, predictions of the reference trajectory are typically required, which is less general than our fully adversarial setting. Furthermore, the loss functions are strongly convex, resulting in less modeling power (e.g., for modeling target regions, where the minimizer of the loss function is not unique). 

Notably, for online learning, \citep{shi2020online} presented an interesting generalization of SOCO called \emph{OCO with structured memory}:

\begin{enumerate}
\item The one-step memory in SOCO is generalized to longer memory similar to OCOM. 
\item Accurate prediction of the loss function is not required. In each round, the adversary first reveals a function $h_t$. After the player picks $x_t\in\V$, the adversary further reveals $v_t\in\V$ and induces a loss $h_t(x_t-v_t)$. In other words, the player only needs to accurately predict the \emph{shape} of the loss function; the actual incurred loss is further shifted by an adversarial component. 
\end{enumerate}

Based on this new setting, \citep{shi2020online} provided a competitive ratio analysis of the regulation control problem. It is possible that such analysis could be extended to track fully adversarial targets, but still, (i)~accurate predictions of strongly convex loss functions are required; (ii) the resulting algorithm could be combined with our approach, as discussed in Remark~\ref{remark:adding}. 

\section{Details on comparator-adaptive OLO with movement cost}\label{section:prelimocom}

This section presents details on our first contribution, movement-aware OLO. We rely heavily on a duality between unconstrained OLO and the coin-betting game, which is summarized in Appendix~\ref{subsection:coinbettingOLO}. After that, Appendix~\ref{subsection:constraint} introduces an existing reduction from constrained OLO to unconstrained OLO, adopted in our Algorithm~\ref{algorithm:1d} and Algorithm~\ref{algorithm:meta}. The last two subsections provide detailed analysis of Algorithm~\ref{algorithm:1d} and \ref{algorithm:higherd}, respectively. 

\subsection{An overview of coin-betting and unconstrained OLO}\label{subsection:coinbettingOLO}

We start from the definition of the coin-betting game: A player has initial wealth $\wel_0=\eps$. In each round, he picks a betting fraction $\beta_t\in[-1,1]$ and bets an amount $x_t=\beta_t\wel_{t-1}$. Then, an adversarial coin tossing $c_t\in[-1,1]$ is revealed, and the wealth of the player is changed by $c_tx_t$. In other words, the player wins money if $c_tx_t>0$, and loses money if $c_tx_t<0$. The goal of the player is to design betting fractions $\beta_1, \beta_2,\ldots$ such that his wealth in the $T$-th round is maximized. We are particularly interested in \emph{parameter-free} betting strategies, for example the Krichevsky-Trofimov (KT) bettor: $\beta_t=\sum_{i=1}^{t-1}c_i/t$. Notice that it does not rely on any hyperparameters. 

We can associate the coin-betting game to one-dimensional unconstrained OLO \cite[Theorem 9.6]{orabona2019modern}. For an OLO problem with loss gradient $g_t\in\R$, one can maintain a coin-betting algorithm with $c_t=-g_t$, and predict \emph{exactly} its betting amount $x_t$ in OLO. The wealth lower bound for coin-betting is equivalent to a regret upper bound for OLO. Induced by a parameter-free bettor (such as KT), the resulting OLO algorithm can enjoy the following benefits: (i) There are no hyperparameters to tune. (ii) The regret bound has optimal dependence on the comparator norm. (iii) When the comparator is the null comparator $0$, the regret upper bound reduces to a constant. In other words, the \emph{cumulative cost} is at most a constant. Properties (ii) and (iii) are often called \emph{comparator-adaptivity}. 

To further appreciate the power of such approach, let us compare the resulting 1d unconstrained OLO algorithm to standard Online Gradient Descent (OGD). 
\begin{enumerate}
\item Analytically, with an unconstrained domain, $L$-Lipschitz losses and learning rate $\eta$, OGD has the regret bound
\begin{equation*}
\sum_{t=1}^T\inner{g_t}{x_t-u}\leq \frac{\abs{u-x_1}^2}{2\eta}+\frac{\eta L^2T}{2},~\forall u\in\R.
\end{equation*}
Since the optimal comparator $u$ is unknown beforehand, one has to choose $\eta=O(1/(L\sqrt{T}))$, leading to the sub-optimal regret bound $O(|u|^2L\sqrt{T})$. In comparison, KT-based OLO algorithm guarantees a regret bound $\tilde O(|u|L\sqrt{T})$, matching the lower bound up to logarithmic factors. 

\item Intuitively, assume the loss gradients are
\begin{equation*}
g_t=\begin{cases}
-1, &\textrm{if~}x_t\leq x^*,\\
1, &\textrm{otherwise,}
\end{cases}
\end{equation*}
where $x^*$ is a fixed ``target''. With a pre-determined learning rate $\eta$, OGD approaches the target linearly. However, since $x^*$ is unknown, there are always cases where $x^*$ is far enough from the starting point of OGD, making OGD very slow to find $x^*$. In comparison, KT-based OLO algorithm approaches $x^*$ with \emph{exponentially increasing speed} \citep[Figure~9.1]{orabona2019modern}, finding $x^*$ a lot faster.
\end{enumerate}

As a final note, in this paper we aim to bound the sum of regret and movement in coin-betting-based OLO algorithms. Although the exponentially increasing per-step movement is good for regret minimization, it poses a significant challenge for the control of movement cost. Using a movement-restricted bettor (Algorithm~\ref{algorithm:1d}), we achieve this in Theorem~\ref{thm:1d}. 

\subsection{Adding constraints in OLO}\label{subsection:constraint}

Our approach requires a reduction from constrained OLO to unconstrained OLO, proposed in \citep{cutkosky2020parameter}. The pseudo-code is Algorithm~\ref{algorithm:constraint}. We use this reduction in both the movement-aware OLO algorithm (Algorithm~\ref{algorithm:1d}) and the OCOM meta-algorithm (Algorithm~\ref{algorithm:meta}). 

\begin{algorithm*}[ht]
\caption{Adding constraints in OLO.\label{algorithm:constraint}}
\begin{algorithmic}[1]
\REQUIRE An OLO algorithm $\A$ and an arbitrary nonempty, closed and convex domain $\V$. 
\FOR{$t=1,\ldots,T$}
\STATE Obtain the prediction $\tilde x_t$ from $\A$.
\STATE Predict $x_t=\Pi_{\V}(\tilde x_t)$ and receive the loss subgradient $g_t$. 
\STATE \label{line:surrogate_h}Define a surrogate loss function $h_t$ as
\begin{equation*}
h_t(x)=\begin{cases}
\langle g_t,x\rangle, &\textrm{if~}\langle g_t,\tilde x_t\rangle\geq \langle g_t,x_t\rangle,\\
\langle g_t,x\rangle+\langle g_t,x_t-\tilde x_t\rangle\frac{\norm{x-\Pi_{\V}(x)}}{\norm{x_t-\tilde x_t}}, &\textrm{otherwise}.
\end{cases}
\end{equation*}
\STATE Obtain a subgradient $\tilde g_t\in\partial h_t(\tilde x_t)$ and return it to $\A$ as the $t$-th loss subgradient.  
\ENDFOR
\end{algorithmic}
\end{algorithm*}

\begin{lemma}[\cite{cutkosky2020parameter}, Theorem 2]\label{lemma:constraint}
Algorithm~\ref{algorithm:constraint} has the following properties for all $t$: (1) $h_t$ is a convex function on $\V$. (2) $\norms{\tilde g_t}\leq\norms{g_t}$. (3) For all $u\in \V$, $\inner{g_t}{x_t-u}\leq \inner{\tilde g_t}{\tilde x_t-u}$.
\end{lemma}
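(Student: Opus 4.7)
The plan is to split on the two cases in the definition of $h_t$, since the first case is essentially trivial and the second reduces, after identifying the correct subgradient, to the variational characterization of Euclidean projection. In the first case, $\langle g_t, \tilde x_t\rangle \ge \langle g_t, x_t\rangle$, the function $h_t(x) = \langle g_t, x\rangle$ is linear, so property (1) is immediate and every subgradient equals $g_t$, giving property (2) with equality. Property (3) collapses to $\langle g_t, x_t - \tilde x_t\rangle \le 0$, which is exactly the case hypothesis.

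In the second case, where $\langle g_t, x_t - \tilde x_t\rangle > 0$, I would write $h_t(x) = \langle g_t, x\rangle + c_t\, d(x)$ with $c_t := \langle g_t, x_t - \tilde x_t\rangle / \norm{x_t - \tilde x_t} > 0$ and $d(x) := \norm{x - \Pi_\V(x)}$. Since $d$ is the distance function of a convex set it is convex, so $h_t$ is a positive combination of convex functions and property (1) holds. For property (2), I would pick a subgradient of $d$ at $\tilde x_t \notin \V$, namely the unit vector $(\tilde x_t - x_t)/\norm{\tilde x_t - x_t}$ pointing away from $\V$. Plugging in gives $\tilde g_t = g_t - \alpha v$ with $v := x_t - \tilde x_t$ and $\alpha := \langle g_t, v\rangle / \norm{v}^2$; this is the orthogonal projection of $g_t$ onto $v^\perp$, whence $\norm{\tilde g_t} \le \norm{g_t}$.

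For property (3), I would substitute $\tilde g_t = g_t - \alpha v$ into $\langle g_t, x_t - u\rangle - \langle \tilde g_t, \tilde x_t - u\rangle$; after a short cancellation the difference simplifies to $\alpha \langle x_t - \tilde x_t,\, x_t - u\rangle$. Since $\alpha > 0$, the claim reduces to $\langle x_t - \tilde x_t, x_t - u\rangle \le 0$ for all $u \in \V$, which is precisely the optimality condition characterizing the Euclidean projection $x_t = \Pi_\V(\tilde x_t)$. The main obstacle is not conceptual but careful bookkeeping: writing down a valid subgradient of the non-smooth distance function and recognizing the final inequality as the projection condition. Once those are in place, all three properties drop out in sequence.
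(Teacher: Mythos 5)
Your proposal is correct: the case split, the identification of the subgradient of the distance function at $\tilde x_t\notin\V$ as the unit vector $(\tilde x_t - x_t)/\norms{\tilde x_t - x_t}$, the observation that $\tilde g_t$ is the orthogonal projection of $g_t$ onto $(x_t-\tilde x_t)^\perp$, and the reduction of property (3) to the variational inequality for $\Pi_\V$ are all sound and complete. The paper itself does not prove this lemma but imports it from Cutkosky (2020, Theorem~2); your derivation is the standard one underlying that result, so this matches the cited source rather than introducing a new route.
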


\subsection{Analysis of Algorithm~\ref{algorithm:1d}}

This subsection provides analysis of Algorithm~\ref{algorithm:1d}, which is organized as follows. We first show the well-posedness of Line~\ref{line:wel} (the existence and uniqueness of solution). After that, we present a few useful lemmas before proving the performance guarantee of Algorithm~\ref{algorithm:1d} (Theorem~\ref{thm:1d}). 

\begin{lemma}\label{lemma:unique_solution}
For all $t\geq 1$, Equation (\ref{eq:wel_update}) has a unique solution and the solution is positive. 
\end{lemma}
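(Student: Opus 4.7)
The plan is to treat Equation~(\ref{eq:wel_update}) as a scalar equation in the unknown $W = \wel_t$, with the previous iterate $\wel_{t-1}$ and the betting fractions $\beta_t,\beta_{t+1}$ regarded as known. Rewriting it as
\[ W \;=\; a - \lambda\,|b - cW|, \]
where $a = (1 - \tilde g_t\beta_t - \gamma\beta_t/\sqrt{t})\wel_{t-1}$, $b = \beta_t\wel_{t-1}$, and $c = \beta_{t+1}$, I would solve the two linear pieces explicitly and then induct on $t$ for positivity.

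In the regime $cW \leq b$, the equation reduces to $W(1-\lambda c) = a - \lambda b$, while in the regime $cW \geq b$ it reduces to $W(1+\lambda c) = a + \lambda b$. A short calculation shows that the first candidate $W = (a-\lambda b)/(1-\lambda c)$ satisfies its defining condition $cW \leq b$ if and only if $ca \leq b$, and the second candidate $W = (a+\lambda b)/(1+\lambda c)$ satisfies $cW \geq b$ if and only if $ca \geq b$; furthermore the two candidates coincide on the boundary $ca = b$. Hence exactly one branch applies for each triple $(a,b,c)$, yielding a unique $\wel_t$.

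The key technical step is the bound $C\beta_t < 1$, which is needed to ensure both that $1-\lambda c$ is positive and that the coefficient $1 - \tilde g_t\beta_t - \gamma\beta_t/\sqrt{t} \pm \lambda\beta_t$ in the numerator is strictly positive. This follows from the projection onto $\B_t$ from the previous round: for $t \geq 2$, $\beta_t \leq 1/(C\sqrt{2(t-1)}) \leq 1/(C\sqrt{2})$, so $C\beta_t \leq 1/\sqrt{2} < 1$; combined with $|\tilde g_t| \leq G$ and $C = G+\lambda+\gamma$, this gives $|\tilde g_t\beta_t + \gamma\beta_t/\sqrt{t} \pm \lambda\beta_t| \leq C\beta_t < 1$. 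The edge case $t=1$ is immediate, since $\beta_1 = 0$ makes the RHS collapse to $\wel_0 = \eps$.

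Strict positivity of $\wel_t$ will then be proved by induction on $t$, with base case $\wel_0 = \eps > 0$. Assuming $\wel_{t-1} > 0$, the numerators in both explicit formulas factor as $[1 - \tilde g_t\beta_t - \gamma\beta_t/\sqrt{t} \mp \lambda\beta_t]\wel_{t-1}$, which are positive by the bound above, and both denominators $1 \pm \lambda c$ are positive since $\lambda c \leq \lambda/(C\sqrt{2t}) < 1$. The main (mild) obstacle is checking the case-matching cleanly at the boundary $ca = b$ and handling the degenerate possibility $c = 0$ separately (in which case $cW \leq b$ holds trivially and the unique solution is $W = a - \lambda b$); once these bookkeeping details are settled, the rest is routine algebra.
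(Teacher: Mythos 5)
The proposal is correct, but it takes a genuinely different route from the paper's own proof. The paper's argument is a one-line fixed-point observation: the right-hand side of Equation~(\ref{eq:wel_update}), viewed as a function of $\wel_t$, is $\lambda\beta_{t+1}$-Lipschitz with $\lambda\beta_{t+1} < 1$, while the left-hand side is the identity, so a contraction argument (equivalently, strict monotonicity of $\wel_t \mapsto \wel_t - \text{RHS}$) gives existence and uniqueness; positivity then follows by the same induction you use, bounding the positive and negative pieces of the absolute value separately. You instead solve the two linear branches of the absolute value explicitly, verify that exactly one branch is self-consistent according to the sign of $ca - b$ (with the two candidates agreeing on the boundary $ca = b$), and read positivity off the resulting closed form. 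Both are valid. The contraction argument is shorter and avoids any case matching; your decomposition is more concrete and produces an explicit formula for $\wel_t$ as a function of $\wel_{t-1}$, $\beta_t$, $\beta_{t+1}$, which could be reused downstream --- indeed, the very same case split on whether $\beta_t\wel_{t-1} \geq \beta_{t+1}\wel_t$ reappears in Step~2 of the paper's proof of Theorem~\ref{thm:1d}.

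One small slip to correct: for $t=1$ the right-hand side does not ``collapse to $\wel_0 = \eps$.'' Setting $\beta_1 = 0$ kills $\tilde g_1\beta_1$, $\gamma\beta_1$, and $b = \beta_1\wel_0$, but the remaining term is $-\lambda\left|{-\beta_2\wel_1}\right|$, so the equation is $\wel_1 = \eps - \lambda\beta_2\left|\wel_1\right|$, with unique positive solution $\wel_1 = \eps/(1+\lambda\beta_2)$. This does not break your argument --- your general case analysis, specialized to $a=\eps$, $b=0$, $c=\beta_2$, already handles it (here $ca \geq b$, so the second branch applies) --- but you should not claim the right-hand side is constant in that round.
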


\begin{proof}[Proof of Lemma~\ref{lemma:unique_solution}]
For clarity, Equation (\ref{eq:wel_update}) is copied here. 
\begin{equation*}
\wel_t=(1-\tilde g_t\beta_t-\gamma\beta_t/\sqrt{t})\wel_{t-1}-\lambda|\beta_t\wel_{t-1}-\beta_{t+1}\wel_t|.
\end{equation*}
By definition, $|\lambda\beta_{t+1}|\leq 1/2$. The RHS of (\ref{eq:wel_update}) is $1/2$-Lipschitz with respect to $\wel_t$, and the LHS is $\wel_t$ itself. Therefore, a solution exists and is unique. 

To prove $\wel_t> 0$, we use induction. $\wel_0=\eps>0$. Suppose $\wel_{t-1}> 0$, then
\begin{equation*}
\wel_t\geq(1-\tilde g_t\beta_t-\gamma\beta_t/\sqrt{t})\wel_{t-1}-\lambda\beta_t\wel_{t-1}-\lambda\beta_{t+1}\abs{\wel_t}.
\end{equation*}
Let $z=\lambda\beta_{t+1}\sgn(\wel_t)$. Note that $|z|\leq 1/2$ and $|\tilde g_t+\gamma/\sqrt{t}+\lambda|\beta_{t}\leq 1/2$. Therefore, 
\begin{equation*}
\wel_t\geq\frac{1-\tilde g_t\beta_t-\gamma\beta_t/\sqrt{t}-\lambda\beta_{t}}{1+z}\wel_{t-1}> 0. \qedhere
\end{equation*}
\end{proof}

\subsubsection{Auxiliary lemmas for Algorithm~\ref{algorithm:1d}}

The first auxiliary lemma states that the betting fraction $\beta_t$ changes slowly. 

\begin{lemma}\label{lemma:b_fraction}
For all $t\geq 1$, $|\beta_{t+1}-\beta_t|\leq 2/(Ct)$.
\end{lemma}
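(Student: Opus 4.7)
The plan is to bound $|\beta_{t+1}-\beta_t|$ by decomposing it into two pieces: the change in the \emph{unprojected} betting fraction $\hat\beta_{t+1}-\hat\beta_t$, and the change introduced by projecting onto a shrinking interval $\B_{t+1}\subset\B_t$. Concretely, I would write
\begin{equation*}
|\beta_{t+1}-\beta_t|\;\leq\;\bigl|\Pi_{\B_{t+1}}(\hat\beta_{t+1})-\Pi_{\B_{t+1}}(\hat\beta_t)\bigr|+\bigl|\Pi_{\B_{t+1}}(\hat\beta_t)-\Pi_{\B_t}(\hat\beta_t)\bigr|,
\end{equation*}
and handle the two summands separately. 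The first is non-expansive, so it is controlled by $|\hat\beta_{t+1}-\hat\beta_t|$. The second requires a short case analysis on the location of $\hat\beta_t$ relative to the endpoints of the nested intervals $[0,r_{t+1}]\subset[0,r_t]$ with $r_t=1/(C\sqrt{2(t-1)})$.

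For the first term, I would write $S_t=\sum_{i=1}^t\tilde g_i$, algebraically manipulate
\begin{equation*}
\hat\beta_{t+1}-\hat\beta_t=-\frac{\tilde g_t}{2C^2t}+\frac{S_{t-1}}{2C^2\,t(t-1)},
\end{equation*}
and use $|\tilde g_t|\le G$ together with $|S_{t-1}|\le(t-1)G$ to get $|\hat\beta_{t+1}-\hat\beta_t|\le G/(C^2 t)\le 1/(Ct)$, invoking $C\ge G$ from Line~\ref{line:Cdef}. For the second term, a direct case split (on whether $\hat\beta_t$ lies below $0$, in $[0,r_{t+1}]$, in $[r_{t+1},r_t]$, or above $r_t$) shows the projection gap is at most $r_t-r_{t+1}$. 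A standard bound using $\sqrt{t}-\sqrt{t-1}=1/(\sqrt{t}+\sqrt{t-1})$ then yields $r_t-r_{t+1}\le 1/(Ct)$ for $t\ge 2$. Summing the two pieces gives the claimed $2/(Ct)$.

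The case $t=1$ must be handled separately because $\hat\beta_1$ is undefined (the sum is empty and $r_1$ is formally infinite); here $\beta_1=0$ is set by initialization and $\beta_2\in[0,1/(C\sqrt{2})]$, so $|\beta_2-\beta_1|\le 1/(C\sqrt{2})\le 2/C$ is immediate.

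The main (minor) obstacle is the projection-gap estimate $r_t-r_{t+1}\le 1/(Ct)$: the expression $1/\sqrt{2(t-1)}-1/\sqrt{2t}$ is not obviously bounded by $1/(Ct)$, and one has to argue carefully (e.g., by rationalizing the numerator and using $\sqrt{t}+\sqrt{t-1}\ge 2\sqrt{t-1}$ together with $t-1\ge t/2$ for $t\ge 2$) to get the correct constant. Everything else reduces to routine triangle-inequality and telescoping manipulations, and the choice of constants in the definition of $C$ is what makes the two pieces each contribute exactly $1/(Ct)$.
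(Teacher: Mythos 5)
Your proof is correct and follows essentially the same route as the paper: split $\beta_{t+1}-\beta_t$ via a triangle inequality into a non-expansive projection term controlled by $|\hat\beta_{t+1}-\hat\beta_t|$, and a nested-interval term controlled by $r_t-r_{t+1}$, each bounded by $1/(Ct)$. The only cosmetic difference is the choice of intermediate point (you project $\hat\beta_t$ onto $\B_{t+1}$, the paper projects $\hat\beta_{t+1}$ onto $\B_t$), and your explicit expansion of $\hat\beta_{t+1}-\hat\beta_t$ is algebraically equivalent to the paper's recursive form $(\tilde g_t + 2C^2\hat\beta_t)/(2C^2 t)$.
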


\begin{proof}[Proof of Lemma~\ref{lemma:b_fraction}]The result for $t=1$ trivially holds. We only consider $t\geq 2$. 

Since the Euclidean projection to a closed convex set is contractive, we have
\begin{equation*}
\abs{\Pi_{\B_t}(\hat\beta_t)-\Pi_{\B_t}(\hat\beta_{t+1})}\leq \abs{\hat\beta_t-\hat\beta_{t+1}}=\abs{\frac{\tilde g_t+2C^2\hat\beta_t}{2C^2t}}\leq \frac{G}{C^2t}. 
\end{equation*}
Moreover, 
\begin{equation*}
\abs{\Pi_{\B_t}(\hat\beta_{t+1})-\Pi_{\B_{t+1}}(\hat\beta_{t+1})}\leq\abs{\frac{1}{\sqrt{2}C\sqrt{t-1}}-\frac{1}{\sqrt{2}C\sqrt{t}}}\leq \frac{1}{2\sqrt{2}C\sqrt{t}(t-1)}\leq \frac{1}{Ct}.
\end{equation*}
Applying the triangle inequality yields the result.
\end{proof}

The next lemma quantifies the movement of Algorithm~\ref{algorithm:1d} using $\wel_t$. By doing this, bounding the movement cost (Part 2 of Theorem~\ref{thm:1d}) reduces to bounding the growth of $\wel_t$. 

\begin{lemma}\label{lemma:perstep}For all $t\geq 1$, 
\begin{equation*}
\left|\tilde x_t-\tilde x_{t+1}\right|\leq \frac{6}{Ct}\wel_{t-1}.
\end{equation*}
\end{lemma}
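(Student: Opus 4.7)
The plan is to directly estimate $|\tilde x_t - \tilde x_{t+1}| = |\beta_t \wel_{t-1} - \beta_{t+1}\wel_t|$ by splitting it via the triangle inequality into a betting-fraction-drift term and a wealth-drift term:
\begin{equation*}
|\tilde x_t - \tilde x_{t+1}| \leq |\beta_t - \beta_{t+1}|\,\wel_{t-1} + \beta_{t+1}\,|\wel_{t-1} - \wel_t|.
\end{equation*}
The first summand is controlled by Lemma~\ref{lemma:b_fraction}, which gives $|\beta_t - \beta_{t+1}| \leq 2/(Ct)$. For the second, I would invoke the update rule~(\ref{eq:wel_update}), use $|\tilde g_t|\leq G$ together with $1/\sqrt{t}\leq 1$, to obtain
\begin{equation*}
|\wel_t - \wel_{t-1}| \leq (G + \gamma)\,\beta_t\,\wel_{t-1} + \lambda\,|\tilde x_t - \tilde x_{t+1}|.
\end{equation*}

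The main (though modest) obstacle is that the movement-cost term in (\ref{eq:wel_update}) is itself $\lambda$ times the quantity $M := |\tilde x_t - \tilde x_{t+1}|$ that we want to bound, so the two estimates couple. Combining them yields the implicit inequality
\begin{equation*}
M\,(1 - \lambda \beta_{t+1}) \leq |\beta_t - \beta_{t+1}|\,\wel_{t-1} + (G+\gamma)\,\beta_t \beta_{t+1}\,\wel_{t-1},
\end{equation*}
which must be solved for $M$. The algorithm's constraint $\beta_{t+1} \in [0, 1/(C\sqrt{2t})]$, together with $\lambda \leq C$, gives $\lambda\beta_{t+1} \leq 1/\sqrt{2t}$; for $t\geq 2$ this is at most $1/2$, so dividing by $(1-\lambda\beta_{t+1})$ costs only a factor of $2$.

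To finish, I would plug in $|\beta_t-\beta_{t+1}|\leq 2/(Ct)$, the explicit domain bounds $\beta_t\leq 1/(C\sqrt{2(t-1)})$ and $\beta_{t+1}\leq 1/(C\sqrt{2t})$, and $G+\gamma\leq C$; together with $\sqrt{t(t-1)}\geq t/\sqrt{2}$ for $t\geq 2$, this makes the right-hand side at most $(2 + 1/\sqrt{2})\wel_{t-1}/(Ct)$, and multiplying by $1/(1-\lambda\beta_{t+1})\leq 2$ produces $M\leq 6\wel_{t-1}/(Ct)$. The edge case $t=1$ can be handled by direct computation: since $\beta_1 = 0$, one has $\wel_1 = \wel_0/(1+\lambda\beta_2)\leq \wel_0$ from (\ref{eq:wel_update}), so $M = \beta_2 \wel_1 \leq \wel_0/(C\sqrt{2}) \leq 6\wel_0/C$, matching the claim.
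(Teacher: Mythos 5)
Your proposal is correct and takes essentially the same route as the paper: both isolate the self-referential $\lambda\beta_{t+1}M$ term from the wealth update to obtain an inequality of the form $(1-\lambda\beta_{t+1})M\leq\bigl(|\beta_{t+1}-\beta_t|+\beta_t\beta_{t+1}|\tilde g_t+\gamma/\sqrt{t}|\bigr)\wel_{t-1}$, then invoke Lemma~\ref{lemma:b_fraction}, the domain bounds on $\beta_t,\beta_{t+1}$, and $1-\lambda\beta_{t+1}\geq1/2$. Your intermediate triangle-inequality split via $|\wel_t-\wel_{t-1}|$ is a cosmetic reorganization of the same algebra, and your explicit $t=1$ check (which the paper only asserts) is a welcome addition.
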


\begin{proof}[Proof of Lemma~\ref{lemma:perstep}]Assume $t>1$ for the rest of this proof; the case of $t=1$ can be verified similarly. Starting from (\ref{eq:wel_update}), some simple algebra yields
\begin{align*}
\tilde x_{t+1}-\tilde x_t&=\beta_{t+1}\wel_t-\beta_{t}\wel_{t-1}\\
&=\rpar{\beta_{t+1}-\beta_t-\beta_{t+1}\tilde g_t\beta_t-\beta_{t+1}\beta_t\frac{\gamma}{\sqrt{t}}}\wel_{t-1}-\lambda\beta_{t+1}\abs{\beta_{t+1}\wel_t-\beta_{t}\wel_{t-1}}.
\end{align*}
From Lemma~\ref{lemma:unique_solution}, $\wel_{t-1}> 0$, therefore, 
\begin{equation*}
(1-\lambda\beta_{t+1})\abs{\beta_{t+1}\wel_t-\beta_{t}\wel_{t-1}}\leq \abs{\beta_{t+1}-\beta_t-\beta_{t+1}\tilde g_t\beta_t-\beta_{t+1}\beta_t\frac{\gamma}{\sqrt{t}}}\wel_{t-1}.
\end{equation*}
Note that $1-\lambda\beta_{t+1}\geq 1/2$. 
\begin{align*}
\abs{\beta_{t+1}\wel_t-\beta_{t}\wel_{t-1}}&\leq 2\abs{\beta_{t+1}-\beta_t-\beta_{t+1}\tilde g_t\beta_t-\beta_{t+1}\beta_t\frac{\gamma}{\sqrt{t}}}\wel_{t-1}\\
&\leq 2\abs{\beta_{t+1}-\beta_t}\wel_{t-1}+2\beta_t\beta_{t+1}\abs{\tilde g_t+\frac{\gamma}{\sqrt{t}}}\wel_{t-1}.
\end{align*}
Applying Lemma~\ref{lemma:b_fraction} and the definition of $\beta_t$ and $\beta_{t+1}$,
\begin{equation*}
\norm{\tilde x_t-\tilde x_{t+1}}\leq\rpar{\frac{4}{Ct}+\frac{2C}{2C^2\sqrt{t(t-1)}}}\wel_{t-1}\leq\frac{6}{Ct}\wel_{t-1}.\qedhere
\end{equation*}
\end{proof}

Following the reasoning from the previous lemma, we next bound the growth rate of $\wel_t$ in Lemma~\ref{lemma:wealthupper} which could be of special interest. The key idea is that, the surrogate loss (Line~\ref{line:1d_surrogate} of Algorithm~\ref{algorithm:1d}) incentivizes the \emph{unconstrained prediction} $\tilde x_t$ to be bounded. Equivalently, the betting amount in the coin-betting algorithm is bounded, and hence the wealth cannot grow too fast. (For some background knowledge on this argument, Appendix~\ref{subsection:coinbettingOLO} provides an overview of the interplay between coin-betting and OLO.)

As discussed in Section~\ref{section:contribution1}, our proof makes a novel use of the black-box reduction from unconstrained OLO to constrained OLO (Algorithm~\ref{algorithm:constraint}): actually, we \emph{do not use it as a black-box}, but rather analyze its impact on the unconstrained algorithm. To our knowledge, this is the first analysis that takes this perspective. 

\begin{lemma}\label{lemma:wealthupper}For all $t\geq 1$, $\wel_t\leq 4\bar RC\sqrt{t}$. 
\end{lemma}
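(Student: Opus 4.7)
I will proceed by induction on $t$. The base case $t = 1$ follows from $\beta_1 = 0$, which reduces (\ref{eq:wel_update}) to $\wel_1 \leq \wel_0 = \eps \leq G\bar R \leq 4\bar R C$ (using $\eps \leq G\bar R$ and $G \leq C$). For the inductive step I upper-bound the right-hand side of (\ref{eq:wel_update}) by dropping its two nonpositive terms $-(\gamma\beta_t/\sqrt t)\wel_{t-1}$ and $-\lambda|\beta_t\wel_{t-1}-\beta_{t+1}\wel_t|$, leaving $\wel_t \leq (1-\tilde g_t\beta_t)\wel_{t-1}$. If $\tilde g_t \geq 0$, then wealth is nonincreasing and the inductive hypothesis carries, so the substantive case is $\tilde g_t < 0$.

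When $\tilde g_t = g_t < 0$, the surrogate definition (Line~\ref{line:1d_surrogate}) requires $g_t\tilde x_t \geq g_t x_t$, which after dividing by $g_t < 0$ gives $\tilde x_t \leq x_t$. Since $\tilde x_t = \beta_t\wel_{t-1} \geq 0$ and $x_t = \Pi_{[0, \bar R]}(\tilde x_t) \leq \tilde x_t$, one concludes $\tilde x_t = x_t \in [0, \bar R]$. Using this bound together with the clamp $\beta_t \leq 1/(C\sqrt{2(t-1)})$,
\[
\wel_t - \wel_{t-1} \;\leq\; G\tilde x_t \;\leq\; G\min\bigl(\bar R,\; \wel_{t-1}/(C\sqrt{2(t-1)})\bigr).
\]
In the ``small-wealth'' regime $\wel_{t-1} \leq C\bar R\sqrt{2(t-1)}$, the second argument of the min dominates, and the multiplicative estimate $\wel_t \leq \wel_{t-1}(1+1/\sqrt{2(t-1)}) \leq C\bar R\sqrt{2(t-1)}+C\bar R$ fits inside $4\bar RC\sqrt t$ via the crude bound $\sqrt{2(t-1)}+1 \leq 2\sqrt 2\sqrt t$. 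In the ``large-wealth'' regime $\wel_{t-1} > C\bar R\sqrt{2(t-1)}$, the constraint $\tilde x_t \leq \bar R$ forces $\beta_t \leq \bar R/\wel_{t-1} < 1/(C\sqrt{2(t-1)})$, so $\beta_t$ is unclamped and equals $\hat\beta_t = S_{t-1}/(2C^2(t-1))$ with $S_{t-1} = -\sum_{i<t}\tilde g_i$; hence the increment refines to $\wel_t - \wel_{t-1} = (-\tilde g_t)\tilde x_t \leq \bar R \cdot (-\tilde g_t) = \bar R \cdot \Delta S_t$.

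The main obstacle is this large-wealth regime: a plain additive bound $G\bar R$ per round would overspend the $O(\bar RC/\sqrt t)$ slack between consecutive values of $4\bar RC\sqrt t$. I plan to close the argument by amortizing using the refined increment $\bar R\Delta S_t$: summed over unclamped growth rounds, the total wealth gain is bounded by $\bar R$ times the supremum of $S_{t-1}$ attained during such rounds, and the very condition defining ``unclamped'' ($S_{t-1} < C\sqrt{2(t-1)}$) caps this supremum by $C\sqrt{2t}$. The resulting contribution $\sqrt 2\,\bar RC\sqrt t$, together with the initial $\wel_0 \leq G\bar R$ and the clamped-phase growth (already covered by the small-wealth analysis), comfortably fits inside $4\bar RC\sqrt t$. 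The remaining bookkeeping is to verify that possible alternations between clamped and unclamped regimes (triggered by occasional rounds with $\tilde g_t > 0$, where $S$ can decrease) do not inflate the amortized sum, which is a careful but routine case analysis.
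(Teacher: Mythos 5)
Your route is genuinely different from the paper's: the paper bounds $\log\wel_t$ via an FTL-regret inequality for the strongly-convex surrogate losses $\psi_i(\beta)=\tilde g_i\beta+C^2\beta^2$ and then uses the (un)clamped status of $\beta_{t+1}$ to control $\sum\tilde g_i$, whereas you attempt a direct wealth-increment amortization by induction. Several of your intermediate steps are correct and in fact slicker than the paper's: dropping the two nonpositive terms in (\ref{eq:wel_update}) to get $\wel_t\le(1-\tilde g_t\beta_t)\wel_{t-1}$; the observation that on a gain round ($\tilde g_t<0$) the surrogate condition $g_t\tilde x_t\ge g_tx_t$ combined with $x_t=\Pi_{\V_{1d}}(\tilde x_t)\le\tilde x_t$ forces $\tilde x_t=x_t\le\bar R$ (the paper proves the weaker $\tilde x_t\le 2\sqrt2\bar R$ for all rounds through a longer case analysis); and the small-wealth case $\wel_{t-1}\le C\bar R\sqrt{2(t-1)}$.

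The gap is the large-wealth amortization. You claim that, summed over unclamped growth rounds, the total wealth gain $\sum\bar R(-\tilde g_t)=\bar R\sum\Delta S_t$ is bounded by $\bar R$ times the supremum of $S_{t-1}$ over such rounds. That implication is false: growth rounds need not be consecutive, so $S$ can fall between them (whenever $\tilde g_t>0$), and the sum of the \emph{positive} increments of $S$ restricted to growth rounds can exceed $\sup_t S_t$ by an unbounded factor. As a concrete obstruction, if from some point on the surrogate gradients alternate $-G,+G,-G,+G,\ldots$ then $S$ oscillates between two fixed values (so $\sup S=O(G)$), yet $\sum_{\text{growth rounds}}(-\tilde g_t)$ grows linearly in the number of rounds. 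What actually keeps $\wel_t$ from blowing up in that scenario is that the true increment is $(-\tilde g_t)\tilde x_t=(-\tilde g_t)\beta_t\wel_{t-1}$ with $\beta_t=S_{t-1}/(2C^2(t-1))\to 0$; passing to the crude envelope $\bar R|\tilde g_t|$ discards precisely the $\beta_t$-factor that makes the increments summable. To close the argument you would need to keep the multiplicative structure---i.e.\ control $\sum\tilde g_i\beta_i$ rather than $\sum(-\tilde g_i)^+$---which is exactly what the paper's FTL bound on $\log\wel_t$ accomplishes, and it does not appear recoverable within the additive amortization you sketch.
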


\begin{proof}[Proof of Lemma~\ref{lemma:wealthupper}]Note that from Lemma~\ref{lemma:unique_solution}, $\wel_t\geq 0$. Additionally from our definition of $\beta_t$, we have $\beta_t,x_t,\tilde x_t\geq 0$.

We prove this lemma in three steps. First, we show a weaker result, $\wel_t\leq G\bar R(t+1)$. Using this result, we then prove that $\tilde x_t\leq 2\sqrt{2}\bar R$. In other words, even though $\tilde x_t$ is the output of a coin-betting-based OLO algorithm that works in the unbounded domain, \emph{it is actually bounded} due to the effect of the surrogate losses. Finally, we revisit wealth and show that $\wel_t\leq 4\bar RC\sqrt{t}$. 

\paragraph{Step 1}Prove that for all $t\geq 0$, $\wel_t\leq G\bar R(t+1)$. \bigskip

Consider the two cases in the definition of $\tilde g_t$. If $g_t\tilde x_t\geq g_tx_t$, then $\tilde g_t=g_t$, and
\begin{align*}
\wel_t&=\wel_{t-1}-\tilde g_t\tilde x_t-\lambda|\tilde x_t-\tilde x_{t+1}|-\frac{\gamma}{\sqrt{t}}|\tilde x_t|\\
&\leq \wel_{t-1}-g_tx_t\leq \wel_{t-1}+|g_t|\bar R.
\end{align*}
If $g_t\tilde x_t< g_tx_t$, then $\tilde g_t=0$ and $\wel_t\leq \wel_{t-1}$. An induction and $\eps\leq G\bar R$ yield the result. 

\paragraph{Step 2}Prove that for all $t\geq 1$, $\tilde x_t\leq 2\sqrt{2}\bar R$. \bigskip

This holds trivially for $t=1$. We use induction: suppose this result holds for $t$, and we need to show $\tilde x_{t+1}\leq 2\sqrt{2}\bar R$. There are two cases: (1) $\tilde x_t\notin \V_{1d}$; (2) $\tilde x_t\in \V_{1d}$. Note that the first case is only possible when $t>1$. 

\begin{itemize}[itemindent=25pt]
\item[Case (1.1)]  $\tilde x_t\notin \V_{1d}$, $g_t\tilde x_t\geq g_tx_t$. 

In this case, $\tilde g_t=g_t\geq 0$ and $g_tx_t\geq 0$. It follows,
\begin{equation*}
\wel_t\leq \wel_{t-1}-g_tx_t\leq \wel_{t-1}. 
\end{equation*}

Next we consider the three cases of $\beta_t$.

(i) First, note that $\beta_t\neq 0$; otherwise $\tilde x_t=\beta_t\wel_{t-1}=0\in\V_{1d}$. 

(ii) If $\beta_t=\hat\beta_t=-\sum_{i=1}^{t-1}\tilde g_i/[2C^2(t-1)]$, then
\begin{equation*}
\beta_{t+1}\leq \abs{\hat\beta_{t+1}}=\frac{1}{2C^2t}\left|-\sum_{i=1}^t\tilde g_i\right|=\frac{\left|2C^2(t-1)\beta_{t}-g_t\right|}{2C^2t}\leq \max\left\{\frac{t-1}{t}\beta_t,\frac{g_t}{2C^2t}\right\}.
\end{equation*}
The last inequality is due to $\beta_t,g_t\geq 0$. Therefore, 
\begin{equation*}
\tilde x_{t+1}=\beta_{t+1}\wel_t\leq \max\left\{\beta_t\wel_{t-1},G\wel_{t-1}/(2C^2t)\right\}\leq \max\{2\sqrt{2}\bar R,G^2\bar R/(2C^2)\}\leq 2\sqrt{2}\bar R,
\end{equation*}
where we use the result from Step 1. 

(iii) If $\beta_t=1/(C\sqrt{2(t-1)})$, then
\begin{equation*}
\tilde x_{t+1}=\beta_{t+1}\wel_t\leq \frac{1}{C\sqrt{2t}}\wel_{t-1}\leq\frac{1}{C\sqrt{2(t-1)}}\wel_{t-1}=\beta_t\wel_{t-1}\leq 2\sqrt{2}\bar R.
\end{equation*}
\end{itemize}

\begin{itemize}[itemindent=33pt]
\item[Case (1.2)]$\tilde x_t\notin \V_{1d}$, $g_t\tilde x_t< g_tx_t$. 

In this case, $\tilde g_t=0$ and $\wel_t\leq \wel_{t-1}$. Same as Case (1.1), $\beta_t\neq 0$, leading to $\hat\beta_t\geq 0$ and $\beta_t=\min\{\hat\beta_t,1/(C\sqrt{2(t-1)})\}$. Also note that
\begin{equation*}
\abs{\hat\beta_{t+1}}=\frac{1}{2C^2t}\left|-\sum_{i=1}^t\tilde g_i\right|=\frac{1}{2C^2t}\left|-\sum_{i=1}^{t-1}\tilde g_i\right|\leq \frac{1}{2C^2(t-1)}\left|-\sum_{i=1}^{t-1}\tilde g_i\right|=\abs{\hat\beta_t}. 
\end{equation*}
Therefore, 
\begin{equation*}
\beta_{t+1}\leq\min\left\{\abs{\hat\beta_{t+1}},\frac{1}{C\sqrt{2t}}\right\}\leq \min\left\{\abs{\hat\beta_t},\frac{1}{C\sqrt{2(t-1)}}\right\}=\beta_t,
\end{equation*}
and $\tilde x_{t+1}=\beta_{t+1}\wel_t\leq \beta_t\wel_{t-1}\leq\tilde x_t\leq 2\sqrt{2}\bar R$. 
\end{itemize}

\begin{itemize}[itemindent=25pt]
\item[Case (2)]$\tilde x_t\in \V_{1d}$. 

In this case, $\tilde x_t=x_t$ and $\tilde g_t=g_t$. $\tilde x_{t+1}=\beta_{t+1}\wel_t\leq(1-g_t\beta_t)\beta_{t+1}\wel_{t-1}$. 

If $t=1$, then $\tilde x_{t+1}=\beta_{t+1}\wel_t\leq \sqrt{2}G\bar R/C\leq \sqrt{2}\bar R$, where we use $\wel_1\leq 2G\bar R$ from Step 1 and $\beta_{2}\leq 1/(\sqrt{2}C)$. 

If $t>1$, we consider the three cases of $\beta_t$ as follows. (For the rest of the discussion assume $t>1$.)

(i) If $\beta_t=0$, then from Lemma~\ref{lemma:b_fraction} we have $\beta_{t+1}\leq 2/(Ct)$, and $\tilde x_{t+1}\leq(1-g_t\beta_t)\beta_{t+1}\wel_{t-1}=\beta_{t+1}\wel_{t-1}\leq 2G\bar R/C\leq 2\bar R$. 

(ii) If $\beta_t=\hat\beta_t=-\sum_{i=1}^{t-1}\tilde g_i/[2C^2(t-1)]$, then
\begin{equation*}
\beta_{t+1}\leq \abs{\hat\beta_{t+1}}=\frac{1}{2C^2t}\left|-\sum_{i=1}^t\tilde g_i\right|=\frac{\left|2C^2(t-1)\beta_{t}-g_t\right|}{2C^2t}\leq \frac{t-1}{t}\beta_t+\frac{G}{2C^2t}.
\end{equation*}
Note that since $\tilde x_t\in \V_{1d}$, we have $\beta_t\wel_{t-1}\leq \bar R$. Using $\tilde x_{t+1}\leq(1-g_t\beta_t)\beta_{t+1}\wel_{t-1}$ and $|g_t\beta_t|\leq 1/2$ we have
\begin{equation*}
\tilde x_{t+1}\leq \frac{3}{2}\left(\frac{t-1}{t}\beta_t\wel_{t-1}+\frac{G}{2C^2t}\wel_{t-1}\right)
\leq \frac{3}{2}\left(1+\frac{G^2}{2C^2}\right)\bar R\leq2\sqrt{2}\bar R. 
\end{equation*}

(iii) If $\beta_t=1/(C\sqrt{2(t-1)})$, then
\begin{equation*}
\beta_{t+1}\leq 1/(C\sqrt{2t})\leq 1/(C\sqrt{2(t-1)})=\beta_t, 
\end{equation*}
\begin{equation*}
\tilde x_{t+1}\leq(1-g_t\beta_t)\beta_{t+1}\wel_{t-1}\leq 2\beta_t\wel_{t-1}\leq 2\bar R.
\end{equation*}
\end{itemize}

\paragraph{Step 3}Prove that for all $t\geq 1$, $\wel_t\leq 4\bar RC\sqrt{t}$. \bigskip

Considering $\beta_{t+1}$, there are three cases: (1) $\beta_{t+1}=1/(C\sqrt{2t})$; (2) $\beta_{t+1}=\hat\beta_{t+1}$; and (3) $\beta_{t+1}=0$. For the first case, this result follows from $\tilde x_{t+1}=\beta_{t+1}\wel_t\leq 2\sqrt{2}\bar R$. Now consider the second case. 
\begin{align*}
\log\wel_t&\leq\log\eps+\sum_{i=1}^t\log(1-\tilde g_i \beta_i)\\
&\leq \log\eps-\sum_{i=1}^t\tilde g_i \beta_i\\
&= \log\eps-\sum_{i=1}^t\rpar{\tilde g_i \beta_i+C^2\beta_i^2}+C^2\sum_{i=1}^t\beta_i^2.
\end{align*}

$\beta_t$ is the output of Follow the Leader (FTL) on the strongly convex losses $\psi_t(\beta)=\tilde g_t \beta+C^2\beta^2+I\{0\leq\beta\leq 1/(C\sqrt{2t})\}(\beta)$, where $I\{0\leq\beta\leq 1/(C\sqrt{2t})\}(\beta)$ is a convex function of $\beta$ that equals 0 when $0\leq\beta\leq 1/(C\sqrt{2t})$ and infinity otherwise. Therefore we can use standard FTL results to show that the regret is non-negative. 

Let $F_t(\beta)=\sum_{i=1}^{t-1}\psi_i(\beta)$, then $\beta_t\in \argmin F_t(\beta)$. From Lemma 7.1 of \citep{orabona2019modern}, for any $u\in\R$, 
\begin{equation*}
\sum_{i=1}^t\left[\psi_i(\beta_i)-\psi_i(u)\right]=\sum_{i=1}^{t}\left[F_i(\beta_i)-F_{i+1}(\beta_{i+1})+\psi_i(\beta_i)\right]+F_{t+1}(\beta_{t+1})-F_{t+1}(u).
\end{equation*}
Note that if $u=\beta_{t+1}$, we have $\rhs\geq 0$. Therefore, 
\begin{align*}
\log\wel_t&\leq \log\eps-\min_{0\leq\beta\leq 1/(C\sqrt{2t})}\sum_{i=1}^t\rpar{\tilde g_i \beta+C^2 \beta^2}+C^2\sum_{i=1}^t\beta_i^2\\
&\leq \log\eps-\min_{\beta\in\R}\sum_{i=1}^t\rpar{\tilde g_i \beta+C^2\beta^2}+C^2\sum_{i=1}^t\beta_i^2\\
&\leq \log\eps+\frac{\left(\sum_{i=1}^t\tilde g_i\right)^2}{4C^2t}+\frac{1}{2}\sum_{\tau=1}^{t-1}\tau^{-1}.
\end{align*}
The last term is bounded by $(1+\log t)/2$. From the assumption of the second case, $|\sum_{i=1}^t\tilde g_i|< C\sqrt{2t}$. Combining everything we have $\log\wel_t\leq 1+\log\eps+(\log t)/2$ and $\wel_t\leq e\eps\sqrt{t}\leq e\bar RC\sqrt{t}$. 

Finally consider the third case. Same as the above, we have
\begin{equation*}
\log\wel_t\leq \log\eps-\min_{0\leq\beta\leq 1/(C\sqrt{2t})}\sum_{i=1}^t\rpar{\tilde g_i \beta+C^2 \beta^2}+C^2\sum_{i=1}^t\beta_i^2.
\end{equation*}
Since $\beta_{t+1}=0$, we have $\sum_{i=1}^t\tilde g_i\geq 0$. Therefore, 
\begin{equation*}
\log\wel_t\leq \log\eps+C^2\sum_{i=1}^t\beta_i^2\leq \log\eps+\frac{1}{2}(1+\log t), 
\end{equation*}
and $\wel_t\leq \sqrt{e}\bar RC\sqrt{t}$. 
\end{proof}

\subsubsection{Proof of Theorem~\ref{thm:1d}}

Now we are ready to prove Theorem~\ref{thm:1d}, the performance guarantee of Algorithm~\ref{algorithm:1d}. This is our first main theoretical result. 

\oneD*

\begin{proof}[Proof of Theorem~\ref{thm:1d}]
We prove the two parts of Theorem~\ref{thm:1d} separately, starting from the second part. 

Combining Lemma~\ref{lemma:perstep} and Lemma~\ref{lemma:wealthupper}, for all $t\geq 2$, 
\begin{equation*}
\left|\tilde x_t-\tilde x_{t+1}\right|\leq \frac{6}{Ct}\cdot 4\bar RC\sqrt{t-1}\leq 24\bar R\frac{1}{\sqrt{t}}.
\end{equation*}
For $t=1$, the same result can be verified. Therefore, for all $[a:b]\subset[1:T]$, 
\begin{equation*}
\sum_{t=a}^b\abs{x_t-x_{t+1}}\leq 24\bar R\sum_{t=a}^b\frac{1}{\sqrt{t}}\leq 24\bar R\int_{a-1}^b\frac{1}{\sqrt{x}}dx\leq 24\bar R\rpar{2\sqrt{b}-2\sqrt{a-1}}\leq 48\bar R\sqrt{b-a+1}.
\end{equation*}
The fourth inequality is due to $\sqrt{b}-\sqrt{a-1}\leq \sqrt{b-a+1}$. 

Now consider the proof of the first part of the theorem. Due to the complexity, we proceed in steps. 

\paragraph{Step 1} The overall strategy \bigskip

The considered bound does not rely on the bounded domain, therefore the first step is to apply the reduction from constrained OLO to unconstrained OLO (Lemma~\ref{lemma:constraint}) and the contraction property of Euclidean projection to show that
\begin{equation}
\sum_{t=1}^T\left(g_tx_t-g_tu+\lambda\abs{x_t- x_{t+1}}+\frac{\gamma}{\sqrt{t}}\abs{x_t}\right)\leq \sum_{t=1}^T\left(\tilde g_t\tilde x_t-\tilde g_tu+\lambda\abs{\tilde x_t- \tilde x_{t+1}}+\frac{\gamma}{\sqrt{t}}\abs{\tilde x_t}\right).\label{eq:constraint}
\end{equation}
Note that $\wel_{t-1}$ is positive due to Lemma~\ref{lemma:unique_solution}, and $\beta_t\geq 0$ from our construction. Therefore, $\tilde x_t\geq 0$. From here, we can focus on bounding the RHS of (\ref{eq:constraint}) with $\abs{\tilde x_t}$ replaced by $\tilde x_t$. Also note that $\abs{\tilde g_t}\leq \abs{g_t}\leq G$ from Lemma~\ref{lemma:constraint}. 

From (\ref{eq:wel_update}), we can rewrite wealth as
\begin{equation*}
\wel_T=\eps-\sum_{t=1}^T\rpar{\tilde g_t\tilde x_t+\lambda\abs{\tilde x_t-\tilde x_{t+1}}+\frac{\gamma}{\sqrt{t}}\tilde x_t}. 
\end{equation*}
If we guarantee $\wel_T\geq F(-\sum_{t=1}^T\tilde g_t)$ for an arbitrary function $F$, then
\begin{align*}
\sum_{t=1}^T\left(\tilde g_t\tilde x_t-\tilde g_tu+\lambda\abs{\tilde x_t- \tilde x_{t+1}}+\frac{\gamma}{\sqrt{t}}\tilde x_t\right)&=\eps+\inner{-\sum_{t=1}^T\tilde g_t}{u}-\wel_T\\
&\leq \eps+\inner{-\sum_{t=1}^T\tilde g_t}{u}-F\rpar{-\sum_{t=1}^T\tilde g_t}\\
&\leq \eps+\sup_{X\in\R}\rpar{\inner{X}{u}-F\rpar{X}}=\eps+F^*(u),
\end{align*}
where $F^*$ is the Fenchel conjugate of $F$.
Therefore, our goal is to find such an lower bound for $\wel_T$, and then take its Fenchel conjugate. 

\paragraph{Step 2} Recursion on the wealth update \bigskip

Now consider (\ref{eq:wel_update}). There are two cases: (i) $\beta_t\wel_{t-1}\geq\beta_{t+1}\wel_t$; (ii) $\beta_t\wel_{t-1}<\beta_{t+1}\wel_t$. If $\beta_t\wel_{t-1}\geq\beta_{t+1}\wel_t$, then
\begin{equation*}
(1-\lambda\beta_{t+1})\wel_t=(1-\tilde g_t\beta_t-\lambda\beta_t-\gamma\beta_t/\sqrt{t})\wel_{t-1},
\end{equation*}
\begin{equation*}
\log\wel_t=\log\wel_{t-1}+\log[1-\beta_t(\tilde g_t+\lambda+\gamma/\sqrt{t})]-\log(1-\lambda\beta_{t+1}).
\end{equation*}
Note that $\beta_t|\tilde g_t+\lambda+\gamma/\sqrt{t}|\leq 1/2$ and $\lambda\beta_{t+1}<1$. Applying $\log(1-x)\geq -x-x^2$ for all $x\leq 1/2$ and $\log(1+x)\leq x$ for all $x>1$, we have
\begin{align*}
\log\wel_t&\geq\log\wel_{t-1}-\beta_t(\tilde g_t+\lambda+\gamma/\sqrt{t})-\beta^2_t(\tilde g_t+\lambda+\gamma/\sqrt{t})^2+\lambda\beta_{t+1}\\
&\geq\log\wel_{t-1}-\tilde g_t\beta_t-\gamma\beta_t/\sqrt{t}-C^2\beta^2_t+\lambda(\beta_{t+1}-\beta_t).
\end{align*}
Similarly, if $\beta_t\wel_{t-1}<\beta_{t+1}\wel_t$, then
\begin{equation*}
\log\wel_t\geq\log\wel_{t-1}-\tilde g_t\beta_t-\gamma\beta_t/\sqrt{t}-C^2\beta^2_t+\lambda(\beta_{t}-\beta_{t+1}).
\end{equation*}
Therefore, combining both cases, we have
\begin{equation*}
\log\wel_t\geq\log\wel_{t-1}-\tilde g_t\beta_t-\gamma\beta_t/\sqrt{t}-C^2\beta^2_t+\lambda|\beta_{t}-\beta_{t+1}|,
\end{equation*}
and summed over $[1:T]$, 
\begin{equation}
\log\wel_T\geq \log\eps-\sum_{t=1}^T\tilde g_t\beta_t-C^2\sum_{t=1}^T\beta^2_t-\gamma\sum_{t=1}^T\frac{\beta_t}{\sqrt{t}}-\lambda\sum_{t=1}^T|\beta_{t}-\beta_{t+1}|.\label{eq:log_wealth}
\end{equation}

\paragraph{Step 3} Bounding the sums on the RHS of (\ref{eq:log_wealth})\bigskip

We start from the first two sums on the RHS of (\ref{eq:log_wealth}). $\beta_t$ is the output of Follow the Leader (FTL) on the strongly convex losses $\psi_t(\beta)=\tilde g_t \beta+C^2\beta^2+I\{0\leq\beta\leq 1/(C\sqrt{2t})\}(\beta)$, where $I\{0\leq\beta\leq 1/(C\sqrt{2t})\}(\beta)$ is a convex function of $\beta$ that equals 0 when $0\leq\beta\leq 1/(C\sqrt{2t})$ and infinity otherwise. Note that $\psi_t$ is $2C^2$-strongly convex, therefore a standard result shows that the regret of this FTL problem is logarithmic in $T$. Concretely, from Corollary 7.17 of \citep{orabona2019modern}, 
\begin{equation*}
\sum_{t=1}^T\rpar{\tilde g_t\beta_t+C^2\beta_t^2}-\min_{0\leq u\leq 1/(C\sqrt{2T})}\sum_{t=1}^T\rpar{\tilde g_tu+C^2u^2}\leq \frac{G^2}{4C^2}\rpar{1+\log T}. 
\end{equation*}
Moreover, taking $u=1/(C\sqrt{2T})$,
\begin{equation*}
\min_{0\leq u\leq 1/(C\sqrt{2T})}\sum_{t=1}^T\rpar{\tilde g_tu+C^2u^2}\leq \frac{\sum_{t=1}^T\tilde g_t}{C\sqrt{2T}}+\frac{1}{2}.
\end{equation*}

As for the other sums in (\ref{eq:log_wealth}), 
\begin{equation*}
\sum_{t=1}^T\frac{\beta_t}{\sqrt{t}}=\frac{1}{\sqrt{2}C}\sum_{t=1}^T\frac{1}{t}\leq \frac{1}{\sqrt{2}C}(1+\log T).
\end{equation*}
Applying Lemma~\ref{lemma:b_fraction}, 
\begin{equation*}
\sum_{t=1}^T|\beta_{t}-\beta_{t+1}|\leq\frac{2}{C}\sum_{t=1}^T\frac{1}{t}\leq \frac{2}{C}(1+\log T).
\end{equation*}
Plugging the above into (\ref{eq:log_wealth}),
\begin{equation*}
\log\wel_T\geq \log\eps-\frac{\sum_{t=1}^T\tilde g_t}{C\sqrt{2T}}-2(1+\log T)-\frac{1}{2},
\end{equation*}
\begin{equation*}
\wel_T\geq \frac{\eps}{\exp(5/2)\cdot T^2}\exp\rpar{-\frac{\sum_{t=1}^T\tilde g_t}{C\sqrt{2T}}}.
\end{equation*}

\paragraph{Step 4} Taking Fenchel conjugate

From the Fechel conjugate table, if $f(x)=a\exp(bx)$ with $a,b>0$, then for all $\theta\geq 0$,
\begin{equation*}
f^*(\theta)=\frac{\theta}{b}\rpar{\log\frac{\theta}{ab}-1}. 
\end{equation*}
Applying this result on
\begin{equation*}
F(x)=\frac{\eps}{\exp(5/2)\cdot T^2}\exp\rpar{\frac{x}{C\sqrt{2T}}}, 
\end{equation*}
for all $u\geq 0$ we have
\begin{equation*}
F^*(u)=uC\sqrt{2T}\rpar{\frac{3}{2}+\log\frac{\sqrt{2}uCT^{5/2}}{\eps}}.
\end{equation*}
Combining the above with Step 1 completes the proof. 
\end{proof}

\subsection{Analysis of Algorithm~\ref{algorithm:higherd}}\label{subsection:alg2_analysis}

Algorithm~\ref{algorithm:higherd} extends the one-dimensional coin-betting-based OLO algorithm to higher dimensions via a polar decomposition. Here we incorporate movement cost into the analysis of \citep{cutkosky2018black}. 

\begin{theorem}\label{thm:higherd} For all $\lambda\geq 0$, $G>0$ and $0<\eps\leq GR$, applying Algorithm~\ref{algorithm:higherd} yields the following performance guarantee: 
\begin{enumerate}
\item For all $T\in\N_+$ and $u\in\ball^d(0,R)$, 
\begin{equation*}
\sum_{t=1}^T \inner{g_t}{x_t-u} +\lambda\sum_{t=1}^{T-1}\norm{x_t- x_{t+1}}\leq \eps+\norm{u}\tilde O\spar{(G+\lambda)\sqrt{T}}, 
\end{equation*}
where $\tilde O(\cdot)$ subsumes logarithmic factors on $u$, $G$, $\lambda$, $T$ and $\eps^{-1}$.
\item For all $b\geq a\geq 1$, 
\begin{equation*}
\sum_{t=a}^{b-1}\norm{x_t-x_{t+1}}\leq 50R\sqrt{b-a}.
\end{equation*}
\end{enumerate}
\end{theorem}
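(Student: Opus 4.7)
To prove Theorem~\ref{thm:higherd}, the plan is to exploit the polar decomposition built into Algorithm~\ref{algorithm:higherd}, following the black-box reduction from unconstrained OLO on $\ball^d(0,R)$ to the product of a 1D magnitude learner and a direction learner on the unit ball. The key identity I would use is
\begin{equation*}
\langle g_t, x_t - u\rangle = (y_t - \norm{u})\langle g_t, z_t\rangle + \norm{u}\langle g_t, z_t - v\rangle,
\end{equation*}
where $v = u/\norm{u}$ (and $v = 0$ if $u = 0$). This splits the regret into a 1D OLO regret handled by $\A_r$ and a standard OGD regret on $\ball^d(0,1)$ handled by $\A_B$.

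For Part 1, I would first invoke Theorem~\ref{thm:1d} Part~1 on $\A_r$ with losses $\langle g_t, z_t\rangle \in [-G,G]$, movement weight $\lambda$, regularization weight $\gamma = \lambda$, and 1D comparator $\norm{u} \in [0,R]$, which bounds $\sum_t (y_t - \norm{u})\langle g_t, z_t\rangle + \lambda \sum_t \abs{y_t - y_{t+1}} + \lambda \sum_t y_t/\sqrt{t}$ by $\eps + \norm{u}\tilde O((G+\lambda)\sqrt{T})$. Second, the standard OGD regret on $\A_B$ (with learning rate $1/(G\sqrt{t})$) gives $\sum_t \langle g_t, z_t - v\rangle \leq O(G\sqrt{T})$, contributing $\norm{u}\cdot O(G\sqrt{T})$ after multiplication. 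To convert the higher-dimensional movement into these 1D/direction ingredients, I would use the triangle-inequality split
\begin{equation*}
\norm{x_t - x_{t+1}} \leq \abs{y_t - y_{t+1}} + y_{t+1}\norm{z_t - z_{t+1}},
\end{equation*}
bound $\norm{z_t - z_{t+1}} \leq \eta_t\norm{g_t} \leq 1/\sqrt{t}$ from the OGD update, and observe that $\sum_t y_{t+1}/\sqrt{t}$ is, up to an index shift and a bounded residual $y_{T+1} \leq R$, essentially $\sum_t y_t/\sqrt{t}$, which is precisely the regularization-controlled term supplied by $\A_r$.

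For Part 2, the same triangle split reduces the higher-dimensional movement on $[a:b]$ to the 1D movement of $\A_r$ and the OGD movement. Part~2 of Theorem~\ref{thm:1d} directly yields $\sum_{t=a}^{b-1}\abs{y_t - y_{t+1}} \leq 48R\sqrt{b-a+1}$, and the OGD bound $\norm{z_t - z_{t+1}} \leq 1/\sqrt{t}$ summed gives $\sum_{t=a}^{b-1} 1/\sqrt{t} \leq 2\sqrt{b-a+1}$, which (scaled by $y_{t+1} \leq R$) contributes another $2R\sqrt{b-a+1}$. Adding the two yields the claimed $50R\sqrt{b-a}$, after absorbing the trivial $+1$ inside the square root.

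The main obstacle I anticipate is the subtle coupling between the $\A_r$-regularization term $\lambda \sum_t y_t/\sqrt{t}$ and the direction-learner's movement contribution $\lambda \sum_t y_{t+1}\norm{z_t - z_{t+1}}$. The choice $\gamma = \lambda$ in Algorithm~\ref{algorithm:higherd} is precisely what aligns these two terms: the magnitude learner's $1/\sqrt{t}$-rate penalty on large predictions exactly matches the per-round displacement of OGD on the unit ball, so the movement generated by $\A_B$ can be absorbed into the slack that $\A_r$ already pays for. The bookkeeping around the $y_{t+1}$ versus $y_t$ index shift, and making sure the residual $y_{T+1}$ folds into a log factor rather than a new $\sqrt{T}$ term, is the delicate part that distinguishes this analysis from the earlier polar-decomposition proof in \cite{cutkosky2018black}.
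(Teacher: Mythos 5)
Your proposal is correct and follows essentially the same route as the paper: the same regret identity splitting into a magnitude term for $\A_r$ (with 1D comparator $\norm{u}$) and a direction term for $\A_B$, the same triangle-inequality split of the movement, and the same observation that the $\gamma=\lambda$ regularization in $\A_r$ absorbs the OGD displacement. The only cosmetic difference is your choice of the split $x_t-x_{t+1}=(y_t-y_{t+1})z_t+y_{t+1}(z_t-z_{t+1})$ rather than the paper's $(y_t-y_{t+1})z_{t+1}+y_t(z_t-z_{t+1})$; the latter produces a $\lambda y_t/\sqrt{t}$ term that matches the $\A_r$ regularization exactly without the index-shift bookkeeping you flag, and in Part~2 the tighter accounting gives $48R\sqrt{b-a}+2R\sqrt{b-a}=50R\sqrt{b-a}$ directly without needing to ``absorb the $+1$.''
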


\begin{proof}[Proof of Theorem~\ref{thm:higherd}]
We only consider the case of $u\neq 0$. If $u=0$, the result can be easily verified. Notice that $|\langle g_t,z_t\rangle|\leq G$, therefore we can apply Theorem~\ref{thm:1d} on $\A_r$. 
\begin{align}
\nonumber&\sum_{t=1}^T\inner{g_t}{y_tz_t-u}+\lambda\sum_{t=1}^{T-1}\norm{y_tz_t- y_{t+1}z_{t+1}}\\
\nonumber\leq~&\sum_{t=1}^T\rpar{\inner{g_t}{z_t} y_t-\inner{g_t}{z_t} \norm{u}}+\norm{u}\sum_{t=1}^T\inner{g_t}{z_t-\frac{u}{\norm{u}}}+\lambda\sum_{t=1}^{T-1}\abs{y_t- y_{t+1}}\norm{z_{t+1}}+\lambda\sum_{t=1}^{T-1}\norm{z_t-z_{t+1}}\abs{y_{t}}\\
\leq~&\sum_{t=1}^T\rpar{\inner{g_t}{z_t} y_t-\inner{g_t}{z_t} \norm{u}}+\lambda\sum_{t=1}^{T-1}\abs{y_t- y_{t+1}}+\sum_{t=1}^{T-1}\frac{\lambda}{\sqrt{t}}y_{t}+\norm{u}\sum_{t=1}^T\inner{g_t}{z_t-\frac{u}{\norm{u}}}.\label{eq:two_parts}
\end{align}
The last inequality is due to $\norms{z_{t+1}}\leq 1$ and $\norms{z_t-z_{t+1}}\leq \eta_t G=1/\sqrt{t}$. 

The first three terms of (\ref{eq:two_parts}) are bounded by Theorem~\ref{thm:1d}, 
\begin{multline*}
\sum_{t=1}^T\rpar{\inner{g_t}{z_t} y_t-\inner{g_t}{z_t} \norm{u}}+\lambda\sum_{t=1}^{T-1}\abs{y_t- y_{t+1}}+\sum_{t=1}^{T-1}\frac{\lambda}{\sqrt{t}}y_{t}\\
\leq \eps+ \norm{u}(G+2\lambda)\sqrt{2T}\rpar{\frac{3}{2}+\log\frac{\sqrt{2}\norm{u}(G+2\lambda)T^{5/2}}{\eps}}.
\end{multline*}
As for the last term of (\ref{eq:two_parts}), we can use the standard OGD regret bound. From Section~4.2.1 of \citep{orabona2019modern}, 
\begin{equation*}
\sum_{t=1}^T\inner{g_t}{z_t-\frac{u}{\norm{u}}}\leq \frac{3}{2}G\sqrt{T}.
\end{equation*}
Combining everything so far yields the first part of the theorem. 

As for the second part of the theorem, for all $b\geq a\geq 1$, 
\begin{equation*}
\sum_{t=a}^{b-1}\norm{x_t-x_{t+1}}\leq \sum_{t=a}^{b-1}\rpar{\abs{y_t-y_{t+1}}+\frac{R}{\sqrt{t}}}\leq 50R\sqrt{b-a}.
\end{equation*}
The last inequality is due to Theorem~\ref{thm:1d} and $\sum_{t=a}^{b-1}1/\sqrt{t}\leq 2\sqrt{b-a}$.
\end{proof}

\section{Details on strongly adaptive OCOM}\label{section:proofocom}

This section provides detailed analysis of our strongly adaptive OCOM algorithm. We first present the performance guarantees of our subroutines based on Algorithm~\ref{algorithm:adagradient}. Then, we introduce the complete version of our meta-algorithm (Algorithm~\ref{algorithm:meta}) and present its analysis. 

\subsection{Analysis of Algorithm~\ref{algorithm:adagradient}}

Algorithm~\ref{algorithm:adagradient} is used to define our two-part subroutine (on GC intervals). The idea of adaptively slowing down the base algorithm is inspired by Algorithm~7 of \citep{cutkosky2018algorithms} for memoryless OLO. Here we make two improvements: (i) incorporating movement costs; (ii) using this framework to achieve better dependence on problem constants. 

\begin{theorem}\label{thm:subroutineoned} For all $\lambda,G> 0$ and $0<\eps\leq G$, Subroutine-1d defined from Algorithm~\ref{algorithm:adagradient} yields the following performance guarantee: 
\begin{enumerate}
\item For all $T\in\N_+$ and $u\in[0,1]$, 
\begin{equation*}
\sum_{t=1}^T g_t(x_t-u) +\lambda\sum_{t=1}^{T-1}\abs{x_t- x_{t+1}}\leq\eps+\abs{u}\tilde O\rpar{\max\{\lambda,G\}+\sqrt{\max\{\lambda,G\}\sum_{t=1}^T\abs{g_t}}}, 
\end{equation*}
where $\tilde O(\cdot)$ subsumes logarithmic factors on $u$, $G$, $\lambda$, $T$ and $\eps^{-1}$.
\item For all $b\geq a\geq 1$, 
\begin{equation*}
\sum_{t=a}^{b-1}\norm{x_t-x_{t+1}}\leq 48\rpar{1+\sqrt{\frac{\sum_{t=a}^{b-1}\abs{g_t}}{\max\{\lambda,G\}}}}.
\end{equation*}
\end{enumerate}
\end{theorem}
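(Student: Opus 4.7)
The plan is to reduce the analysis of Subroutine-1d to Theorem~\ref{thm:1d} applied to its base algorithm $\A$ (Algorithm~\ref{algorithm:1d}), using the batching of Algorithm~\ref{algorithm:adagradient} as the interface between the original timescale (indexed by $t$) and $\A$'s internal clock (indexed by $i$). Write $M = \max\{\lambda,G\}$, let $N$ be the number of completed batches during $[1:T]$, and let $w_i$ denote $\A$'s $i$-th prediction. The threshold rule in Line~\ref{line:ada_threshold} immediately yields two invariants: each completed accumulator satisfies $M < |Z_i| \le M + G$ (so the Lipschitz constant $M+G$ fed into $\A$ is valid), and $\sum_{i=1}^N |Z_i| \le \sum_{t=1}^T |g_t|$ by the triangle inequality. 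Together these give the crucial count $N \le \sum_{t=1}^T |g_t|/M$, which is the mechanism that turns the $\sqrt{T}$-type bound for $\A$ into a $\sqrt{\sum_t |g_t|/M}$-type bound in the original timescale.

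For Part~1, I would decompose
\begin{equation*}
\sum_{t=1}^T g_t(x_t - u) \;=\; \sum_{i=1}^N Z_i(w_i - u) \;+\; Z_{N+1}(w_{N+1} - u),
\end{equation*}
where $Z_{N+1}$ is the incomplete trailing accumulator with $|Z_{N+1}| \le M$. The first sum is exactly the OLO regret of $\A$ on $N$ effective rounds against comparator $u$, and since movements in the original timescale occur only at batch boundaries, $\sum_{t=1}^{T-1}|x_t - x_{t+1}| = \sum_{i=1}^{N-1}|w_i - w_{i+1}|$ (up to the boundary term), so the movement penalty lines up with the one controlled inside Theorem~\ref{thm:1d} Part~1. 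Applying that result with $\bar R = 1$, $\gamma = 0$, Lipschitz constant $M + G$, and thus $C = (M+G) + \lambda \le 3M$, then substituting $\sqrt{N} \le \sqrt{\sum_t|g_t|/M}$ and folding logarithmic factors into $\tilde O(\cdot)$, produces the principal term $|u|\,\tilde O(\sqrt{M\sum_t|g_t|})$. The trailing term is bounded crudely by $|Z_{N+1}(w_{N+1}-u)| \le M(1+|u|)$, which accounts for the $|u|\,\tilde O(M)$ summand in the target bound.

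For Part~2, the movements of $x_t$ inside $[a:b-1]$ correspond bijectively to the batches $i_0,\ldots,i_0+K-1$ that close within this interval, i.e.\ $\sum_{t=a}^{b-1}|x_t-x_{t+1}| = \sum_{i=i_0}^{i_0+K-1}|w_i-w_{i+1}|$. Every such batch except possibly the first one, $i_0$, both starts and ends inside $[a:b-1]$ and therefore accumulates strictly more than $M$ of gradient mass drawn from the \emph{local} window; hence $(K-1)M < \sum_{t=a}^{b-1}|g_t|$, and in particular
\begin{equation*}
\sqrt{K} \;\le\; \sqrt{1 + \sum_{t=a}^{b-1}|g_t|/M} \;\le\; 1 + \sqrt{\sum_{t=a}^{b-1}|g_t|/M}.
\end{equation*}
Applying Theorem~\ref{thm:1d} Part~2 to $\A$ on its internal interval $[i_0:i_0+K-1]$ (with $\bar R = 1$) then yields the stated bound with the constant $48$ preserved.

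I expect the main obstacle to be the batch-counting in Part~2: one must argue that the \emph{interior} batches each consume more than $M$ of \emph{local} gradient mass, so that the resulting $\sqrt{\cdot}$ term depends on $\sum_{t=a}^{b-1}|g_t|$ rather than the global $\sum_{t=1}^T|g_t|$, and that treating the initial boundary batch separately is what produces the additive ``$1+$'' in the bound. The analogous care in Part~1, verifying that the trailing partial batch contributes only $O(M(1+|u|))$ and hence fits inside the $|u|\,\tilde O(M)$ slot, is the other nontrivial bookkeeping step; the remaining work is a direct substitution into Theorem~\ref{thm:1d}.
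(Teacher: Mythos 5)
Your batching argument (the invariants $M < |Z_i| \le M+G$ for completed batches and $i_T \le 1+\sum_t|g_t|/M$, and the change of timescale from $t$ to $i$) is exactly the mechanism the paper uses, and your Part~2 is essentially the paper's proof: count the $K-1$ batches that both open and close inside $[a:b-1]$, each of which consumes $>M$ of local gradient mass, then apply Part~2 of Theorem~\ref{thm:1d} with $\bar R=1$ on the internal clock and use $\sqrt{1+z}\le 1+\sqrt z$.

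There is, however, a genuine gap in your Part~1. You split off the trailing incomplete accumulator and bound it by $|Z_{N+1}(w_{N+1}-u)|\le M(1+|u|)$, then claim this fits inside the $|u|\tilde O(M)$ slot. It does not: the additive $M$ is \emph{not} multiplied by $|u|$, so when $u=0$ your route yields a bound of $\epsilon+M$ for $\sum_t g_t x_t+\lambda\sum_t|x_t-x_{t+1}|$, whereas the theorem asserts $\le\epsilon$. Since $\epsilon\le G\le M$, this extra $M$ cannot be absorbed into the budget, and the resulting statement is strictly weaker than the claimed one. The fix is to \emph{not} treat the trailing batch separately: Theorem~\ref{thm:1d} Part~1, applied with time horizon $T'=N+1$ on the internal clock, already absorbs $Z_{N+1}$ as a valid $(N+1)$-th loss, because the prediction $w_{N+1}$ depends only on $Z_1,\dots,Z_N$ and the Lipschitz requirement $|Z_{N+1}|\le M< M+G$ is met; the one-step-past-$T'$ movement term $\lambda|w_{N+1}-w_{N+2}|$ that appears in the theorem's left-hand side is nonnegative and can simply be dropped. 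This gives exactly $\epsilon+|u|\tilde O(C\sqrt{N+1})$ with no leftover additive constant, which is how the paper proceeds (it applies the base-algorithm bound over all $i_T$ batches including the partial one, then substitutes $i_T\le 1+\sum_t|g_t|/M$ and $C\le 3M$).
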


\begin{theorem}\label{thm:subroutineball} 
For all $\lambda,G> 0$ and $0<\eps\leq GR$, Subroutine-ball defined from Algorithm~\ref{algorithm:adagradient} yields the following performance guarantee: 
\begin{enumerate}
\item For all $T\in\N_+$ and $u\in\ball^d(0,R)$,
\begin{equation*}
\sum_{t=1}^T \inner{g_t}{x_t-u} +\lambda\sum_{t=1}^{T-1}\norm{x_t- x_{t+1}}\leq \eps+\norm{u}\tilde O\rpar{\max\{\lambda,G\}+\sqrt{\max\{\lambda,G\}\sum_{t=1}^T\norms{g_t}}}, 
\end{equation*}
where $\tilde O(\cdot)$ subsumes logarithmic factors on $u$, $G$, $\lambda$, $T$ and $\eps^{-1}$.
\item For all $b\geq a\geq 1$, 
\begin{equation*}
\sum_{t=a}^{b-1}\norm{x_t-x_{t+1}}\leq 50R\rpar{1+\sqrt{\frac{\sum_{t=a}^{b-1}\norm{g_t}}{\max\{\lambda,G\}}}}.
\end{equation*}
\end{enumerate}
\end{theorem}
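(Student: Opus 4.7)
The plan is to reduce both parts to the guarantees already proved for Algorithm~\ref{algorithm:higherd} (Theorem~\ref{thm:higherd}), exploiting the fact that Algorithm~\ref{algorithm:adagradient} is piecewise constant between queries to the base algorithm $\A$. Let me index the epochs by $i=1,2,\ldots,I_T$ where $I_T$ denotes the total number of queries to $\A$ made by the end of round $T$; within epoch $i$ we have $x_t\equiv w_i$ and the accumulator evolves as $Z_i=\sum_{t\in E_i}g_t$, where $E_i$ is the set of rounds in epoch $i$. By construction, every completed epoch (all but possibly the last) triggers the threshold test and so satisfies $\norm{Z_i}>\max\{\lambda,G\}$ at the moment it is sent to $\A$; moreover $\norm{Z_i}\leq\max\{\lambda,G\}+G$ always, since just before the last accumulation step in epoch $i$ the previous value of $Z_i$ had norm $\leq\max\{\lambda,G\}$. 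This Lipschitz bound matches exactly the hyperparameter $\max\{\lambda,G\}+G$ that Algorithm~\ref{algorithm:adagradient} passes to $\A$.

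For Part 1, I would first rewrite the per-round quantities in terms of epochs: since $x_t$ is constant on each $E_i$,
\begin{equation*}
\sum_{t=1}^{T}\inner{g_t}{x_t-u}=\sum_{i=1}^{I_T}\inner{Z_i}{w_i-u},\qquad \sum_{t=1}^{T-1}\norm{x_t-x_{t+1}}=\sum_{i=1}^{I_T-1}\norm{w_i-w_{i+1}},
\end{equation*}
so regret plus movement for Subroutine-ball equals the regret plus movement of $\A$ on the re-indexed loss stream $Z_1,\ldots,Z_{I_T}$ (up to a boundary contribution from the possibly-incomplete last epoch, which contributes at most $O((\lambda+\max\{\lambda,G\})(R+\norm{u}))$ and is absorbed into the final bound). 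Applying Theorem~\ref{thm:higherd} Part~1 to $\A$ with $G$ replaced by $\max\{\lambda,G\}+G$ gives a bound of the form $\eps+\norm{u}\tilde O(\max\{\lambda,G\}\sqrt{I_T})$.

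The key remaining step is to bound $I_T$ adaptively by $\sum_{t}\norm{g_t}$. Since each completed epoch has $\norm{Z_i}>\max\{\lambda,G\}$, the triangle inequality gives $\sum_{t\in E_i}\norm{g_t}>\max\{\lambda,G\}$, so summing over the first $I_T-1$ completed epochs yields $(I_T-1)\max\{\lambda,G\}<\sum_{t=1}^{T}\norm{g_t}$, hence $\sqrt{I_T}\leq 1+\sqrt{\sum_{t=1}^{T}\norm{g_t}/\max\{\lambda,G\}}$. Plugging this in gives exactly the bound claimed in Part~1.

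For Part~2, note again that $x_t\neq x_{t+1}$ only when $t$ is the last round of some completed epoch, so the movement on $[a:b-1]$ coincides with the movement of $\A$ across the epochs whose boundaries fall in $[a:b-1]$. Let $N$ denote the number of such epoch transitions. Theorem~\ref{thm:higherd} Part~2, applied to $\A$ on the contiguous block of epochs straddling $[a:b-1]$, bounds this movement by $50R\sqrt{N}$. The identical threshold argument as above, restricted to epochs completed inside $[a:b-1]$, yields $N\leq 1+\sum_{t=a}^{b-1}\norm{g_t}/\max\{\lambda,G\}$, which gives Part~2 after taking square roots.

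The main obstacle, and what needs careful bookkeeping rather than new ideas, is the boundary issue: the epoch active at time $T$ (or at times $a,b$ in Part~2) may be incomplete, so the reduction to $\A$'s guarantee is not an exact equality. One has to verify that the ``leftover'' contributions are at most $O(R\max\{\lambda,G\})$ in norm and therefore fit inside the constant-plus-$\tilde O(\cdot)$ form of the target bound. Once this is absorbed, both parts follow by direct substitution, and the argument transfers verbatim from Subroutine-ball to Subroutine-1d (Theorem~\ref{thm:subroutineoned}) by using Theorem~\ref{thm:1d} in place of Theorem~\ref{thm:higherd}.
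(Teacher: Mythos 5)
Your proposal is correct and follows essentially the same route as the paper's proof: re-index by epochs, observe that regret plus movement of the wrapper equals exactly that of the base algorithm $\A$ on the loss stream $Z_1,\ldots,Z_{I_T}$, apply Theorem~\ref{thm:higherd} with Lipschitz constant $\max\{\lambda,G\}+G$, and bound the number of epochs via the threshold test. One small remark: the parenthetical about a ``boundary contribution from the possibly-incomplete last epoch'' is a non-issue and slightly at odds with the exact equalities you just wrote --- since $w_{I_T}$ is produced \emph{before} $Z_{I_T}$ is observed, the base algorithm's regret guarantee applies to the full sequence $Z_1,\ldots,Z_{I_T}$ whether or not $Z_{I_T}$ was actually fed back to $\A$, and likewise the movement identity is exact; the paper correspondingly has no boundary term.
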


We only prove the guarantee on Subroutine-ball (Theorem~\ref{thm:subroutineball}). The guarantee on Subroutine-1d (Theorem~\ref{thm:subroutineoned}) is similar, therefore the proof is omitted. 

\begin{proof}[Proof of Theorem~\ref{thm:subroutineball}]
Consider the first part of the theorem. Let $i_T$ be the index $i$ at the beginning of the $T$-th round, and let $Z_1,\ldots,Z_{i_T}$ be their final value at the end of the algorithm. Notice that
\begin{equation*}
\sum_{t=1}^T \inner{g_t}{x_t-u} +\lambda\sum_{t=1}^{T-1}\norm{x_t- x_{t+1}}=\sum_{i=1}^{i_T}\inner{Z_i}{w_i-u}+\lambda\sum_{i=1}^{i_T-1}\norm{w_i- w_{i+1}}. 
\end{equation*}
For the RHS we can use Theorem~\ref{thm:higherd}, since for all $i$, $\norms{Z_i}\leq \max\{\lambda,G\}+G$. The remaining task is to bound $i_T$. Note that $\sum_{i=1}^{i_T}\norms{Z_i}\leq\sum_{i=1}^T\norms{g_t}$ and $\norms{Z_i}>\max\{\lambda,G\}$ for all $i<i_T$, therefore $i_T\leq 1+(\sum_{t=1}^T\norms{g_t})/\max\{\lambda,G\}$. Plugging this into Theorem~\ref{thm:higherd} completes the proof of the first part. 

As for the second part of the theorem, let $i_a$, $i_b$ be the index $i$ at the beginning of the $a$-th and the $b$-th round. 
\begin{equation*}
\sum_{t=a}^{b-1}\norm{x_t-x_{t+1}}=\sum_{i=i_a}^{i_b-1}\norm{w_i-w_{i+1}}.
\end{equation*}
Next consider $i_b-i_a$. Let $Z^*_{i_a}$ and $Z^*_{i_b}$ be the value of accumulators $Z_{i_a}$ and $Z_{i_b}$ at the beginning of the $a$-th round and the $b$-th round, respectively. Note that
\begin{equation*}
\norm{Z_{i_a}-Z^*_{i_a}}+\norm{Z^*_{i_b}}+\sum_{i=i_a+1}^{i_b-1}\norm{Z_i}\leq \sum_{t=a}^{b-1}\norm{g_t},
\end{equation*}
and $\norms{Z_i}>\max\{\lambda,G\}$ for all $i\in[i_a+1,i_b-1]$. Therefore, $i_b-i_a\leq 1+(\sum_{t=a}^{b-1}\norms{g_t})/\max\{\lambda,G\}$. Applying the second part of Theorem~\ref{thm:higherd} completes the proof. 
\end{proof}

\subsection{Analysis of the meta-algorithm}\label{subsection:metafull}

Now we proceed to our meta-algorithm for strongly adaptive OCOM. The pseudo-code is Algorithm~\ref{algorithm:meta}. Before providing its performance guarantee, we present a lemma that explains the adopted projection scheme. (Line~\ref{line:metaprojection} and \ref{line:metasurrogate})

\begin{algorithm*}[ht]
\caption{The meta-algorithm for strongly adaptive OCOM. (The complete version of Algorithm~\ref{algorithm:metaabridged})\label{algorithm:meta}}
\begin{algorithmic}[1]
\REQUIRE Time horizon $T\geq 1$ and a hyperparameter $\eps_0>0$. 
\STATE Define a constant $\lambda=LH(H+1)$.
\FOR{$t=1,\ldots,T$}
\STATE Find the $(k,i)$ index pair for all the GC intervals that start in the $t$-th round. For each, (i) initialize a copy of Subroutine-ball as $\A^{k}_B$, with hyperparameters $(\lambda,2^k\eps_0,\tilde G)$; and (ii) initialize a copy of Subroutine-1d as $\A^{k}_{1d}$, with hyperparameters $(\lambda R,2^k\eps_0,\tilde GR)$. If $\A^k_B$ and $\A^k_{1d}$ already exist in the memory, overwrite them. 
\STATE Define $K_t=\lceil \log_2 (t+1)\rceil-1$. Let $\tilde x^{(K_t+1)}_t=0\in\R^d$. 
\FOR{$k=K_t,\ldots,0$}
\STATE \label{line:metaprojection}Let $x^{(k+1)}_t=\Pi_{\ball^d(0,R)}(\tilde x^{(k+1)}_t)$.
\STATE Query a prediction from $\A^k_B$ and assign it to $w^{(k)}_t$; query a prediction from $\A^k_{1d}$ and assign it to $z^{(k)}_t$. 
\STATE Let $\tilde x^{(k)}_t=(1-z^{(k)}_t)x^{(k+1)}_t+w^{(k)}_t$.
\ENDFOR
\STATE Let $\tilde x_t=\tilde x^{(0)}_t$, predict $x_t=\Pi_{\V}(\tilde x_t)$, suffer $l_t(x_{t-H:t})$, receive $l_t$.
\STATE \label{line:metaprojectionextra}Obtain a subgradient $g_t\in\partial \tilde l_t(x_{t})$. Define a surrogate loss function $h_t$ as
\begin{equation*}
h_t(x)=\begin{cases}
\langle g_t,x\rangle, &\textrm{if~}\langle g_t,\tilde x_t\rangle\geq \langle g_t,x_t\rangle,\\
\langle g_t,x\rangle+\langle g_t,x_t-\tilde x_t\rangle\frac{\norm{x-\Pi_{\V}(x)}}{\norm{x_t-\tilde x_t}}, &\textrm{otherwise}.
\end{cases}
\end{equation*}
\STATE Obtain a subgradient $\tilde g_t\in\partial h_t(\tilde x_t)$. Let $g^{(0)}_t=\tilde g_t$. 
\FOR{$k=0,\ldots,K_t$}
\STATE \label{line:metalosstosubroutine}Return $g^{(k)}_t$ to $\A^k_B$, and $-\langle g^{(k)}_t,x^{(k+1)}_t\rangle$ to $\A^k_{1d}$ as the loss gradients. 
\STATE \label{line:metasurrogate}Let $e^{(k+1)}_t=\tilde x^{k+1}_t/\norms{\tilde x^{k+1}_t}$, and
\begin{equation*}
g^{(k+1)}_t=\begin{cases}
g^{(k)}_t, &\textrm{if~}\langle g^{(k)}_t,\tilde x^{(k+1)}_t\rangle\geq \langle g^{(k)}_t,x^{(k+1)}_t\rangle,\\
g^{(k)}_t-\inner{g^{(k)}_t}{e^{(k+1)}_t}e^{(k+1)}_t, &\textrm{otherwise}.
\end{cases}
\end{equation*}
\ENDFOR
\ENDFOR
\end{algorithmic}
\end{algorithm*}

\begin{lemma}\label{lemma:metaconstraints}
For all $t$, 
\begin{enumerate}
\item $\norms{g^{(K_t+1)}_t}\leq \norms{g^{(K_t)}_t}\leq\ldots\leq\norms{g^{(0)}_t}\leq \norms{g_t}\leq \tilde G$.
\item For all $k\in[0:K_t]$ and $x\in\V$, $\inner{g^{(k)}_t}{x^{(k+1)}_t-x}\leq \inner{g^{(k+1)}_t}{\tilde x^{(k+1)}_t-x}$.
\end{enumerate}
\end{lemma}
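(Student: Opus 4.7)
The plan is to prove both parts by induction on $k$, mirroring the proof of Lemma~\ref{lemma:constraint} but applied level-by-level to the ball constraints introduced in Line~\ref{line:metaprojection}. The outer surrogate $h_t$ from Line~\ref{line:metaprojectionextra} is already handled by Lemma~\ref{lemma:constraint}; what remains is to verify that the ``polar'' surrogate step defined in Line~\ref{line:metasurrogate} at each level $k$ preserves the two desired properties.

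For Part 1, first note that $\norms{g_t}\leq \tilde G$ follows from the $\tilde G$-Lipschitz continuity of $\tilde l_t$ postulated in Section~\ref{subsection:ocomsetting}, and $\norms{g^{(0)}_t}=\norms{\tilde g_t}\leq \norms{g_t}$ is exactly Lemma~\ref{lemma:constraint}(2) applied to $h_t$. For the inductive step from $g^{(k)}_t$ to $g^{(k+1)}_t$: in the first branch of Line~\ref{line:metasurrogate} the two vectors are equal; in the second branch $g^{(k+1)}_t$ is the projection of $g^{(k)}_t$ onto the hyperplane orthogonal to the unit vector $e^{(k+1)}_t$, so by the Pythagorean identity $\norms{g^{(k+1)}_t}^2 = \norms{g^{(k)}_t}^2 - \langle g^{(k)}_t, e^{(k+1)}_t\rangle^2 \leq \norms{g^{(k)}_t}^2$.

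For Part 2, fix $k\in[0:K_t]$ and $x\in\V$. If the ``if'' branch of Line~\ref{line:metasurrogate} is taken, then $g^{(k+1)}_t=g^{(k)}_t$ and the case condition $\langle g^{(k)}_t,\tilde x^{(k+1)}_t\rangle\geq \langle g^{(k)}_t,x^{(k+1)}_t\rangle$ immediately yields $\langle g^{(k)}_t, x^{(k+1)}_t-x\rangle \leq \langle g^{(k)}_t,\tilde x^{(k+1)}_t-x\rangle=\langle g^{(k+1)}_t,\tilde x^{(k+1)}_t-x\rangle$. Otherwise we are in the ``else'' branch. Note that if $\tilde x^{(k+1)}_t\in\ball^d(0,R)$, then $x^{(k+1)}_t=\tilde x^{(k+1)}_t$ and the ``if'' branch would have been taken; so $\norms{\tilde x^{(k+1)}_t}>R$, hence $x^{(k+1)}_t=R\,e^{(k+1)}_t$ and $\tilde x^{(k+1)}_t=\norms{\tilde x^{(k+1)}_t}\,e^{(k+1)}_t$. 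Substituting these two identities into the case condition and using $\norms{\tilde x^{(k+1)}_t}>R>0$ forces $\langle g^{(k)}_t,e^{(k+1)}_t\rangle<0$. A one-line computation gives $\langle g^{(k+1)}_t,\tilde x^{(k+1)}_t\rangle=\langle g^{(k)}_t,\tilde x^{(k+1)}_t\rangle-\langle g^{(k)}_t,e^{(k+1)}_t\rangle\norms{\tilde x^{(k+1)}_t}=0$, so the desired inequality reduces to
\begin{equation*}
R\,\langle g^{(k)}_t,e^{(k+1)}_t\rangle \;\leq\; \langle g^{(k)}_t-g^{(k+1)}_t,x\rangle \;=\; \langle g^{(k)}_t,e^{(k+1)}_t\rangle\,\langle e^{(k+1)}_t,x\rangle.
\end{equation*}
Dividing by the strictly negative scalar $\langle g^{(k)}_t,e^{(k+1)}_t\rangle$ flips the inequality to $\langle e^{(k+1)}_t,x\rangle \leq R$, which is immediate from $x\in\V\subset\ball^d(0,R)$ and $\norms{e^{(k+1)}_t}=1$.

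I expect the only real subtlety to be the bookkeeping in Part 2: one must identify the ``else'' branch with the situation in which the ball projection in Line~\ref{line:metaprojection} is strictly contractive, and then track the sign of $\langle g^{(k)}_t,e^{(k+1)}_t\rangle$ carefully so that the final reduction to $\langle e^{(k+1)}_t,x\rangle\leq R$ is seen to be equivalent (rather than merely implied by) the target inequality. Once these cases are correctly separated, both parts follow from elementary linear algebra, so no additional machinery beyond Lemma~\ref{lemma:constraint} is required.
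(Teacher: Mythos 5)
Your proof is correct, and it follows the same route as the paper's: the paper simply notes that Lines~\ref{line:metaprojection} and \ref{line:metasurrogate} are the constrained-to-unconstrained reduction of Algorithm~\ref{algorithm:constraint} applied level-by-level, and then cites Lemma~\ref{lemma:constraint} recursively. What you add is the explicit norm-ball instantiation of that black box: you observe that when $\V=\ball^d(0,R)$ the surrogate subgradient $g^{(k)}_t-\langle g^{(k)}_t,e^{(k+1)}_t\rangle e^{(k+1)}_t$ is exactly the subgradient of $h_t$ at $\tilde x^{(k+1)}_t$, and you verify Parts 1 and 2 directly from orthogonal projection and the sign of $\langle g^{(k)}_t,e^{(k+1)}_t\rangle$. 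Your bookkeeping is sound — in particular, identifying the ``else'' branch with $\norms{\tilde x^{(k+1)}_t}>R$ and deducing $\langle g^{(k)}_t,e^{(k+1)}_t\rangle<0$ from the strict case inequality is exactly the step the paper hides behind the citation, and the reduction to $\langle e^{(k+1)}_t,x\rangle\le R$ is correct. So the content is the same; yours is just self-contained where the paper is a pointer.
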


Observe that Line~\ref{line:metaprojection} and \ref{line:metasurrogate} of Algorithm~\ref{algorithm:meta} are essentially applying Algorithm~\ref{algorithm:constraint} on the unprojected prediction $\tilde x^{(k+1)}_t$. Therefore, the proof of Lemma~\ref{lemma:metaconstraints} follows from recursively applying Lemma~\ref{lemma:constraint}. Line~\ref{line:metaprojectionextra} follows a similar principle. 

Now we are ready to prove the performance guarantee. 

\ocom*

\begin{proof}[Proof of Theorem~\ref{thm:ocom}]
Our strategy is to associate the regret of the meta-algorithm on any GC interval with the regret of the corresponding subroutines (Theorem~\ref{thm:subroutineoned} and Theorem~\ref{thm:subroutineball}). Then, applying these performance guarantees yields $\tilde O(\sqrt{|\I^{k,i}|})$ regret on all GC interval $\I^{k,i}\subset[1:T]$. This can be further extended to all general intervals $\I\subset[1:T]$ using an argument similar to \citep{daniely2015strongly}. 

To this end, we proceed in steps. Let $\I^{k^*,i^*}=[q:s]\subset [1:T]$ be a GC interval with indices $k^*$ and $i^*$. Since the amount of active GC intervals cannot increase in the duration of any GC interval, we can replace $K_t$ for all $t\in\I^{k^*,i^*}$ by a constant $K^*$. In other words, for all $t\in\I^{k^*,i^*}$, $K_t=K^*\leq \lfloor \log_2 (T+1)\rfloor-1$. 

\paragraph{Step 1}Reducing to one-step movement.\bigskip

We start from the Lipschitzness of $l_t$. For all $t$,
\begin{equation*}
l_t(x_{t-H:t})\leq \tilde l_t(x_{t})+L\sum_{h=1}^H\norm{x_{t-h}-x_{t}}\leq \tilde l_t(x_{t})+L\sum_{h=1}^H\sum_{j=1}^h\norm{x_{t-j}-x_{t-j+1}}.
\end{equation*}
Using the convexity of $\tilde l_t$, for all $x\in \V$, 
\begin{equation*}
\sum_{t=q}^s\spar{l_t(x_{t-H:t})-\tilde l_t(x)}\leq\sum_{t=q}^s\inner{g_t}{x_{t}-x}+L\sum_{t=q}^s\sum_{h=1}^H\sum_{j=1}^h\norm{x_{t-j}-x_{t-j+1}}.
\end{equation*}

Observe that
\begin{align*}
\sum_{t=q}^s\sum_{h=1}^H\sum_{j=1}^h\norm{x_{t-j}-x_{t-j+1}}\leq~&\frac{1}{2}H(H+1)\sum_{t=q}^{s-1}\norm{x_t-x_{t+1}}+\sum_{h=1}^H\frac{1}{2}(H+1-h)(H+2-h)\norm{x_{q-h}-x_{q-h+1}}\\
\leq~&\frac{1}{2}H(H+1)\sum_{t=q}^{s-1}\norm{x_t-x_{t+1}}+R\sum_{h=1}^Hh(h+1)\\
=~&\frac{1}{2}H(H+1)\sum_{t=q}^{s-1}\norm{x_t-x_{t+1}}+\frac{1}{3}RH(H+1)(H+2). 
\end{align*}
Therefore, combining the above and plugging in $\lambda$ for conciseness, 
\begin{align*}
\sum_{t=q}^s\spar{l_t(x_{t-H:t})-\tilde l_t(x)}&\leq \sum_{t=q}^s\inner{g_t}{x_t-x}+\frac{1}{2}LH(H+1)\sum_{t=q}^{s-1}\norm{x_t-x_{t+1}}+O(RLH^3)\\
&\leq \sum_{t=q}^s\inner{\tilde g_t}{\tilde x_t-x}+\frac{\lambda}{2}\sum_{t=q}^{s-1}\norm{\tilde x_t-\tilde x_{t+1}}+O(RLH^3),
\end{align*}
where the last line is due to Lemma~\ref{lemma:constraint} and the contraction property of Euclidean projection. 

\paragraph{Step 2}Showing that the ``temporary'' prediction $x^{(k^*)}_t$ after combining $\A^{(k^*)}_B$ and $\A^{(k^*)}_{1d}$ is good enough for the considered GC interval, although improper. \bigskip

Starting from the definition of $\tilde x^{(k^*)}_t$, for all $x\in \V$, 
\begin{equation*}
\inner{g^{(k^*)}_t}{\tilde x^{(k^*)}_t-x}=\inner{g^{(k^*)}_t}{w^{(k^*)}_t-x}+\rpar{-\inner{g^{(k^*)}_t}{x^{(k^*+1)}_t}}\rpar{z^{(k^*)}_t-1},
\end{equation*}
\begin{align}
\nonumber\norm{\tilde x^{(k^*)}_t-\tilde x^{(k^*)}_{t+1}}&=\norm{\rpar{1-z^{(k^*)}_t}x^{(k^*+1)}_t+w^{(k^*)}_t-\rpar{1-z^{(k^*)}_{t+1}}x^{(k^*+1)}_{t+1}-w^{(k^*)}_{t+1}}\\
\nonumber&\leq\norm{\rpar{1-z^{(k^*)}_t}\rpar{x^{(k^*+1)}_t-x^{(k^*+1)}_{t+1}}}+\norm{\rpar{z^{(k^*)}_{t}-z^{(k^*)}_{t+1}}x^{(k^*+1)}_{t+1}}+\norm{w^{(k^*)}_t-w^{(k^*)}_{t+1}}\\
\nonumber&\leq \norm{x^{(k^*+1)}_t-x^{(k^*+1)}_{t+1}}+R\abs{z^{(k^*)}_{t}-z^{(k^*)}_{t+1}}+\norm{w^{(k^*)}_t-w^{(k^*)}_{t+1}}\\
&\leq \norm{\tilde x^{(k^*+1)}_t-\tilde x^{(k^*+1)}_{t+1}}+R\abs{z^{(k^*)}_{t}-z^{(k^*)}_{t+1}}+\norm{w^{(k^*)}_t-w^{(k^*)}_{t+1}}\label{eq:proofonestep}\\
\nonumber&\leq R\sum_{k=k^*}^{K^*}\abs{z^{(k)}_{t}-z^{(k)}_{t+1}}+\sum_{k=k^*}^{K^*}\norm{w^{(k)}_t-w^{(k)}_{t+1}}.
\end{align}
The second line is due to triangle inequality. The third line is due to $z^{(k^*)}_t\in[0,1]$ and $\norms{x^{(k^*+1)}_{t+1}}\leq R$. The fourth line is due to the contraction of Euclidean projection, and the last line follows from a recursion. Combining the above, 
\begin{multline*}
\sum_{t=q}^s\inner{g^{(k^*)}_t}{\tilde x^{(k^*)}_t-x}+\frac{\lambda}{2}\sum_{t=q}^{s-1}\norm{\tilde x^{(k^*)}_t-\tilde x^{(k^*)}_{t+1}}\\
\leq \sum_{t=q}^s\inner{g^{(k^*)}_t}{w^{(k^*)}_t-x}+\frac{\lambda}{2}\sum_{t=q}^{s-1}\norm{w^{(k^*)}_t-w^{(k^*)}_{t+1}}+\sum_{t=q}^s\rpar{-\inner{g^{(k^*)}_t}{x^{(k^*+1)}_t}}\rpar{z^{(k^*)}_t-1}\\
+\frac{\lambda R}{2}\sum_{t=q}^{s-1}\abs{z^{(k^*)}_{t}-z^{(k^*)}_{t+1}}+\frac{\lambda R}{2}\sum_{k=k^*+1}^{K^*}\sum_{t=q}^{s-1}\abs{z^{(k)}_{t}-z^{(k)}_{t+1}}+\frac{\lambda}{2}\sum_{k=k^*+1}^{K^*}\sum_{t=q}^{s-1}\norm{w^{(k)}_t-w^{(k)}_{t+1}}.
\end{multline*}

Note that from Lemma~\ref{lemma:metaconstraints}, $\norms{g^{(k^*)}_t}\leq\norms{g^{(k^*-1)}_t}\leq\ldots\leq \norms{g_t}\leq \tilde G$. Moreover, $\eps_0\leq \tilde GR/(T+1)$ leads to $2^{k^*}\eps_0\leq \tilde G R$. Therefore, we can use the performance guarantees of the subroutine for the sums on the RHS. Applying Part 1 of Theorem~\ref{thm:subroutineoned} and Theorem~\ref{thm:subroutineball}, 
\begin{equation*}
\sum_{t=q}^s\inner{g^{(k^*)}_t}{w^{(k^*)}_t-x}+\frac{\lambda}{2}\sum_{t=q}^{s-1}\norm{w^{(k^*)}_t-w^{(k^*)}_{t+1}}\leq 2^{k^*}\eps_0+\tilde O\rpar{R\max\{\lambda,\tilde G\}+R\sqrt{\max\{\lambda,\tilde G\}\sum_{t=q}^s\norm{g_t}}}, 
\end{equation*}
\begin{equation*}
\sum_{t=q}^s\rpar{-\inner{g^{(k^*)}_t}{x^{(k^*+1)}_t}}\rpar{z^{(k^*)}_t-1}+\frac{\lambda R}{2}\sum_{t=q}^{s-1}\abs{z^{(k^*)}_{t}-z^{(k^*)}_{t+1}}
\leq2^{k^*}\eps_0+\tilde O\rpar{R\max\{\lambda,\tilde G\}+R\sqrt{\max\{\lambda,\tilde G\}\sum_{t=q}^s\norm{g_t}}}.
\end{equation*}
Also note that GC intervals longer than $\I^{k^*,i^*}$ cannot be initialized in the duration of $\I^{k^*,i^*}$. Therefore applying Part 2 of Theorem~\ref{thm:subroutineoned} and Theorem~\ref{thm:subroutineball}, for all $k\in[k^*+1:K^*]$, 
\begin{equation*}
\sum_{t=q}^{s-1}\norm{w^{(k)}_t-w^{(k)}_{t+1}}\leq 50R\rpar{1+\sqrt{\frac{\sum_{t=q}^{s-1}\norm{g^{(k)}_t}}{\max\{\lambda,\tilde G\}}}}\leq 50R\rpar{1+\sqrt{\frac{\sum_{t=q}^{s}\norm{g_t}}{\max\{\lambda,\tilde G\}}}}.
\end{equation*}
\begin{equation*}
\sum_{t=q}^{s-1}\abs{z^{(k)}_{t}-z^{(k)}_{t+1}}\leq 48\rpar{1+\sqrt{\frac{\sum_{t=q}^{s-1}\norm{g^{(k)}_t}}{\max\{\lambda,\tilde G\}}}}\leq 48\rpar{1+\sqrt{\frac{\sum_{t=q}^{s}\norm{g_t}}{\max\{\lambda,\tilde G\}}}}.
\end{equation*}
Notice that $K^*=O(\log T)$ and $\lambda\leq \tilde G$ from our definition. Combining everything so far, we have
\begin{equation*}
\sum_{t=q}^s\inner{g^{(k^*)}_t}{\tilde x^{(k^*)}_t-x}+\frac{\lambda}{2}\sum_{t=q}^{s-1}\norm{\tilde x^{(k^*)}_t-\tilde x^{(k^*)}_{t+1}}
\leq 2^{k^*+1}\eps_0+\tilde O\rpar{\lambda R+R\sqrt{\lambda\sum_{t=q}^s\norm{g_t}}}.
\end{equation*}

Intuitively, suppose we are allowed to predict the improper prediction $\tilde x^{(k^*)}_t$ on $\I^{k^*,i^*}$ that may not comply with the constraint $\V$, and suppose $g^{(k^*)}_t=g_t$. Then, the above result shows that on $\I^{k^*,i^*}$ we have the desirable $\tilde O(\sqrt{|\I^{k^*,i^*}|})$ regret bound. The rest of the proof aims to show that adding predictions from shorter subroutines does not ruin the performance on $\I^{k^*,i^*}$.

\paragraph{Step 3} Analyzing the effect of adding shorter subroutines. \bigskip

The goal of this step is to quantify the difference between
\begin{equation*}
\sum_{t=q}^s\inner{g^{(k^*)}_t}{\tilde x^{(k^*)}_t-x}+\frac{\lambda}{2}\sum_{t=q}^{s-1}\norm{\tilde x^{(k^*)}_t-\tilde x^{(k^*)}_{t+1}},
\end{equation*}
and
\begin{equation*}
\sum_{t=q}^s\inner{g^{(0)}_t}{\tilde x^{(0)}_t-x}+\frac{\lambda}{2}\sum_{t=q}^{s-1}\norm{\tilde x^{(0)}_t-\tilde x^{(0)}_{t+1}}.
\end{equation*}

For all $k\in[0:k^*-1]$, applying the definition of $\tilde x^{(k)}$ and Part 2 of Lemma~\ref{lemma:metaconstraints}, 
\begin{align*}
\inner{g^{(k)}_t}{\tilde x^{(k)}_t-x}&=\inner{g^{(k)}_t}{x^{(k+1)}_t-x}+\inner{g^{(k)}_t}{w^{(k)}_t-0}+\rpar{-\inner{g^{(k)}_t}{x^{(k+1)}_t}}\rpar{z^{(k)}_t-0}\\
&\leq\inner{g^{(k+1)}_t}{\tilde x^{(k+1)}_t-x}+\inner{g^{(k)}_t}{w^{(k)}_t-0}+\rpar{-\inner{g^{(k)}_t}{x^{(k+1)}_t}}\rpar{z^{(k)}_t-0}.
\end{align*}
Similar to Equation (\ref{eq:proofonestep}), 
\begin{equation*}
\norm{\tilde x^{(k)}_t-\tilde x^{(k)}_{t+1}}\leq \norm{\tilde x^{(k+1)}_t-\tilde x^{(k+1)}_{t+1}}+R\abs{z^{(k)}_{t}-z^{(k)}_{t+1}}+\norm{w^{(k)}_t-w^{(k)}_{t+1}}.
\end{equation*}
Therefore, 
\begin{multline*}
\sum_{t=q}^s\inner{g^{(k)}_t}{\tilde x^{(k)}_t-x}+\frac{\lambda}{2}\sum_{t=q}^{s-1}\norm{\tilde x^{(k)}_t-\tilde x^{(k)}_{t+1}}\leq \sum_{t=q}^s\inner{g^{(k+1)}_t}{\tilde x^{(k+1)}_t-x}
+\frac{\lambda}{2}\sum_{t=q}^{s-1}\norm{\tilde x^{(k+1)}_t-\tilde x^{(k+1)}_{t+1}}\\
+\sum_{t=q}^{s}\inner{g^{(k)}_t}{w^{(k)}_t-0}+\frac{\lambda}{2}\sum_{t=q}^{s-1}\norm{w^{(k)}_t-w^{(k)}_{t+1}}\\
+\sum_{t=q}^{s}\rpar{-\inner{g^{(k)}_t}{x^{(k+1)}_t}}\rpar{z^{(k)}_t-0}+\frac{\lambda R}{2}\sum_{t=q}^{s-1}\abs{z^{(k)}_{t}-z^{(k)}_{t+1}}.
\end{multline*}

We next bound the last four sums on the RHS using Theorem~\ref{thm:subroutineoned} and Theorem~\ref{thm:subroutineball}. Let $[a,b]$ be any GC interval of length $2^{k}$ contained in $[q:s]$. Note that by our definition, the subroutines $\A^k_B$ and $\A^k_{1d}$ are initialized at $0$. That is, $w^{(k)}_a=w^{(k)}_{b+1}=0\in\R^d$, $z^{(k)}_a=z^{(k)}_{b+1}=0\in\R$. From Theorem~\ref{thm:subroutineball},
\begin{align*}
\sum_{t=a}^{b}\inner{g^{(k)}_t}{w^{(k)}_t-0}+\frac{\lambda}{2}\sum_{t=a}^{b}\norm{w^{(k)}_t-w^{(k)}_{t+1}}
&=\sum_{t=a}^{b}\inner{g^{(k)}_t}{w^{(k)}_t-0}+\frac{\lambda}{2}\sum_{t=a}^{b-1}\norm{w^{(k)}_t-w^{(k)}_{t+1}}+\frac{\lambda}{2}\norm{w^{(k)}_b}\\
&\leq\sum_{t=a}^{b}\inner{g^{(k)}_t}{w^{(k)}_t-0}+\lambda\sum_{t=a}^{b-1}\norm{w^{(k)}_t-w^{(k)}_{t+1}}
\leq2^{k}\eps_0. 
\end{align*}
Summed over all GC intervals of length $2^k$ contained in $[q:s]$, 
\begin{align*}
\sum_{t=q}^{s}\inner{g^{(k)}_t}{w^{(k)}_t-0}+\frac{\lambda}{2}\sum_{t=q}^{s-1}\norm{w^{(k)}_t-w^{(k)}_{t+1}}&\leq \sum_{t=q}^{s}\inner{g^{(k)}_t}{w^{(k)}_t-0}+\frac{\lambda}{2}\sum_{t=q}^{s}\norm{w^{(k)}_t-w^{(k)}_{t+1}}\\
&\leq 2^{k^*-k}\cdot 2^k\eps_0=2^{k^*}\eps_0.
\end{align*}
Similarly, 
\begin{equation*}
\sum_{t=q}^{s}\rpar{-\inner{g^{(k)}_t}{x^{(k+1)}_t}}\rpar{z^{(k)}_t-0}+\frac{\lambda R}{2}\sum_{t=q}^{s-1}\abs{z^{(k)}_{t}-z^{(k)}_{t+1}}\leq 2^{k^*}\eps_0.
\end{equation*}
Therefore, 
\begin{equation*}
\sum_{t=q}^s\inner{g^{(k)}_t}{\tilde x^{(k)}_t-x}+\frac{\lambda}{2}\sum_{t=q}^{s-1}\norm{\tilde x^{(k)}_t-\tilde x^{(k)}_{t+1}}\leq \sum_{t=q}^s\inner{g^{(k+1)}_t}{\tilde x^{(k+1)}_t-x}+\frac{\lambda}{2}\sum_{t=q}^{s-1}\norm{\tilde x^{(k+1)}_t-\tilde x^{(k+1)}_{t+1}}+2^{k^*+1}\eps_0.
\end{equation*}
Completing the recursion, we have
\begin{align*}
\sum_{t=q}^s\inner{g^{(0)}_t}{\tilde x^{(0)}_t-x}+\frac{\lambda}{2}\sum_{t=q}^{s-1}\norm{\tilde x^{(0)}_t-\tilde x^{(0)}_{t+1}}
&\leq \sum_{t=q}^s\inner{g^{(k^*)}_t}{\tilde x^{(k^*)}_t-x}+\frac{\lambda}{2}\sum_{t=q}^{s-1}\norm{\tilde x^{(k^*)}_t-\tilde x^{(k^*)}_{t+1}}+k^*\cdot2^{k^*+1}\eps_0\\
&\leq (k^*+1)\cdot2^{k^*+1}\eps_0+\tilde O\rpar{\lambda R+R\sqrt{\lambda\sum_{t=q}^s\norm{g_t}}}\\
&\leq\tilde O\rpar{\lambda R+R\sqrt{\lambda\sum_{t=q}^s\norm{g_t}}},
\end{align*}
where the last line follows from $(k^*+1)\cdot2^{k^*+1}\eps_0\leq 2\tilde GR\lceil\log_2(T+1)\rceil$. Plugging this into the result from Step 1, 
\begin{equation*}
\sum_{t=q}^s\spar{l_t(x_{t-H:t})-\tilde l_t(x)}\leq O(RLH^3)+\tilde O\rpar{\lambda R+R\sqrt{\lambda\sum_{t=q}^s\norm{g_t}}}.
\end{equation*}
This bound holds for all GC intervals contained in $[1:T]$. The final step is to extend this property to general intervals, following the classical idea from \citep{daniely2015strongly}. 

\paragraph{Step 4} Extension to general intervals.\bigskip

From Lemma~5 of \citep{daniely2015strongly}, we have the following result: any interval $\I\subset[1:T]$ can be partitioned into two finite sequences of disjoint and consecutive GC intervals, denoted as $(\I_{-k},\ldots,\I_0)$ and $(\I_{1},\ldots,\I_p)$. Moreover, for all $i\geq 1$, $\abs{\I_{-i}}/\abs{\I_{-i+1}}\leq 1/2$; for all $i\geq 2$, $\abs{\I_{i}}/\abs{\I_{i-1}}\leq 1/2$. 

The strongly adaptive regret of our meta-algorithm (Equation~\ref{eq:regret}) over $\I$ can be bounded by the sum of regret over $(\I_{-k},\ldots,\I_0)$ and $(\I_{1},\ldots,\I_p)$. For an index $i$, denote the regret over $\I_i$ as $\reg_i$. Then, \begin{equation*}
\sum_{t\in\I}l_t(x_{t-H:t})-\min_{x\in \V}\sum_{t\in\I}\tilde l_t(x)\leq \sum_{i=0}^k\reg_{-i}+\sum_{i=1}^p\reg_{i},
\end{equation*}
where $k\leq \log_2\abs{\I}$ and $p\leq 1+\log_2\abs{\I}$. Consider the first sum on the RHS,
\begin{align*}
\sum_{i=0}^k\reg_{-i}&\leq (k+1)O(RLH^3)+\sum_{i=0}^k\tilde O\rpar{\lambda R+R\sqrt{\lambda\sum_{t\in\I_{-i}}\norm{g_t}}}\\
&\leq O(RLH^3\log\abs{\I})+\tilde O\rpar{\lambda R+R\sqrt{\lambda\sum_{t\in\I}\norm{g_t}}}.
\end{align*}
The second sum can be bounded similarly. Combining everything completes the proof. 
\end{proof}

\section{Details on adversarial tracking control}\label{section:appendixtracking}

This section presents our results on adversarial tracking. We first prove its reduction to strongly adaptive OCOM. Then, we consider a special case that induces a non-comparative tracking error bound. 

\subsection{Details on the reduction}

We present a few lemmas before proving Theorem~\ref{thm:tracking}. First we bound the norm of state and action. Similar to Section~\ref{section:tracking} we expand the dependence of state on past actions; that is, let $x_t(u_{1:t-1})$ be the state induced by the action sequence $u_{1:t-1}$. Note that $u_{1:t-1}$ is a dummy variable, not necessarily a comparator or the action sequence generated by Algorithm~\ref{algorithm:controller}. 

\begin{lemma}\label{lemma:stateactionbound}
For all $t\geq 1$, with any $u_{1:t-1}$, 
\begin{equation*}
\norms{x_t(u_{1:t-1})},\norm{y_{t}(u_{t-H:t-1})}\leq \gamma^{-1}(\kappa U+W),
\end{equation*}
\begin{equation*}
\norm{x_t(u_{1:t-1})-y_{t}(u_{t-H:t-1})}\leq \gamma^{-1}(\kappa U+W)(1-\gamma)^H.
\end{equation*}
\end{lemma}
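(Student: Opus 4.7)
The plan is to iterate the system equation to express both $x_t(u_{1:t-1})$ and $y_{t}(u_{t-H:t-1})$ as explicit sums of past action and disturbance contributions weighted by products of the state matrices, and then to bound everything by a geometric series in $(1-\gamma)$.

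First I would unroll the recursion $x_{s+1}=A_s x_s + B_s u_s + w_s$ starting from the initial condition $x_0=0$. This yields the closed form
\begin{equation*}
x_t(u_{1:t-1})=\sum_{i=0}^{t-1}\rpar{\prod_{j=i+1}^{t-1}A_j}(B_i u_i + w_i),
\end{equation*}
which should be compared with the given definition
\begin{equation*}
y_{t}(u_{t-H:t-1})=\sum_{i=t-H}^{t-1}\rpar{\prod_{j=i+1}^{t-1}A_j}(B_i u_i + w_i),
\end{equation*}
where by the boundary convention $u_i=0$ and $w_i=0$ for $i\leq 0$, so when $t\leq H$ the two sums coincide and $x_t=y_t$.

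Next I would apply submultiplicativity of the spectral norm together with Assumption~\ref{assumption:system}: $\norms{\prod_{j=i+1}^{t-1}A_j}\le (1-\gamma)^{t-1-i}$ and $\norms{B_i u_i + w_i}\le \kappa U + W$. This immediately yields
\begin{equation*}
\norms{x_t(u_{1:t-1})}\le (\kappa U + W)\sum_{i=0}^{t-1}(1-\gamma)^{t-1-i}\le \gamma^{-1}(\kappa U+W),
\end{equation*}
and the same argument with the truncated sum gives the corresponding bound on $\norms{y_t(u_{t-H:t-1})}$. For the approximation error, subtracting the two representations cancels the terms with $i\ge t-H$, leaving
\begin{equation*}
x_t(u_{1:t-1})-y_{t}(u_{t-H:t-1})=\sum_{i=0}^{t-H-1}\rpar{\prod_{j=i+1}^{t-1}A_j}(B_i u_i + w_i)
\end{equation*}
when $t>H$ (and zero otherwise). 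Bounding this sum and reindexing $k=t-1-i$ produces
\begin{equation*}
\norms{x_t-y_t}\le (\kappa U+W)\sum_{k=H}^{t-1}(1-\gamma)^{k}\le \gamma^{-1}(\kappa U+W)(1-\gamma)^H,
\end{equation*}
which is the desired inequality.

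The entire argument is essentially bookkeeping: the only care needed is handling the boundary convention $u_i=w_i=0$ for $i\le 0$ (which is why the bound holds uniformly for $t\ge 1$, including the edge case $t\le H$ where the difference vanishes). No step presents a genuine obstacle; the geometric sum $\sum_{k\ge H}(1-\gamma)^k=\gamma^{-1}(1-\gamma)^H$ is what makes the memory-truncation error exponentially small in $H$ and is exactly why the choice $H=\Theta(\log T/\log(1-\gamma)^{-1})$ in Algorithm~\ref{algorithm:controller} is sufficient for the overall reduction.
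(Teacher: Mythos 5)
Your proposal is correct and follows essentially the same path as the paper's proof: unroll the recursion to the explicit sum, bound $\norm{\prod_{j=i+1}^{t-1}A_j}$ by $(1-\gamma)^{t-1-i}$ via submultiplicativity and $\norm{B_iu_i+w_i}$ by $\kappa U+W$, sum the geometric series, and observe that the $i\ge t-H$ terms cancel in the difference. (One nit: the paper's convention is $w_t=0$ for $t<0$, not $t\le 0$; this does not affect your argument since the $i=0$ term appears in both sums when $t\le H$.)
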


\begin{proof}[Proof of Lemma~\ref{lemma:stateactionbound}]
From the evolution of states we have
\begin{equation*}
\norm{x_t(u_{1:t-1})}=\norm{\sum_{i=0}^{t-1}\rpar{\prod_{j=i+1}^{t-1}A_j}\rpar{B_iu_{i}+w_i}}\leq \norm{(\kappa U+W)\sum_{i=0}^{t-1}(1-\gamma)^{t-i-1}}\leq \gamma^{-1}(\kappa U+W).
\end{equation*}
Similarly, 
\begin{equation*}
\norm{y_{t}(u_{t-H:t-1})}=\norm{\sum_{i=t-H}^{t-1}\rpar{\prod_{j=i+1}^{t-1}A_j}\rpar{B_iu_{i}+w_i}}\leq \gamma^{-1}(\kappa U+W). 
\end{equation*}
If $t\leq H$, then $x_t(u_{1:t-1})=y_{t}(u_{t-H:t-1})$. Otherwise, 
\begin{align*}
\norm{x_t(u_{1:t-1})-y_{t}(u_{t-H:t-1})}&=\norm{\sum_{i=0}^{t-H-1}\rpar{\prod_{j=i+1}^{t-1}A_j}\rpar{B_iu_{i}+w_i}}\\
&\leq (\kappa U+W)(1-\gamma)^H\sum_{i=0}^{t-H-1}(1-\gamma)^i\\
&\leq \gamma^{-1}(\kappa U+W)(1-\gamma)^H.\qedhere
\end{align*}
\end{proof}

Next, we characterize the approximation error between $f_t$ and $l^*_t$. This directly follows from the previous lemma and the Lipschitzness of $l^*_t$. 

\begin{lemma}\label{lemma:loss_approx}
For all $t\geq 1$, with any $u_{1:t}$, 
\begin{equation*}
\norm{l^*_t\rpar{x_t(u_{1:t-1}),u_{t}}-f_t(u_{t-H:t})}\leq \gamma^{-1}L^*(\kappa U+W)(1-\gamma)^H.
\end{equation*}
\end{lemma}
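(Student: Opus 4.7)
The plan is to reduce the claim to a single Lipschitz estimate on $l^*_t$ and then invoke the previously established bound on $\norms{x_t - y_t}$ from Lemma~\ref{lemma:stateactionbound}. Concretely, observe that by the definition of $f_t$ in Equation~(\ref{eq:ideal_loss}),
\begin{equation*}
l^*_t\rpar{x_t(u_{1:t-1}),u_{t}}-f_t(u_{t-H:t}) = l^*_t\rpar{x_t(u_{1:t-1}),u_{t}} - l^*_t\rpar{y_t(u_{t-H:t-1}),u_{t}},
\end{equation*}
so the two loss values differ only in their first argument, while sharing the same second argument $u_t$.

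First I would verify that both points $(x_t(u_{1:t-1}), u_t)$ and $(y_t(u_{t-H:t-1}), u_t)$ lie inside the set $\{(x,u);\norms{x}\leq \gamma^{-1}(\kappa U+W),\norms{u}\leq U\}$ on which Assumption~\ref{assumption:loss} gives $L^*$-Lipschitzness in each argument separately. The bound $\norms{u_t}\leq U$ comes directly from Assumption~\ref{assumption:system} (here we are implicitly taking $u_{1:t}$ to be admissible actions, which is how the lemma is used in Theorem~\ref{thm:tracking}), and the two state-norm bounds $\norms{x_t(u_{1:t-1})},\norms{y_t(u_{t-H:t-1})}\leq \gamma^{-1}(\kappa U+W)$ are exactly the first two inequalities of Lemma~\ref{lemma:stateactionbound}.

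Next I would apply the per-argument Lipschitz property to the first argument, holding the second argument fixed at $u_t$:
\begin{equation*}
\abs{l^*_t\rpar{x_t(u_{1:t-1}),u_{t}} - l^*_t\rpar{y_t(u_{t-H:t-1}),u_{t}}} \le L^* \norms{x_t(u_{1:t-1}) - y_t(u_{t-H:t-1})}.
\end{equation*}
Finally, the third inequality of Lemma~\ref{lemma:stateactionbound} bounds the right-hand side by $\gamma^{-1}L^*(\kappa U + W)(1-\gamma)^H$, which is exactly the claimed bound.

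This proof is essentially a one-line Lipschitz estimate, so there is no real obstacle; the only point requiring some attention is bookkeeping to ensure that both evaluation points stay inside the domain of Lipschitzness, which is handled by Lemma~\ref{lemma:stateactionbound} together with Assumption~\ref{assumption:system}.
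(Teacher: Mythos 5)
Your proof is correct and matches the paper's own (very brief) argument: the paper simply notes that Lemma~\ref{lemma:loss_approx} ``directly follows from the previous lemma and the Lipschitzness of $l^*_t$,'' which is exactly the one-line Lipschitz estimate you carry out, including the bookkeeping check that both evaluation points lie in the stated domain of Lipschitzness.
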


Finally, we characterize the Lipschitzness of the ideal loss function $f_t$. 

\begin{lemma}\label{lemma:f_lipschitz}
For all $t\geq 1$ and $h\in[0:H]$, with any $\tilde u_{t-h}$ and $u_{t-H:t}$, 
\begin{equation*}
\left|f_t(u_{t-H:t})-f_t(u_{t-H:t-h-1},\tilde u_{t-h},u_{t-h+1:t})\right|\leq
\kappa L^*\norm{u_{t-h}-\tilde u_{t-h}}.
\end{equation*}
\end{lemma}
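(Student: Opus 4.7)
The plan is to split the proof into two cases depending on whether the perturbed coordinate is the current action ($h=0$) or a past action ($h\geq 1$), and in each case reduce the bound to a direct application of Assumption~\ref{assumption:loss} combined with the explicit form of $y_t$.

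For the case $h=0$, only the second argument of $l^*_t$ in the definition $f_t(u_{t-H:t})=l^*_t(y_t(u_{t-H:t-1}), u_t)$ changes, while $y_t(u_{t-H:t-1})$ is identical on both sides. I would invoke Assumption~\ref{assumption:loss} to get a bound of $L^*\|u_t-\tilde u_t\|$, then note $\kappa\geq 1$ to absorb it into the desired $\kappa L^*$ factor. Before invoking Lipschitzness, I need both arguments to lie inside the set on which $l^*_t$ is Lipschitz: $\|u_t\|\leq U$ and $\|\tilde u_t\|\leq U$ by the action bound in Assumption~\ref{assumption:system} (one should note that $\tilde u_{t-h}$ is being treated as a feasible action here, i.e.\ $\|\tilde u_{t-h}\|\leq U$), and $\|y_t\|\leq \gamma^{-1}(\kappa U+W)$ from Lemma~\ref{lemma:stateactionbound}.

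For the case $h\geq 1$, the perturbed coordinate $u_{t-h}$ appears only inside $y_t$, not as the second argument of $l^*_t$. So the difference equals
\begin{equation*}
\bigl| l^*_t\bigl(y_t(u_{t-H:t-1}),u_t\bigr)-l^*_t\bigl(y_t(u_{t-H:t-h-1},\tilde u_{t-h},u_{t-h+1:t-1}),u_t\bigr)\bigr|.
\end{equation*}
Using the explicit sum defining $y_t$, all terms cancel except the $i=t-h$ term, leaving a difference of
\begin{equation*}
\Bigl(\prod_{j=t-h+1}^{t-1}A_j\Bigr)B_{t-h}(u_{t-h}-\tilde u_{t-h}),
\end{equation*}
whose norm is at most $(1-\gamma)^{h-1}\cdot\kappa\cdot\|u_{t-h}-\tilde u_{t-h}\|\leq \kappa\|u_{t-h}-\tilde u_{t-h}\|$ by Assumption~\ref{assumption:system}. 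Then Lipschitzness of $l^*_t$ in its first argument yields the bound $L^*\cdot\kappa\|u_{t-h}-\tilde u_{t-h}\|$, which is exactly $\kappa L^*\|u_{t-h}-\tilde u_{t-h}\|$.

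There is no real obstacle here; the only thing that requires care is confirming that the two state arguments of $l^*_t$ each live in the domain specified in Assumption~\ref{assumption:loss}, which follows from the norm bound in Lemma~\ref{lemma:stateactionbound} applied to both action sequences (and from the implicit assumption that $\tilde u_{t-h}$ is a feasible action with $\|\tilde u_{t-h}\|\leq U$, so that Lemma~\ref{lemma:stateactionbound} applies to the perturbed sequence as well). Everything else is bookkeeping: expanding $y_t$, telescoping to isolate the $i=t-h$ summand, and chaining submultiplicativity of the spectral norm.
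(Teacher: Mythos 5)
Your proposal is correct and follows essentially the same two-case argument as the paper: for $h=0$ apply Lipschitzness of $l^*_t$ in its second argument directly, and for $h\geq 1$ isolate the $i=t-h$ term in the explicit sum for $y_t$, bound its norm by $\kappa(1-\gamma)^{h-1}\|u_{t-h}-\tilde u_{t-h}\|$ via Assumption~\ref{assumption:system}, and then apply Lipschitzness in the first argument. Your extra remark that $\tilde u_{t-h}$ must satisfy $\|\tilde u_{t-h}\|\leq U$ for Lemma~\ref{lemma:stateactionbound} and Assumption~\ref{assumption:loss} to apply is a fair observation; the paper leaves this implicit but uses the lemma only with comparator actions drawn from $\ball^{d_u}(0,U)$, so the constraint is always met in context.
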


\begin{proof}[Proof of Lemma~\ref{lemma:f_lipschitz}]
If $h\neq 0$, we consider the difference in the ideal state. 
\begin{align*}
&\norm{y_{t}(u_{t-H:t-1})-y_t(u_{t-H:t-h-1},\tilde u_{t-h},u_{t-h+1:t-1})}\\
=~&\norm{\rpar{\prod_{j=t-h+1}^{t-1}A_j}B_{t-h}\rpar{u_{t-h}-\tilde u_{t-h}}}\leq \kappa(1-\gamma)^{h-1}\norm{u_{t-h}-\tilde u_{t-h}}.
\end{align*}
Applying the Lipschitzness of $l^*_t$, 
\begin{equation*}
\abs{f_t(u_{t-H:t})-f_t(u_{t-H:t-h-1},\tilde u_{t-h},u_{t-h+1:t})}
\leq \kappa L^*(1-\gamma)^{h-1}\norm{u_{t-h}-\tilde u_{t-h}}.
\end{equation*}
If $h=0$, then directly from the Lipschitzness of $l^*_t$, 
\begin{equation*}
\abs{f_t(u_{t-H:t})-f_t(u_{t-H:t-h-1},\tilde u_{t-h},u_{t-h+1:t})}\leq L^*\norm{u_{t-h}-\tilde u_{t-h}}.
\end{equation*}
Combining the above completes the proof. 
\end{proof}

\begin{lemma}\label{lemma:ftilde_lipschitz}
For all $t$, let $\tilde f_t(u)=f_t(u,\ldots,u)$. Then, for all $u,\tilde u\in\ball^{d_u}(0,U)$, 
\begin{equation*}
\abs{\tilde f_t(u)-\tilde f_t(\tilde u)}\leq 2\kappa\gamma^{-1}L^*\norm{u-\tilde u}.
\end{equation*}
\end{lemma}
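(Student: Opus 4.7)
The plan is to telescope across the $H+1$ arguments of $f_t$, applying Lemma~\ref{lemma:f_lipschitz} once per coordinate. Concretely, I would introduce the hybrid sequences $v^{(h)} = (\underbrace{\tilde u, \ldots, \tilde u}_{H-h+1 \text{ copies}}, \underbrace{u, \ldots, u}_{h \text{ copies}})$ for $h = 0, 1, \ldots, H+1$, so that $v^{(H+1)} = (u, \ldots, u)$ and $v^{(0)} = (\tilde u, \ldots, \tilde u)$. Then $\tilde f_t(u) - \tilde f_t(\tilde u) = \sum_{h=0}^{H} [f_t(v^{(h+1)}) - f_t(v^{(h)})]$, and each consecutive pair differs in exactly one slot.

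Next I would invoke Lemma~\ref{lemma:f_lipschitz} on each pair. The slot where $v^{(h+1)}$ and $v^{(h)}$ disagree corresponds to position $t-h$ (in the indexing of Lemma~\ref{lemma:f_lipschitz}), so the $h$-th term is bounded by $L^* \|u - \tilde u\|$ when $h = 0$ and by $\kappa L^* (1-\gamma)^{h-1} \|u - \tilde u\|$ when $h \geq 1$. Summing via the triangle inequality and evaluating the geometric series $\sum_{h=1}^H (1-\gamma)^{h-1} \leq \gamma^{-1}$ gives
\begin{equation*}
|\tilde f_t(u) - \tilde f_t(\tilde u)| \leq \left(L^* + \kappa L^* \gamma^{-1}\right) \|u - \tilde u\|.
\end{equation*}

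Finally, using the standing assumption $\kappa \geq 1$ and $\gamma \in (0,1]$, we have $L^* \leq \kappa \gamma^{-1} L^*$, so the bracketed constant is at most $2\kappa \gamma^{-1} L^*$, which is exactly the claimed Lipschitz constant. There is no real obstacle here: the only thing to be careful about is the indexing convention in Lemma~\ref{lemma:f_lipschitz} (the slot $t-h$ versus the position in the tuple) and making sure the $h=0$ case is handled separately since the bound in Lemma~\ref{lemma:f_lipschitz} has a different form there. Both points are purely bookkeeping.
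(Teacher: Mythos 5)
Your telescoping argument is sound and arrives at the right constant, but it follows a slightly different route from the paper. The paper's proof avoids the slot-by-slot decomposition entirely: it writes $\tilde y_t(u)-\tilde y_t(\tilde u)=\sum_{i=t-H}^{t-1}\bigl(\prod_{j=i+1}^{t-1}A_j\bigr)B_i(u-\tilde u)$, bounds its norm by $\kappa\gamma^{-1}\|u-\tilde u\|$ in one shot via the same geometric series, and then applies the Lipschitzness of $l^*_t$ once in each of its two arguments (state and action). Your telescoping through the $H+1$ hybrid tuples accounts for exactly the same two effects — the $h=0$ term is the direct action dependence, and the $h\geq 1$ terms reproduce the expansion of $\tilde y_t(u)-\tilde y_t(\tilde u)$ — so the two proofs are essentially isomorphic, with yours being a bit more granular and the paper's a bit more compact.

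One bookkeeping point worth flagging: you say you ``invoke Lemma~\ref{lemma:f_lipschitz} on each pair,'' but the bound you actually use, $\kappa L^*(1-\gamma)^{h-1}\|u-\tilde u\|$ for $h\geq 1$, is the \emph{intermediate} estimate proved inside Lemma~\ref{lemma:f_lipschitz}, not its stated conclusion (which only asserts the uniform bound $\kappa L^*\|u_{t-h}-\tilde u_{t-h}\|$). If you used only the statement of Lemma~\ref{lemma:f_lipschitz}, the telescoping sum would give $(H+1)\kappa L^*\|u-\tilde u\|$, which, since $H\geq 2\gamma^{-1}$ by the paper's choice of $H$, is weaker than the claimed $2\kappa\gamma^{-1}L^*\|u-\tilde u\|$. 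So you need to either re-derive the $(1-\gamma)^{h-1}$ decay (one line, as in the proof of Lemma~\ref{lemma:f_lipschitz}) or reference that intermediate bound explicitly rather than the lemma as stated. With that small fix the argument is complete.
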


\begin{proof}[Proof of Lemma~\ref{lemma:ftilde_lipschitz}]
For conciseness, let $\tilde y_t(u)=y_t(u,\ldots,u)$. Then, 
\begin{equation*}
\norm{\tilde y_t(u)-\tilde y_t(\tilde u)}=\norm{\sum_{i=t-H}^{t-1}\rpar{\prod_{j=i+1}^{t-1}A_j}B_i\rpar{u-\tilde u}}\leq \kappa\norm{u-\tilde u}\sum_{i=t-H}^{t-1}(1-\gamma)^{t-i-1}
\leq \kappa\gamma^{-1}\norm{u-\tilde u}.
\end{equation*}
The result follows from the Lipschitzness of $l^*_t$. 
\end{proof}

Now we are ready to prove Theorem~\ref{thm:tracking}. 

\tracking*

\begin{proof}[Proof of Theorem~\ref{thm:tracking}]
For all $u^C_{1:T}\in \mathcal{C}_\I$,
\begin{align*}
&\sum_{t=a}^bl^*_t\rpar{x_t(u^A_{1:t-1}),u^A_{t}}-\sum_{t=a}^bl^*_t\rpar{x_t(u^C_{1:t-1}),u^C_{t}}\\
=~&\sum_{t=a}^b\spar{l^*_t\rpar{x_t(u^A_{1:t-1}),u^A_{t}}-f_t(u^A_{t-H:t})}+\sum_{t=a}^b\spar{f_t(u^A_{t-H:t})-f_t(u^C_{t-H:t})}+\sum_{t=a}^b\spar{f_t(u^C_{t-H:t})-l^*_t\rpar{x_t(u^C_{1:t-1}),u^C_{t}}}\\
\leq~&2\gamma^{-1}L^*(\kappa U+W)(1-\gamma)^H\abs{\I}+\sum_{t=a}^b\spar{f_t(u^A_{t-H:t})-f_t(u^C_{b},\ldots,u^C_{b})},
\end{align*}
where the last inequality is due to Lemma~\ref{lemma:loss_approx}. From our choice of $H$, we have $(1-\gamma)^H\abs{\I}\leq 1$. Therefore, the first term on the RHS is a constant that can be neglected in our result. The second term on the RHS is upper-bounded by the strongly adaptive regret on $f_t$. 
\end{proof}

\subsection{Non-comparative tracking error bound}

In the following we consider the example from Section~\ref{section:tracking}. The comparative regret guarantee from Theorem~\ref{thm:tracking} translates to a non-comparative tracking error bound. 

\special*

\begin{proof}[Proof of Corollary~\ref{thm:special}]
We start by characterizing the power of the comparator class (Definition~\ref{definition:comparator}). From Example~\ref{example}, there exists some $u^*$ such that $x^*_\I=B_\I u^*$. Consider the comparator sequence $u^C_{1:T}\in\mathcal{C}_\I$ such that $u^C_t=u^*$ for all $t\in[1:T]$. From the system equation, for all $t\in[a:b]$,
\begin{align*}
x_{t+1}(u^C_{1:t})&=A_tx_{t}(u^C_{1:t-1})+B_t u^*+w_t\\
&=A_t\spar{x_{t}(u^C_{1:t-1})-(I-A_t)^{-1}B_tu^*}+(I-A_t)^{-1}B_tu^*+w_t\\
&=A_t\spar{x_{t}(u^C_{1:t-1})-x^*_\I}+x^*_\I+w_t.
\end{align*}
Rearranging the terms and applying norms on both sides, 
\begin{equation*}
\norm{x_{t+1}(u^C_{1:t})-x^*_\I}\leq (1-\gamma)\norm{x_{t}(u^C_{1:t-1})-x^*_\I}+W.
\end{equation*}
From Lemma~\ref{lemma:stateactionbound}, 
\begin{equation*}
\norm{x_{a}(u^C_{1:a-1})-x^*_\I}\leq \norm{x_{a}(u^C_{1:a-1})}+\norm{x^*_\I}\leq \gamma^{-1}(\kappa U+W)+\kappa U\norm{(I-A_t)^{-1}}\leq \gamma^{-1}(2\kappa U+W).
\end{equation*}
Following a recursion, for all $t\in[a:b]$,
\begin{align*}
\norm{x_{t}(u^C_{1:t-1})-x^*_\I}&\leq \gamma^{-1}(2\kappa U+W)(1-\gamma)^{t-a}+W\sum_{i=0}^{t-a}(1-\gamma)^i\\
&\leq \gamma^{-1}W+\gamma^{-1}(2\kappa U+W)(1-\gamma)^{t-a}.
\end{align*}
\begin{align}
\nonumber\frac{1}{t-a+1}\sum_{i=a}^t\norm{x_{i}(u^C_{1:i-1})-x^*_\I}&\leq \gamma^{-1}W+\gamma^{-1}(2\kappa U+W)\cdot \frac{1}{t-a+1}\sum_{i=0}^{t-a}(1-\gamma)^i\\
&\leq \gamma^{-1}W+\gamma^{-2}(2\kappa U+W)(t-a+1)^{-1}.\label{eq:comparatorspecial}
\end{align}

Next, consider the regret of Algorithm~\ref{algorithm:controller}. Applying Theorem~\ref{thm:tracking} on all time intervals $[a:t]$ with $t\in[a:b]$, we have
\begin{equation*}
\sum_{i=a}^{t}\norm{x_i(u^A_{1:i-1})-x^*_\I}-\sum_{i=a}^{t}\norm{x_i(u^C_{1:i-1})-x^*_\I}=\tilde O\rpar{\sqrt{t-a+1}},
\end{equation*}
where $\tilde O(\cdot)$ subsumes polynomial factors on problem constants and poly-logarithmic factors on $T$. Normalizing on both sides, 
\begin{equation}
\frac{1}{t-a+1}\rpar{\sum_{i=a}^{t}\norm{x_i(u^A_{1:i-1})-x^*_\I}-\sum_{i=a}^{t}\norm{x_i(u^C_{1:i-1})-x^*_\I}}=\tilde O\rpar{(t-a+1)^{-1/2}}.\label{eq:regretspecial}
\end{equation}
Combining (\ref{eq:comparatorspecial}) and (\ref{eq:regretspecial}) completes the proof. 
\end{proof}

\section{Experiments}\label{section:experiments}

In this section we test the proposed approach on three separate levels: (i) One-dimensional movement-aware OLO (Algorithm~\ref{algorithm:1d}); (ii) Strongly adaptive OCOM (Algorithm~\ref{algorithm:meta}); (iii) Adversarial tracking control (Algorithm~\ref{algorithm:controller}). 

\subsection{One-dimensional movement-aware OLO}\label{subsection:experiment1d}

First, we test our one-dimensional movement-aware OLO algorithm (Algorithm~\ref{algorithm:1d}). The domain is the interval $[0,R]\subset\R$, and the loss functions are defined as $l_t(x)=|x-x^*|$, where $x^*$ is a fixed ``target''. Throughout this experiment, we set hyperparameters $\gamma=0$, $\eps=1$ and $G=1$ since the loss functions are 1-Lipschitz. 

\begin{figure}[ht]
\vspace{-10pt}
\centering
\begin{subfigure}{.49\textwidth}
  \centering
  \includegraphics[width=\textwidth]{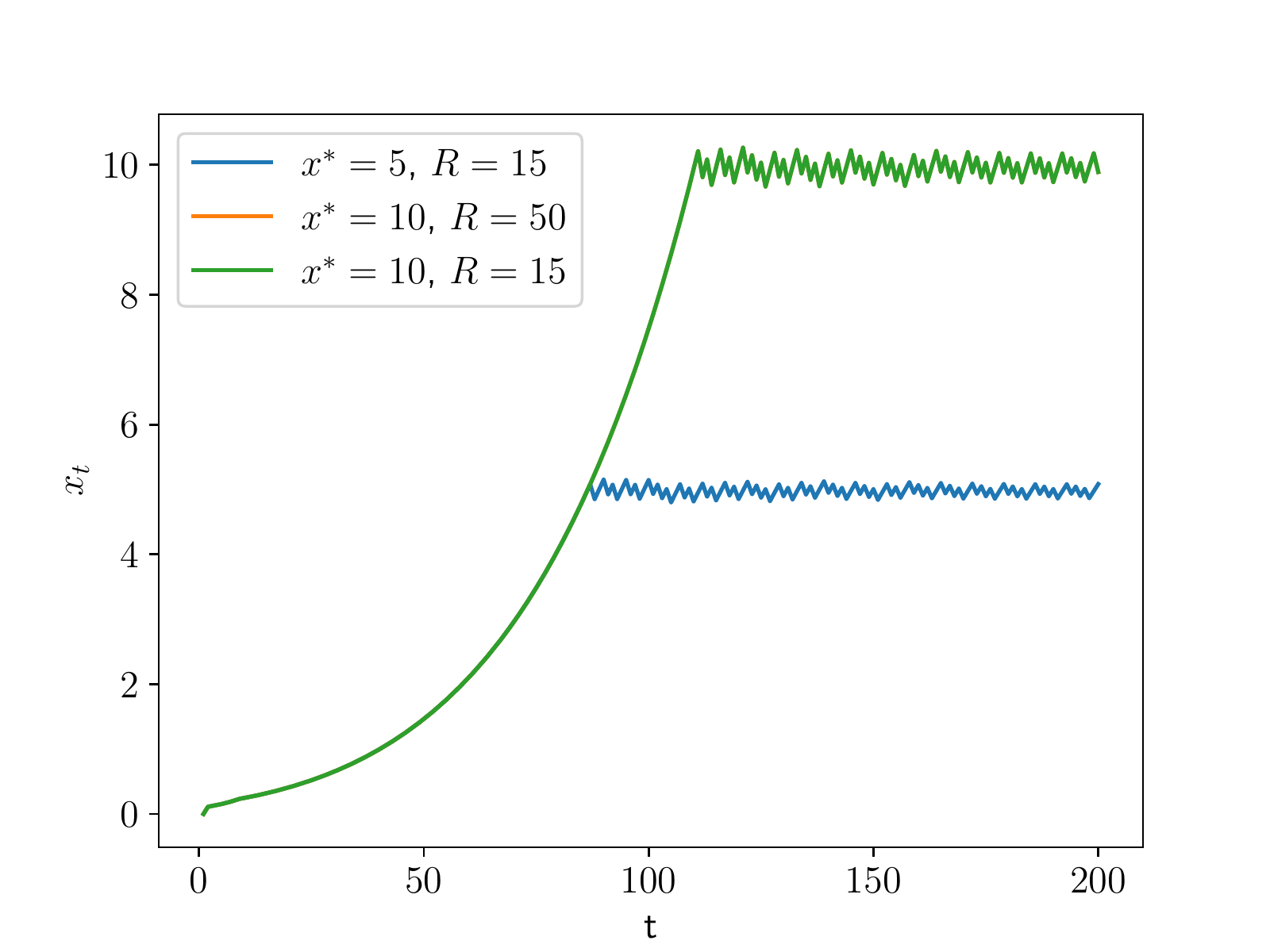}
  \caption{Varying $x^*$ and $R$.}
  \label{figure:exp1_fig1}
\end{subfigure}%
\hfill
\begin{subfigure}{.49\textwidth}
  \centering
  \includegraphics[width=\textwidth]{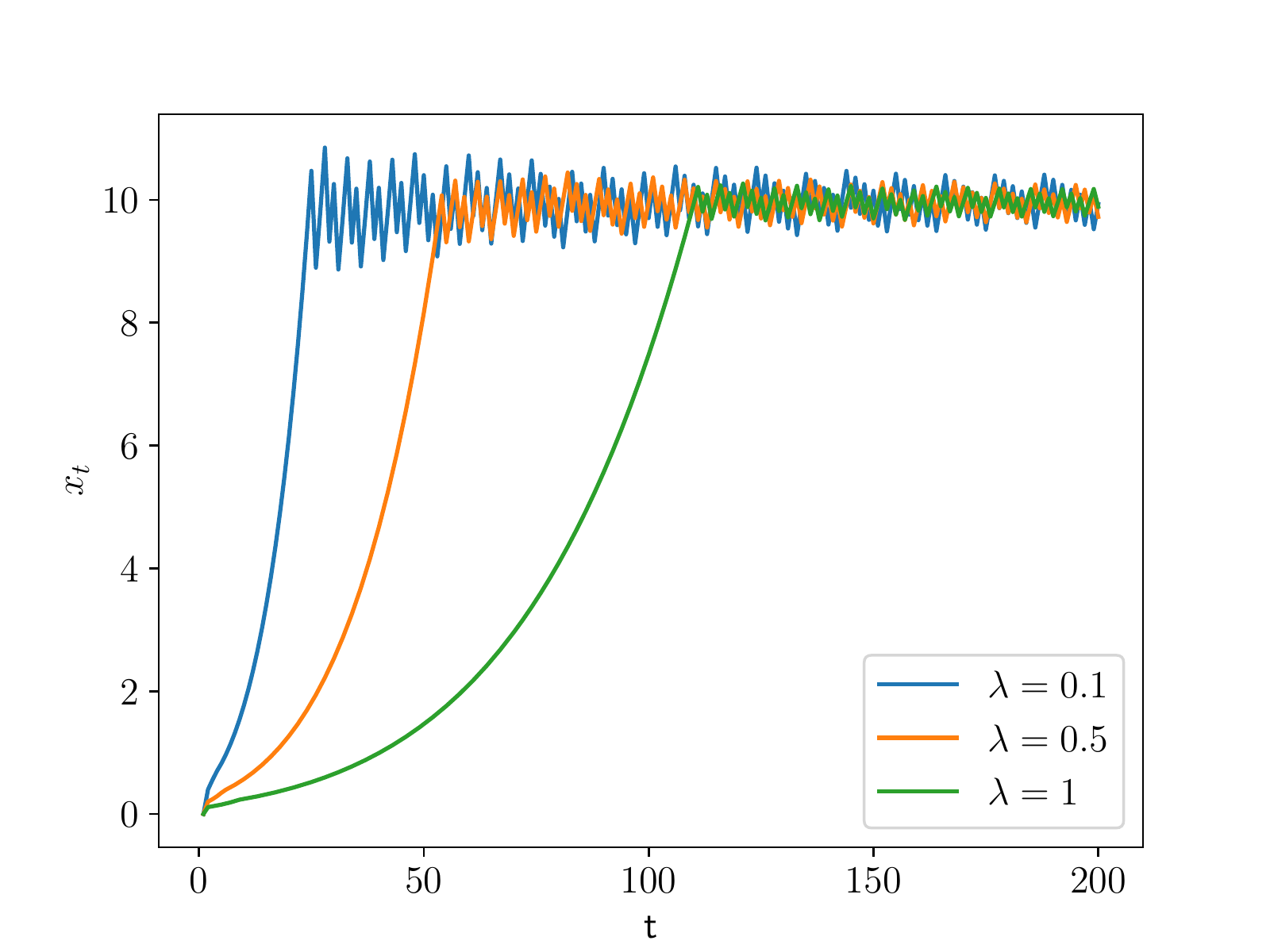}
  \caption{Varying $\lambda$.}
  \label{figure:exp1_fig2}
\end{subfigure}
\caption{Experiments on Algorithm~\ref{algorithm:1d}.}
\end{figure}

In Figure~\ref{figure:exp1_fig1} we vary (i) the target $x^*$; and (ii) the size of the domain $R$. 
\begin{enumerate}
\item Consider the green line and the orange line (which is completely covered by the former). In this case, increasing the size of the domain leaves the performance of the algorithm unchanged. This is different from standard Online Gradient Descent (OGD) where the correct learning rate depends on the size of the domain. 
\item Consider the blue line and the green line. Starting from the origin, the predictions of the algorithm approach the target with \emph{exponentially increasing speed}, without knowing the target in advance. This is also different from OGD, where the speed of approaching the target is constant (with constant learning rate) or decreasing (with time-varying learning rate). 
\end{enumerate}

In general, Algorithm~\ref{algorithm:1d} exhibits the advantage of \emph{parameter-free} online learning algorithms: the algorithm works well without depending on the optimal comparator norm ($\norms{x^*}$) or its (possibly very loose) upper bound $R$, hence requiring less tuning than OGD.

In Figure~\ref{figure:exp1_fig2} we vary $\lambda$, the weight of movement costs. Practically, it yields another ``degree of freedom'' (beside $\eps$) for tuning the algorithm's transient response. Larger $\lambda$ means larger weight on movement costs: the algorithm moves slower initially, but has less fluctuation around the target. 

\subsection{Strongly adaptive OCOM}
Next, we test our strongly adaptive OCOM algorithm (Algorithm~\ref{algorithm:meta}). For easier visualization, we set the domain as $\V=[-5,5]\subset\R$. Let the memory constant $H=5$. With a time-varying target $x^*_t$, we define the loss functions as
\begin{equation*}
l_t(x_{t-H},\ldots,x_{t})=\sum_{h=0}^H\norm{x_{t-h}-x^*_t}. 
\end{equation*}
Note that the Lipschitz constants can be chosen as $L=1$ and $\tilde G=H+1$. 

Our theoretical result requires $\eps_0=O(T^{-1})$. Although asymptotically this is correct, in practice such a small $\eps_0$ makes the algorithm too conservative at the beginning. In other words, it can take a long time for the algorithm to \emph{warm up}. Therefore, we set $\eps_0=1$ in our experiments. 

\begin{figure}[ht]
\vspace{-10pt}
\centering
\begin{subfigure}{.49\textwidth}
  \centering
  \includegraphics[width=\textwidth]{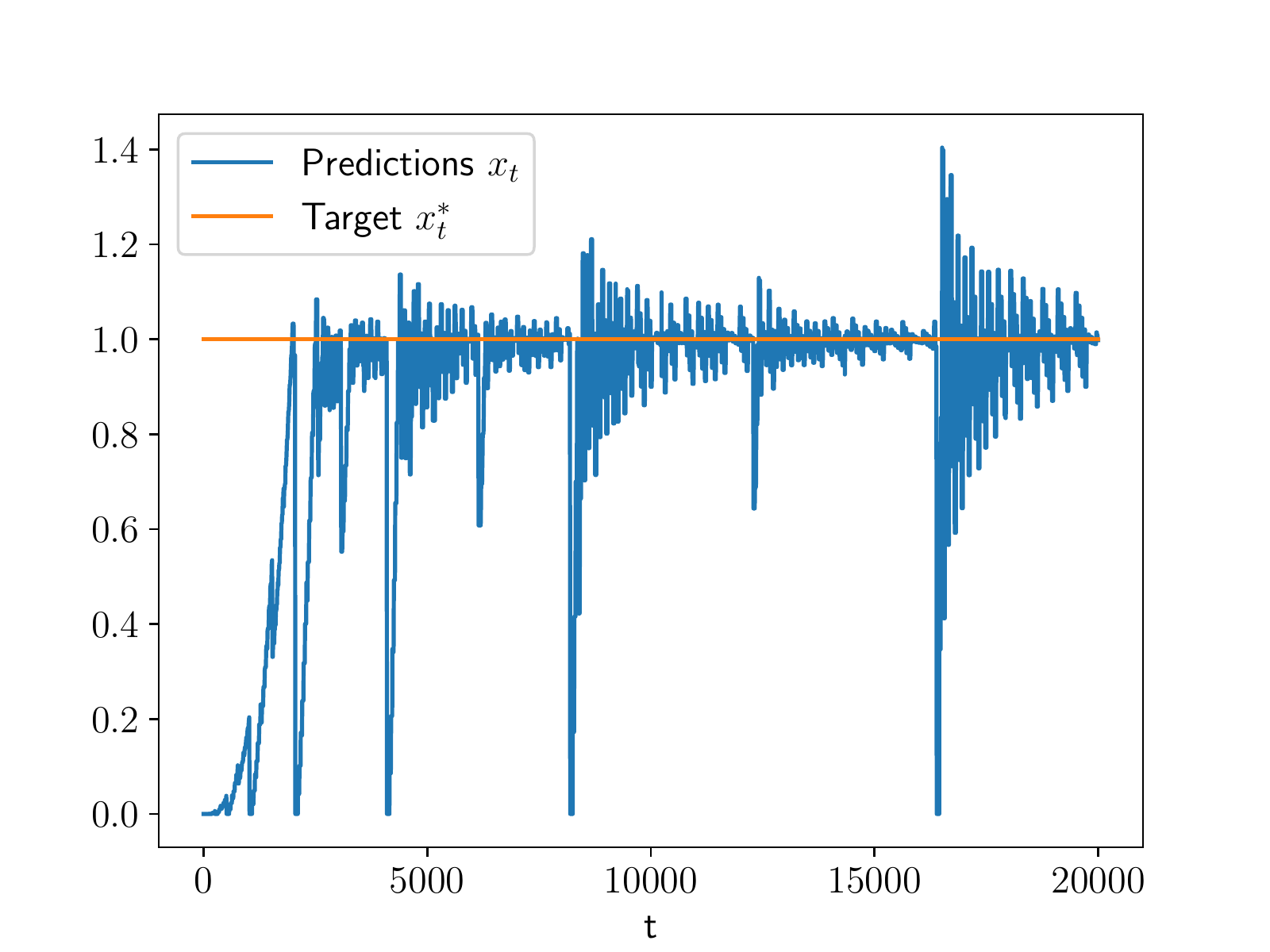}
  \caption{$x^*_t$ is a step signal.}
  \label{figure:exp2_fig1}
\end{subfigure}%
\hfill
\begin{subfigure}{.49\textwidth}
  \centering
  \includegraphics[width=\textwidth]{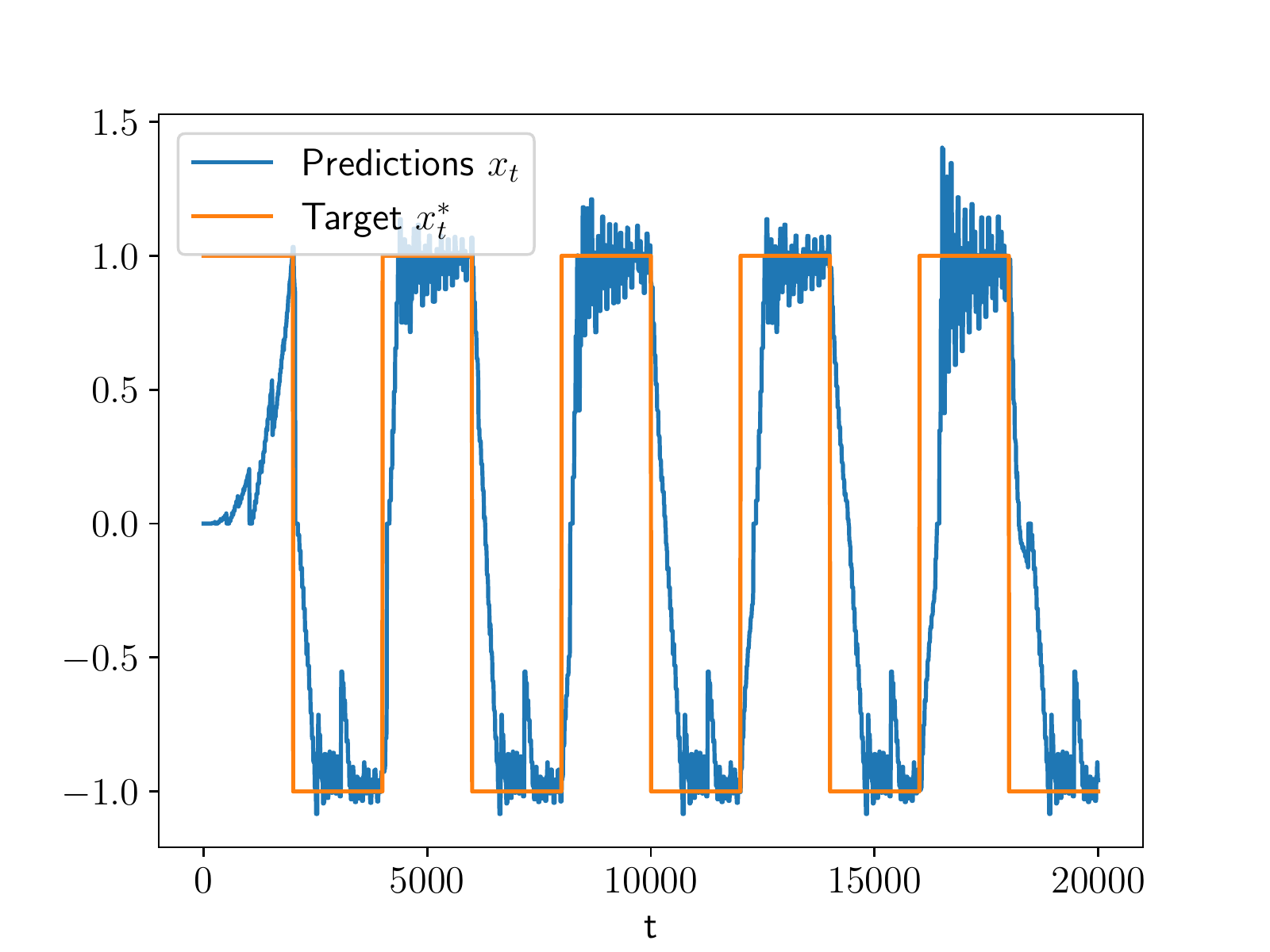}
  \caption{$x^*_t$ is a square wave.}
  \label{figure:exp2_fig2}
\end{subfigure}
\caption{Experiments on Algorithm~\ref{algorithm:meta}.}
\label{figure:alg7}
\end{figure}

We plot the result of the experiments in Figure~\ref{figure:alg7}. On the left, the target is a step signal, $x^*_t=1$. On the right, the target $x^*_t$ is a square wave with period 4000. Several observations can be made: 

\begin{enumerate}
\item In the warm-up phase of the algorithm (the first 2000 rounds), similar to the previous subsection, Algorithm~\ref{algorithm:meta} approaches the fixed target with increasing speed (if the ``dips'' are ignored). 
\item Once the predictions reach the vicinity of the fixed target, they do not monotonically converge to the target as in standard online learning algorithms. Instead, the predictions fluctuate around the target, in a pattern determined by GC intervals. The rationale of this behavior is that, Algorithm~\ref{algorithm:meta} does not \emph{know} or \emph{assume} the target is fixed; to quickly adapt to possible sudden changes of the target, the algorithm regularly forgets the past and re-explores. 
\item There is a practical issue not captured in our analysis. Every time a GC interval of a new length becomes active ($t=2^n$ for some $n\in\N$), all the subroutines are reinitialized. Consequently, Algorithm~\ref{algorithm:meta} completely forgets all the received information and restarts from the origin (since the first output of the subroutine is always at the origin). This can cause large ``dips'' in the prediction sequence (e.g, $t\approx 8000$ and $t\approx 16000$ in Figure~\ref{figure:exp2_fig1}). Such a behavior is undesirable if smooth predictions are preferred, but when the target regularly moves around the origin (Figure~\ref{figure:exp2_fig2}) this can be acceptable.  
\end{enumerate}

\subsection{A shifted version of our OCOM algorithm}

To make the prediction sequence smoother, we also test a modified version of our strongly adaptive OCOM algorithm. The idea is simple: we incorporate a shifting procedure in the subroutines. Whenever a Subroutine-ball is reinitialized, its first prediction is set as the last prediction of the previous subroutine (before re-initialization). In this way, the meta-algorithm experiences less fluctuation due to the activation and deactivation of GC intervals. 

\begin{algorithm}[ht]
\caption{A shifted version of Algorithm~\ref{algorithm:higherd}. \label{algorithm:higherdshifted}}
\begin{algorithmic}[1]
\REQUIRE Hyperparameters $(\lambda,\eps,G)$ with $\lambda\geq 0$ and $\eps,G>0$; $g_1, g_2,\ldots\in\R^{d}$ with $\norms{g_t}\leq G$, $\forall t$; a shift vector $v\in\R^d$. 
\STATE Define $\A_{r}$ as Algorithm~\ref{algorithm:1d} on the domain $[0,R+\norms{v}]$, with hyperparameters $(\lambda,\lambda,\eps,G)$.
\STATE Define $\A_B$ as \emph{Online Gradient Descent} (OGD) on $\ball^{d}(0,1)$ with learning rate $\eta_t=1/(G\sqrt{t})$, initialized at $0$.
\FOR{$t=1,2,\ldots$}
\STATE Obtain $y_t\in\R$ from $\A_{r}$ and $z_t\in\R^d$ from $\A_B$. Predict $x_t=v+y_tz_t\in\R^d$, observe $g_t\in\R^d$.

\STATE Return $\langle g_t,z_t\rangle$ and $g_t$ as the $t$-th loss subgradient to $\A_{r}$ and $\A_B$, respectively.
\ENDFOR
\end{algorithmic}
\end{algorithm}

\begin{figure}[!ht]
\vspace{-10pt}
\centering
\begin{subfigure}{.49\textwidth}
  \centering
  \includegraphics[width=\textwidth]{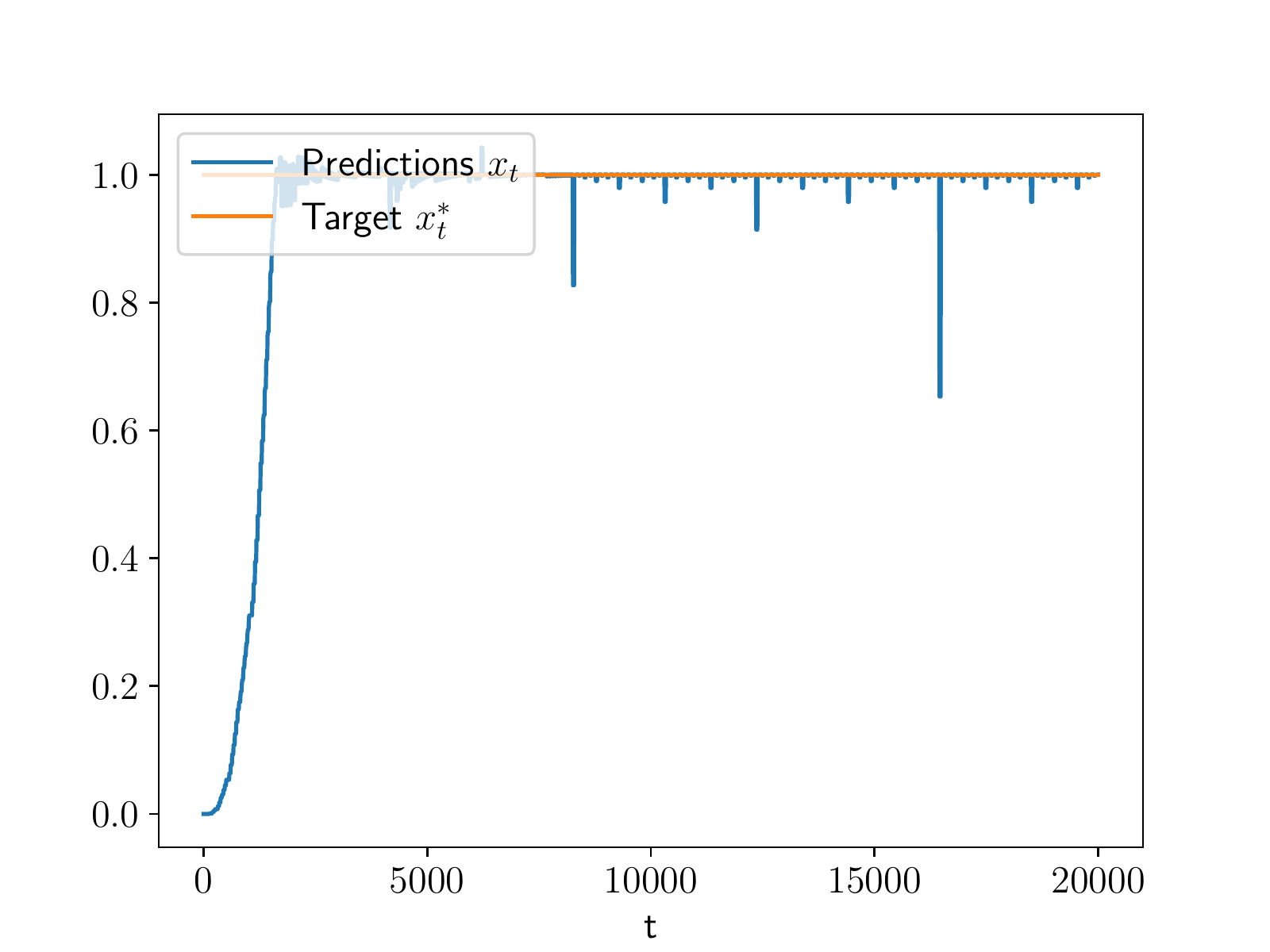}
  \caption{$x^*_t$ is a step signal.}
  \label{figure:exp3_fig1}
\end{subfigure}%
\hfill
\begin{subfigure}{.49\textwidth}
  \centering
  \includegraphics[width=\textwidth]{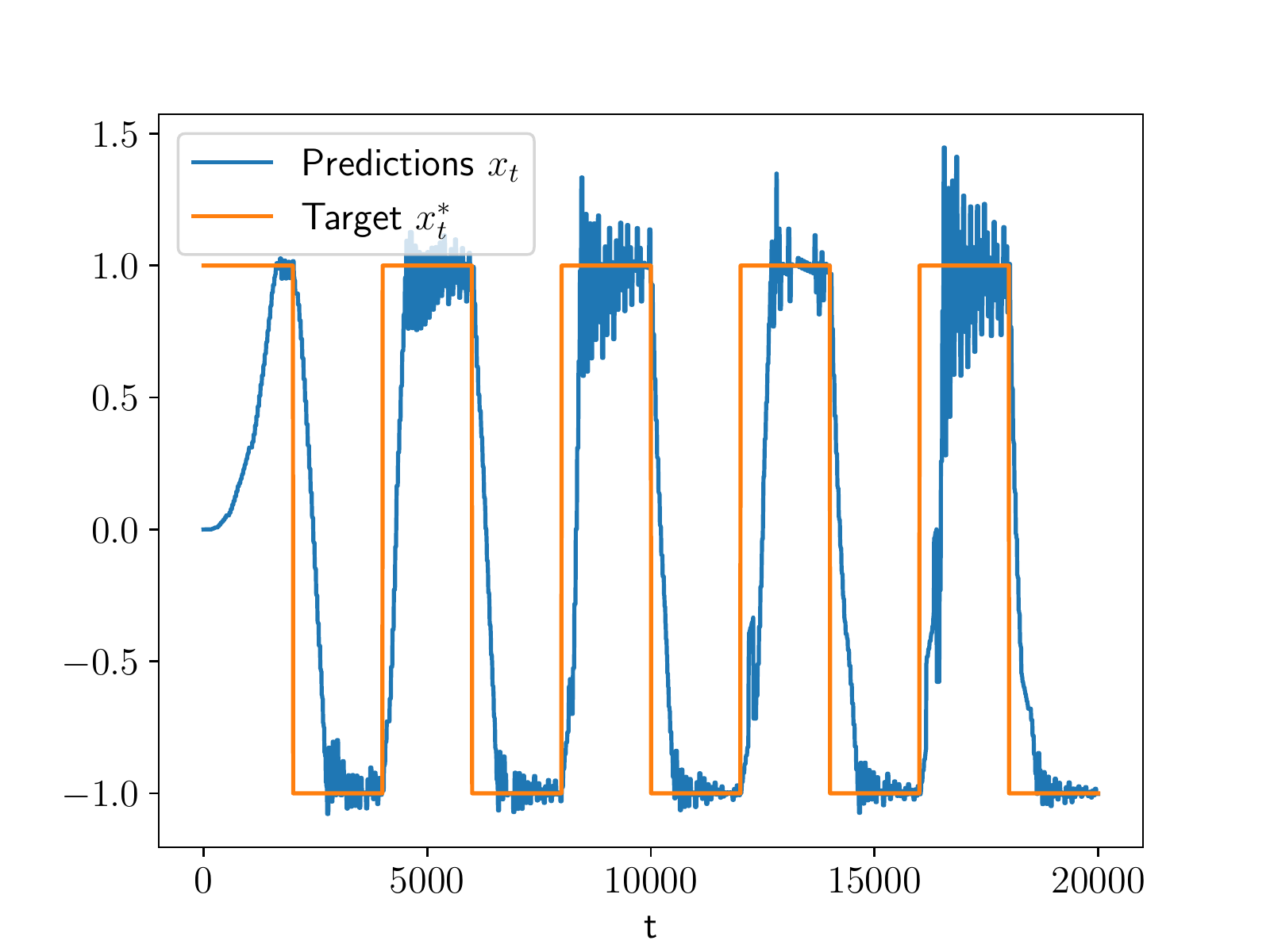}
  \caption{$x^*_t$ is a square wave.}
  \label{figure:exp3_fig2}
\end{subfigure}

\begin{subfigure}{.49\textwidth}
  \centering
  \includegraphics[width=\textwidth]{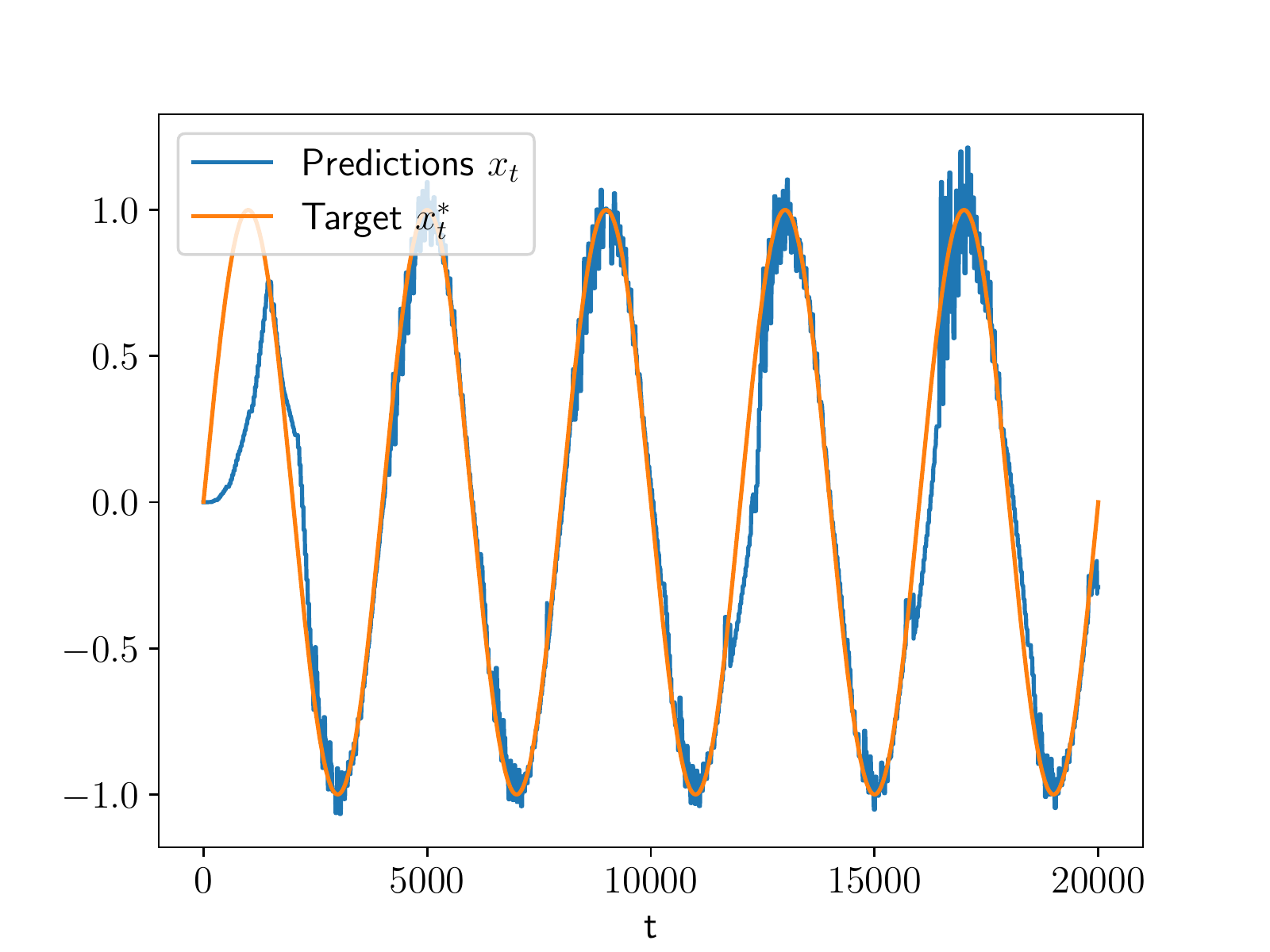}
  \caption{$x^*_t$ is a sinusoidal wave.}
  \label{figure:exp3_fig3}
\end{subfigure}
\hfill
\begin{subfigure}{.49\textwidth}
  \centering
  \includegraphics[width=\textwidth]{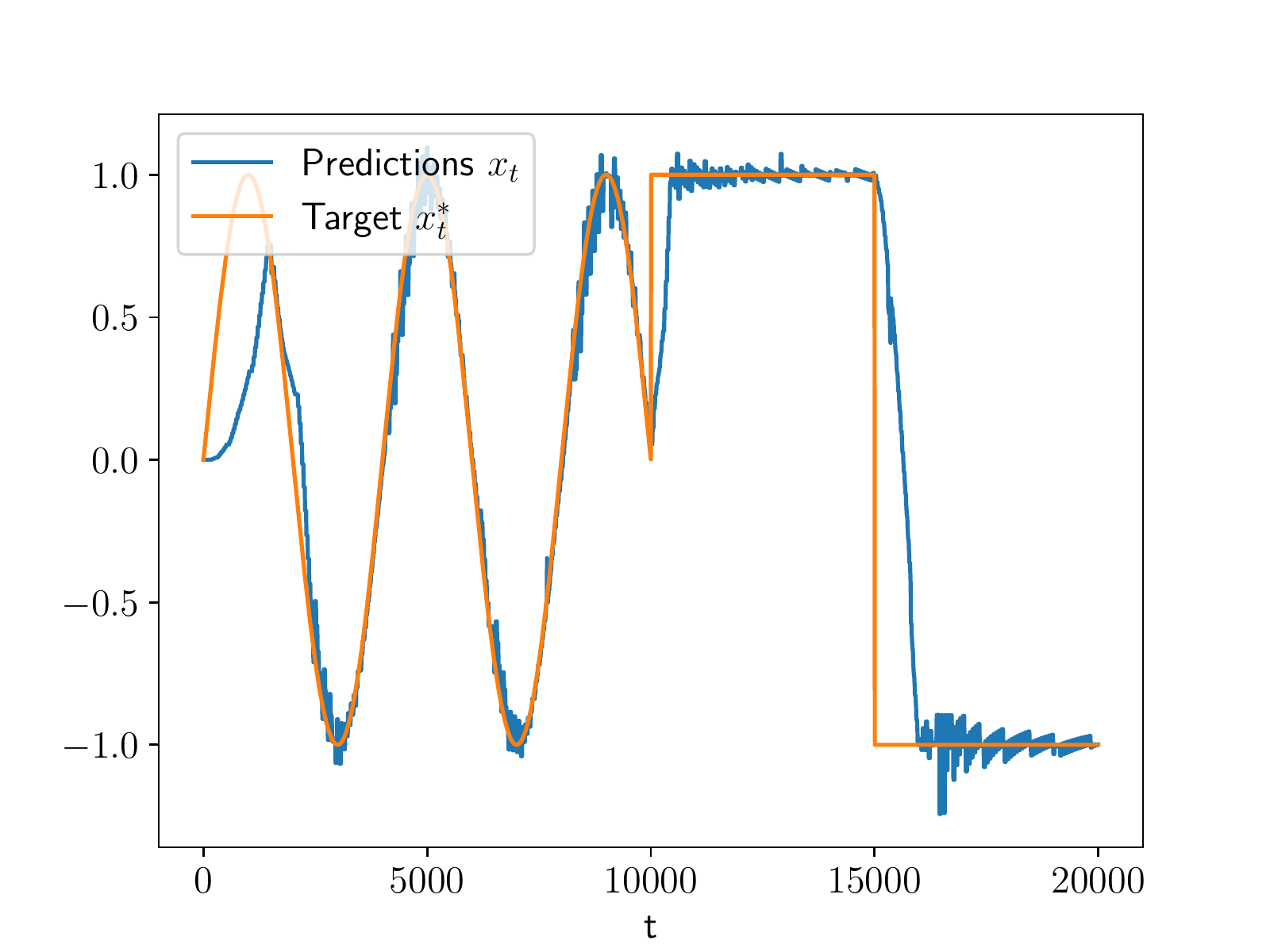}
  \caption{$x^*_t$ is a composite signal.}
  \label{figure:exp3_fig4}
\end{subfigure}
\caption{Experiments on the shifted OCOM algorithm.}
\label{figure:alg7shifted}
\end{figure}

Concretely, we first present a shifted version of Algorithm~\ref{algorithm:higherd} (high dimensional movement-aware OLO) as Algorithm~\ref{algorithm:higherdshifted}. Given a shift vector $v\in\R^d$, Algorithm~\ref{algorithm:higherdshifted} starts from predicting $v$, and all the predictions are within a larger norm ball $\ball^d(v,R+\norms{v})$ centered at $v$. Using Algorithm~\ref{algorithm:higherdshifted} as the base algorithm of Subroutine-ball, we obtain a shifted version of the latter. When using this shifted Subroutine-ball in the meta-algorithm (Algorithm~\ref{algorithm:meta}),
\begin{enumerate}
\item All the Subroutine-ball on GC intervals with indices $(k,1)$ are initialized with shift vector $v=0$. For example, at the beginning of the 2nd round, the meta-algorithm initializes $A^1_B$ with shift vector $v=0$. 
\item At the beginning of the $2^k i$-th round (with $i>1$), when reinitializing $\A^k_B$, the shift vector $v$ is set as the last prediction of the previous $\A^k_B$. For example, on the GC interval $[2:3]$ the meta-algorithm employs a Subroutine-ball $\A^1_B$. At the beginning of the 4th round, the meta-algorithm queries $A^1_B$ and assigns its prediction to a vector $v$. Then, $A^1_B$ is reinitialized with shift vector $v$. 
\end{enumerate}

Empirical results for this \emph{shifted OCOM algorithm} are presented in Figure~\ref{figure:alg7shifted}. Specifically, the targets $x^*_t$ in Figure~\ref{figure:exp3_fig1} and \ref{figure:exp3_fig2} are the same as in Figure~\ref{figure:exp2_fig1} and \ref{figure:exp2_fig2}. In Figure~\ref{figure:exp3_fig3}, $x^*_t$ is a sinusoidal wave with period 4000. In Figure~\ref{figure:exp3_fig4}, $x^*_t$ is the concatenation of a sinusoidal wave and a square wave: 
\begin{equation*}
x^*_t=\begin{cases}
\sin(\pi t/2000), &\textrm{if~}t<T/2,\\
1, &\textrm{if~}T/2\leq t< 3T/4,\\
-1, &\textrm{otherwise}. 
\end{cases}
\end{equation*}
In general, the shifted version of Algorithm~\ref{algorithm:meta} tracks the target quite well, even when the target exhibits large, sudden changes. (The ``tracking'' here refers to the concept in online learning, not linear control.) Especially, the prediction sequence exhibits less fluctuation due to the reset of GC intervals. 

\subsection{Adversarial tracking}

Finally, we test our adversarial tracking controller (Algorithm~\ref{algorithm:controller}). We consider two cases: (i) $d_x=1$; (ii) $d_x=2$. 

\paragraph{One-dimensional control}Starting from one-dimensional control, let $d_x=d_u=1$, $U=5$. The dynamics are time-varying: for all $t$, $A_t=0.55+0.05\cdot\sin(\pi t/10000)$; $B_t=0.95+0.05\cdot\sin(\pi t/5000)$. Therefore, $\kappa=1$ and $\gamma=0.4$. Further, we define the disturbances as $w_t=0.05\cdot\sin(\pi t/4000)$, $\forall t\in\N$. 

The loss functions are $l^*_t(x,u)=\norms{x-x^*_t}$, where $x^*_t$ is the adversarial reference trajectory. It is globally 1-Lipschitz, therefore $L^*=1$. 

We use the shifted version of Algorithm~\ref{algorithm:meta} as the base algorithm of our controller. Similar to the previous subsection, we set the hyperparameter as $\eps_0=0.5$. Following the procedure in Algorithm~\ref{algorithm:controller}, we set the problem constants in OCOM as: $\V\leftarrow\ball^1(0,5)$, $R\leftarrow 5$, $L\leftarrow 1$ and $\tilde G\leftarrow 5$. There is one exception: the memory $H$ defined in Algorithm~\ref{algorithm:controller} is conservative. In our experiment, we treat $H$ as a hyperparameter; specifically for the one-dimensional control experiment, we set $H=8$. Intuitively, the choice of $H$ trades off the responsiveness of the controller and its steady-state error. 

\begin{figure}[!ht]
\vspace{-10pt}
\centering
\begin{subfigure}{.49\textwidth}
  \centering
  \includegraphics[width=\textwidth]{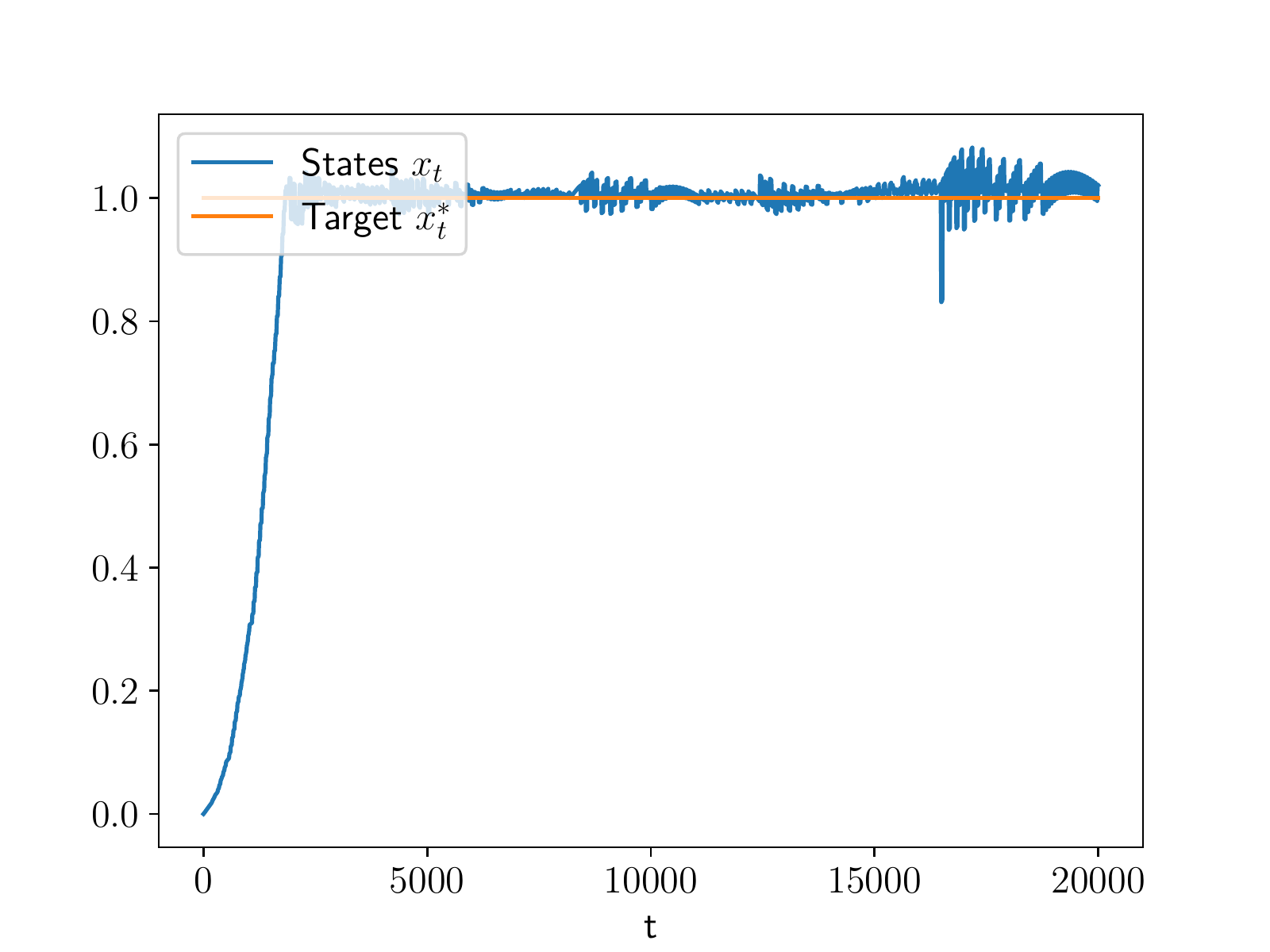}
  \caption{$x^*_t$ is a step signal.}
  \label{figure:exp5_fig1}
\end{subfigure}%
\hfill
\begin{subfigure}{.49\textwidth}
  \centering
  \includegraphics[width=\textwidth]{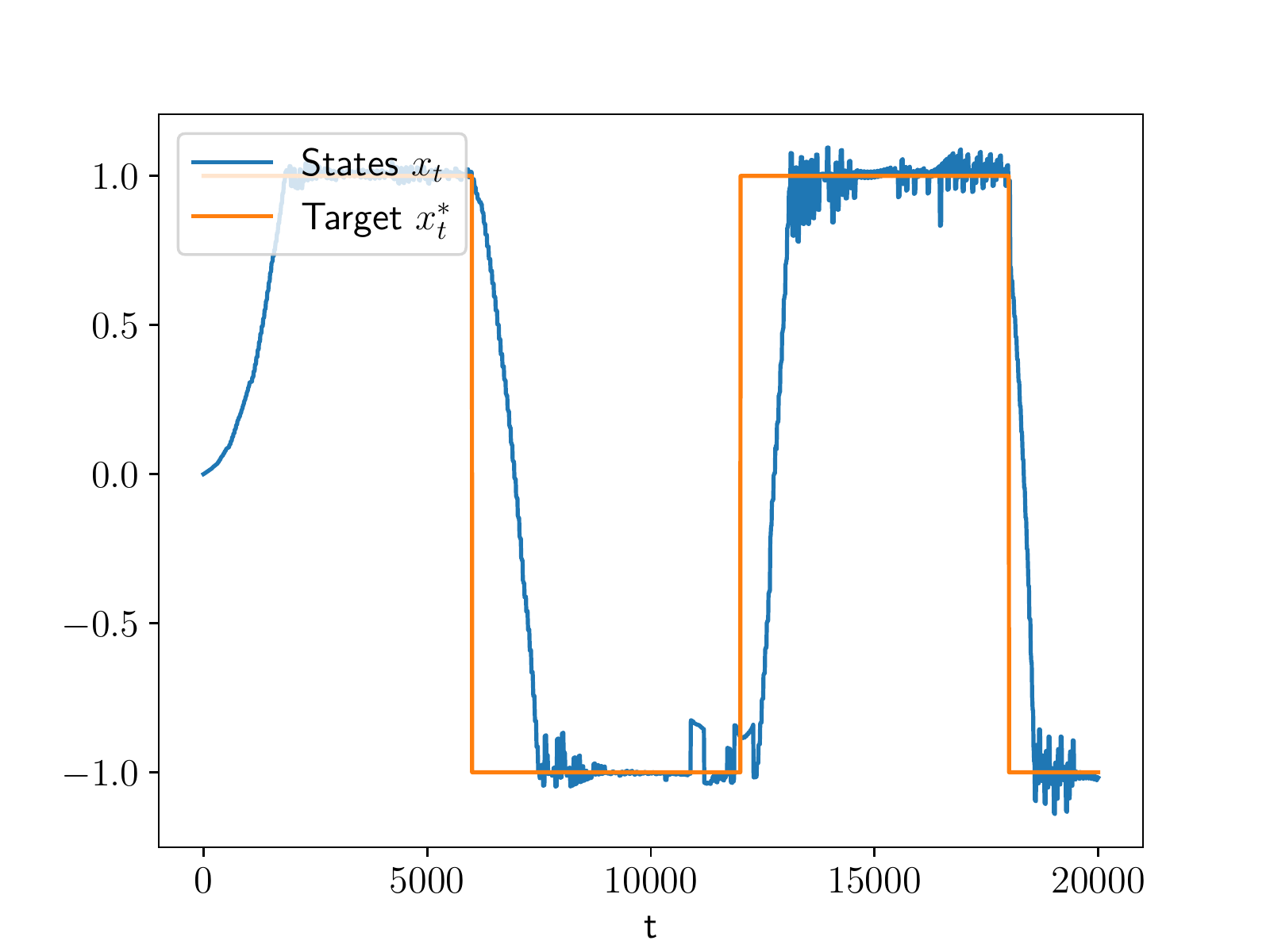}
  \caption{$x^*_t$ is a square wave.}
  \label{figure:exp5_fig2}
\end{subfigure}

\begin{subfigure}{.49\textwidth}
  \centering
  \includegraphics[width=\textwidth]{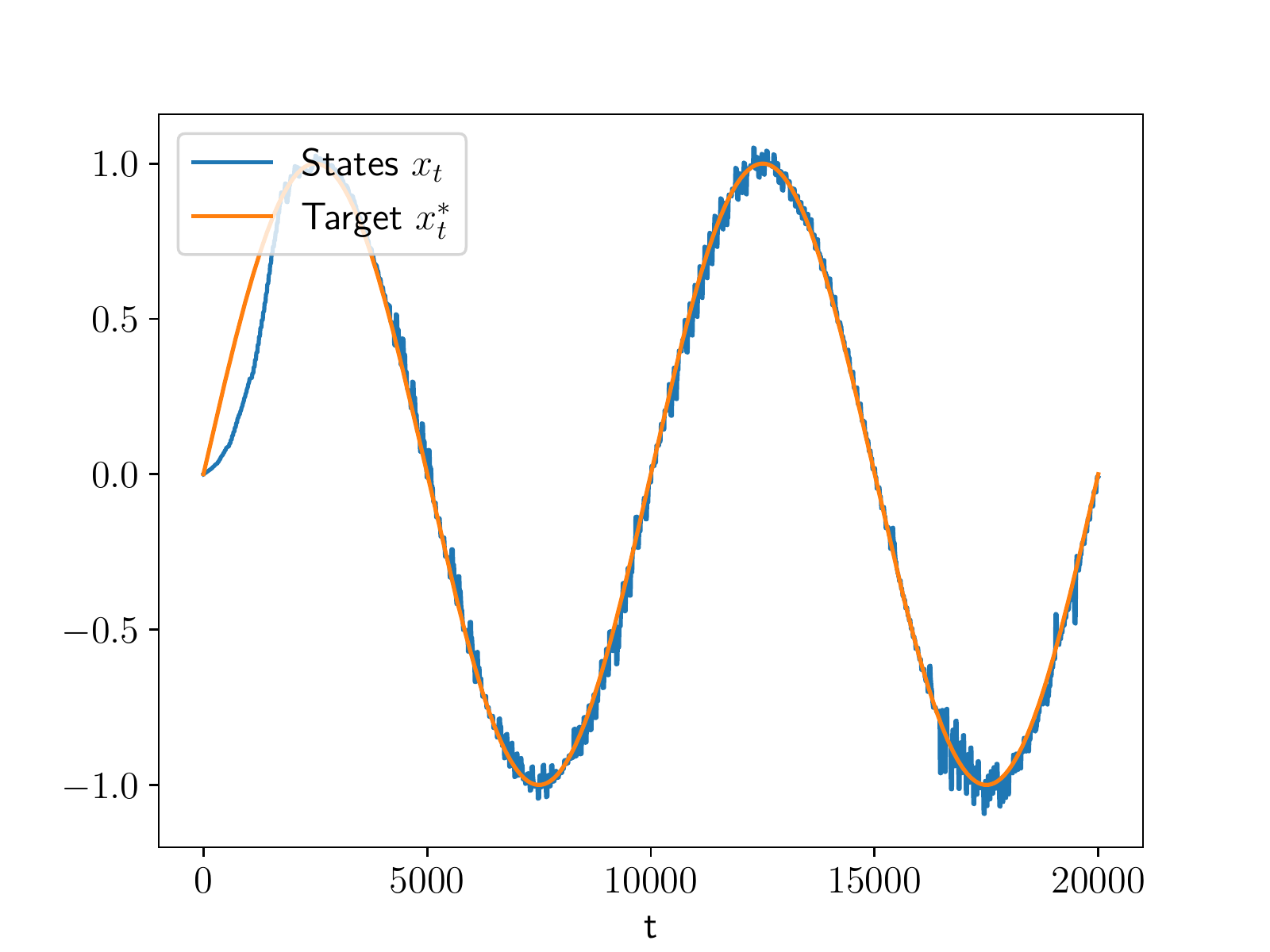}
  \caption{$x^*_t$ is a sinusoidal wave.}
  \label{figure:exp5_fig3}
\end{subfigure}
\hfill
\begin{subfigure}{.49\textwidth}
  \centering
  \includegraphics[width=\textwidth]{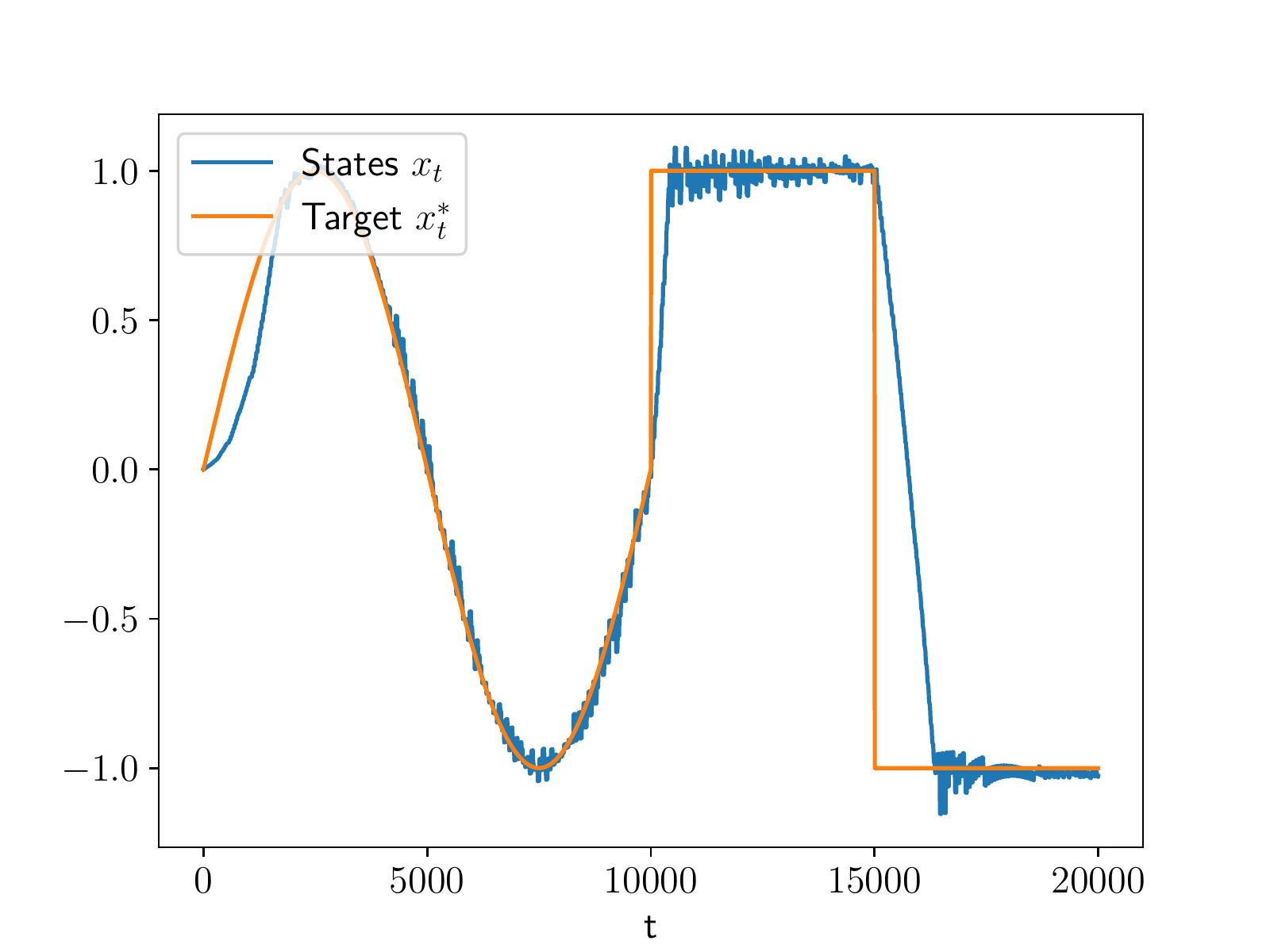}
  \caption{$x^*_t$ is a composite signal.}
  \label{figure:exp5_fig4}
\end{subfigure}
\caption{Testing the controller (Algorithm~\ref{algorithm:controller}) in $\R$.}
\label{figure:control}
\end{figure}

Empirical results for our controller are presented in Figure~\ref{figure:control}. Compared to the tracking results in online learning (Figure~\ref{figure:alg7shifted}), we shoot for lower bandwidth since the dynamics introduce additional fluctuations. Figure~\ref{figure:exp5_fig1} considers a fixed target $x^*_t=1$. In Figure~\ref{figure:exp5_fig2}, $x^*_t$ is a square wave with period 12000. In Figure~\ref{figure:exp5_fig3}, $x^*_t$ is a sinusoidal wave with period 10000. Finally, we consider a composite target in Figure~\ref{figure:exp5_fig4}:

\begin{equation*}
x^*_t=\begin{cases}
\sin(\pi t/5000), &\textrm{if~}t<T/2,\\
1, &\textrm{if~}T/2\leq t< 3T/4,\\
-1, &\textrm{otherwise}. 
\end{cases}
\end{equation*}

\paragraph{Two-dimensional control}We also test the controller in a two-dimensional state space. Here, $d_x=d_u=2$,
\begin{equation*}
A_t=\begin{bmatrix}
0.55 & 0.3 \\
0 & 0.55
\end{bmatrix}+I_2\cdot0.05\cos(\pi t/10000),
\end{equation*}
\begin{equation*}
B_t=I_2\cdot\spar{0.95+0.05\cos(\pi t/5000)},
\end{equation*}
where $I_2$ is the two-dimensional identity matrix. Same as before, $U=5$, $\kappa=1$ and $\gamma=0.4$. 

The loss functions are still $l^*_t(x,u)=\norms{x-x^*_t}$, therefore $L^*=1$. For all $t\in\N$, the disturbances are
\begin{equation*}
w_t=0.05\sin(\pi t/4000)\cdot[1,-1]^\top.
\end{equation*}

Same as before, $H=8$, and we choose $\eps_0=0.2$. The task is to track a circular reference trajectory (in an adversarial manner): 
\begin{equation*}
x^*_t=\begin{cases}
[t/4000,0]^\top, &\textrm{if~}t\leq 4000,\\
[\cos(\pi(t-4000)/8000),\sin(\pi(t-4000)/8000)]^\top, &\textrm{if~}4000< t\leq 20000.
\end{cases}
\end{equation*}

\begin{figure}[!h]
\vspace{-10pt}
    \centering
    \includegraphics[width=0.5\textwidth]{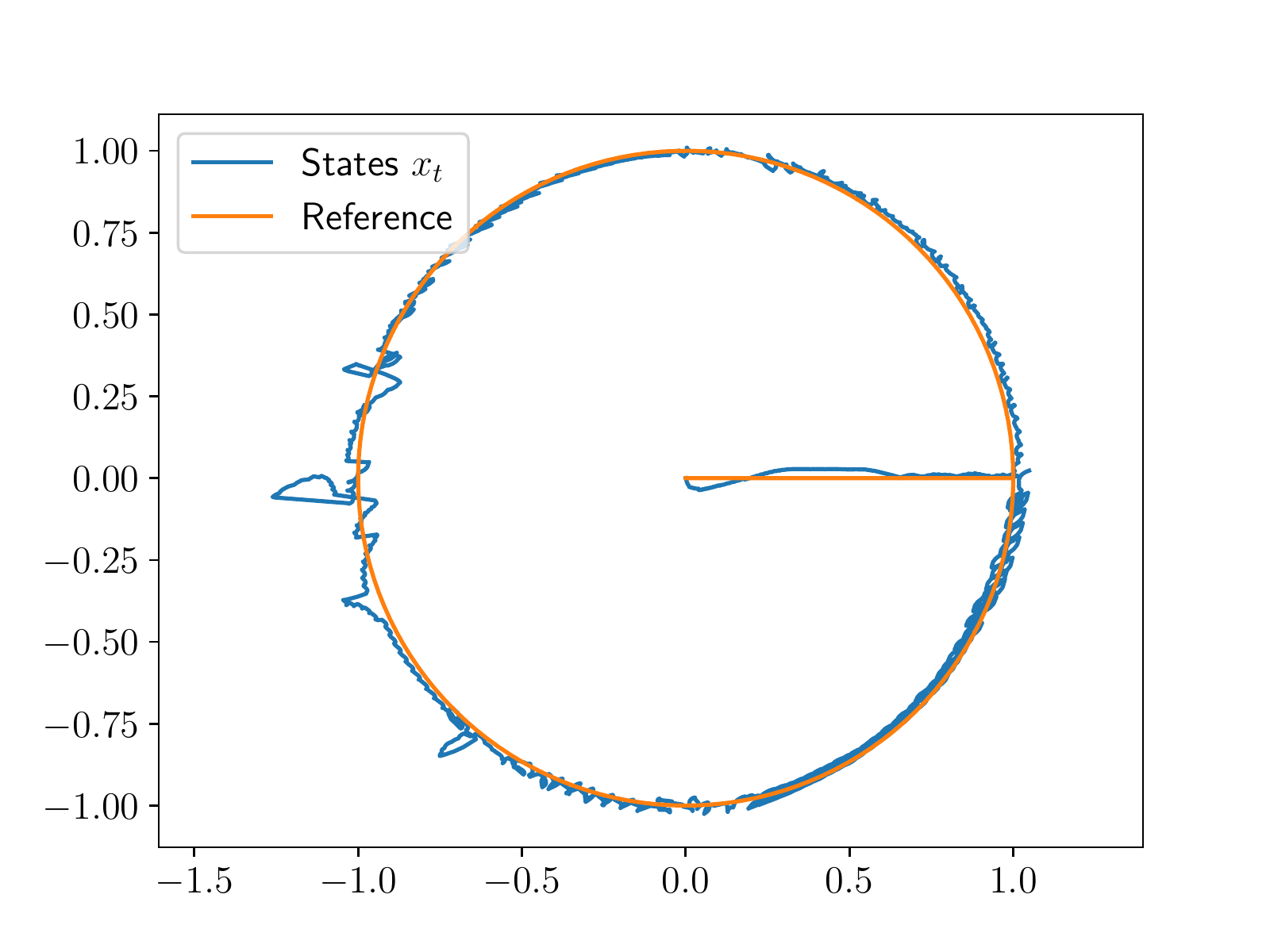}
    \caption{Testing the controller (Algorithm~\ref{algorithm:controller}) in $\R^2$. \label{figure:control_circle}}
\end{figure}
The result is shown in Figure~\ref{figure:control_circle}. Both experiments show that the proposed controller tracks the adversarial reference trajectory quite well.

\end{document}